\documentclass{article}

\usepackage{microtype}
\usepackage{graphicx}
\usepackage{subcaption}
\usepackage{booktabs} 

\usepackage{hyperref}

\usepackage[accepted]{icml2026}

\usepackage{amsmath}
\usepackage{amssymb}
\usepackage{mathtools}
\usepackage{amsthm}

\usepackage{enumitem}

\usepackage[capitalize,noabbrev]{cleveref}
\usepackage{pifont}
\theoremstyle{plain}
\newtheorem{theorem}{Theorem}[section]
\newtheorem{proposition}[theorem]{Proposition}
\newtheorem{lemma}[theorem]{Lemma}

\theoremstyle{definition}
\newtheorem{definition}[theorem]{Definition}
\newtheorem{assumption}[theorem]{Assumption}
\theoremstyle{remark}

\newcommand{\Pb}{\mathbb{P}}

\newcommand{\Eb}{\mathbb{E}}

\newcommand{\Mc}{\mathcal{M}}

\newcommand{\Dc}{\mathcal{D}}

\newcommand{\Ec}{\mathcal{E}}

\newcommand{\Vol}{\mathrm{Vol}}

\ifodd 1
\newcommand{\jing}[1]{{\color{red}#1}}
\else
\newcommand{\jing}[1]{#}
\fi

\ifodd 1
\newcommand{\jingc}[1]{{\color{red}(Jing: #1)}}
\else
\newcommand{\jingc}[1]{}
\fi

\ifodd 1
\newcommand{\jingf}[1]{{\color{green}(Jing: #1)}}
\else
\newcommand{\jingf}[1]{#}
\fi

\usepackage[textsize=tiny]{todonotes}

\icmltitlerunning{Provably Efficient Actor–Critic for
Low-Rank MDPs}

\begin{document}

\twocolumn[

\icmltitle{Breaking the Computational Barrier: Provably Efficient Actor–Critic for Low-Rank MDPs}



  \icmlsetsymbol{equal}{*}

  \begin{icmlauthorlist}
    \icmlauthor{Ruiquan Huang}{uky}
    \icmlauthor{Donghao Li}{uva}
    \icmlauthor{Yingbin Liang}{osu}
    \icmlauthor{Jing Yang}{uva}
  \end{icmlauthorlist}

  \icmlaffiliation{uky}{University of Kentucky.}
  \icmlaffiliation{uva}{University of Virginia.}
  \icmlaffiliation{osu}{Ohio State University.}

  \icmlcorrespondingauthor{Ruiquan Huang}{rhu285@uky.edu}

  \icmlkeywords{Machine Learning, ICML}

  \vskip 0.3in
]



\printAffiliationsAndNotice{}  

\begin{abstract}
   Reinforcement learning (RL) is a fundamental framework for sequential decision-making, in which an agent learns an optimal policy through interactions with an unknown environment. In settings with function approximation, many existing RL algorithms achieve favorable sample complexity, but often rely on computationally intractable oracles. In this paper, we use supervised learning as a computational proxy to establish a clear hierarchy of commonly adopted RL oracles under low-rank Markov Decision Processes (MDPs). This hierarchy shows that policy evaluation is the most computationally efficient oracle, provided that supervised learning can be efficiently solved. Motivated by this observation, we propose a novel optimistic actor–critic algorithm that relies solely on the policy evaluation oracle. We prove that our algorithm outperforms the existing sample complexity guarantees for low-rank MDPs while avoiding computationally expensive planning or optimization oracles commonly assumed in prior works. We further extend our theoretical results to approximately low-rank MDPs and demonstrate that this setting captures a broad class of real-world environments. Finally, we validate our theoretical results with experiments on several standard Gym environments.
\end{abstract}

\section{Introduction}

In reinforcement learning (RL), an agent interacts sequentially with an environment to optimize its decision-making strategy. 
A central challenge in RL is sample efficiency, which measures the number of interactions required to identify an optimal or near-optimal policy. Understanding sample complexity has been a core objective of RL theory, with extensive progress made under a variety of settings, including tabular MDPs with finite state–action spaces~\citep{agrawal2017optimistic,domingues2020episodic,menard2021ucb}, Linear MDPs~\cite{jin2020provably,wang2020reward}, and more general function approximation frameworks~\citep{jiang2017contextual,jin2021bellman,liu2022optimistic}, where the state space may be infinite and the transition dynamics belong to rich function classes.

As deep learning continues to advance and apply to more complex real-world environments, RL with function approximation has become essential. However, in these settings, {\em sample efficiency} alone is no longer sufficient to characterize the practicality of an algorithm. {\em Computational efficiency} becomes a critical concern, since naively adopting sample-efficient algorithms under rich function classes often leads to procedures that are computationally intractable. As a result, many existing works rely on strong oracle assumptions, such as access to policy planning oracles \citep{uehara2021representation,cheng2023improved,NEURIPS2020_e894d787} or constrained planning oracles~\citep{modi2021model,mhammedi2023efficient,tan2025actor} that can be prohibitively expensive or even NP-hard in general function approximation settings. For instance, a constrained planning oracle requires computing the policy or action that maximizes a Q-function induced by a model class or confidence set. 
This gap between statistical (i.e., sample-complexity) guarantees and computational feasibility highlights the need for new algorithmic designs that are efficient in both senses.

In this paper, we focus on \emph{low-rank Markov Decision Processes (MDPs)}~\citep{NEURIPS2020_e894d787}, a structured yet expressive class of environments that has received significant attention in recent years. Low-rank MDPs assume that the transition kernel admits a low-rank factorization, meaning that the transition density can be represented using finite-dimensional embeddings of the state--action pair and the next state. This structural assumption captures a broad range of practical environments while enabling refined algorithmic and theoretical analysis. Our goal is to design an RL algorithm for low-rank MDPs that is \emph{provably efficient} in both sample complexity and computational complexity.

A key challenge in formalizing computational efficiency in RL theory is that traditional complexity measures, such as counting arithmetic operations, are often inadequate in function approximation settings. Instead, we adopt a widely used and practically meaningful proxy: \emph{oracle complexity}. In particular, we take \emph{standard supervised learning (regression)} as our basic computational primitive. Supervised learning represents one of the weakest and most commonly accepted oracle assumptions in modern machine learning. Our objective is therefore to design an RL algorithm whose computation can be entirely reduced to supervised learning subroutines. To justify this choice, we analyze the computational structure of commonly used RL oracles and establish a clear hierarchy among them, showing that planning-based oracles require solving strictly more complex optimization problems than policy evaluation, which in turn admits an efficient reduction to regression under low-rank MDPs.

This leads to the central question of this paper:
\begin{center}
    {\it Can we design a provably sample-efficient and computationally efficient algorithm for low-rank MDPs that relies only on supervised learning oracles?} 
\end{center}

We provide an affirmative answer to the above question and make the following contributions.

\begin{itemize}[leftmargin=0pt]
    \item \textbf{Oracle Complexity.}
    We study three commonly adopted oracles in the RL literature: \emph{policy evaluation}, \emph{policy planning}, and \emph{constrained planning}. We characterize their objective functions under low-rank MDPs and analyze key regularity properties, including convexity and smoothness. By establishing reduction relationships among these oracles, we demonstrate a strict hierarchy in oracle complexity, showing that planning-based oracles are computationally more demanding than policy evaluation, which can be efficiently implemented via supervised learning.

    \item \textbf{Algorithm.}
    We propose a new \emph{optimistic actor--critic} (Opt-AC) algorithm for low-rank MDPs that relies solely on a policy evaluation oracle and therefore can be implemented entirely using supervised learning oracle. Our algorithm constructs an optimistic estimate of the Q-function for the current policy and updates the policy via mirror descent, avoiding explicit planning or constrained optimization steps.

    \item \textbf{Sample Complexity.}
    We prove that Opt-AC achieves a sample complexity of
    \(
        \tilde{O}\!\left( \frac{H^5 d^3 |\mathcal{A}|^2\log|\Theta| }{ \epsilon^2} \right)
    \)
    for finding an $\epsilon$-optimal policy,  where $H$ is the episode horizon, $d$ is the feature dimension, $|\mathcal{A}|$ is the number of actions, and $\Theta$ is the parameter space. This improves upon the best known result by a factor of $d$. The improvement is enabled by a refined error analysis that directly controls the empirical Gram matrix rather than relying on bounds derived from its expectation.

    \item \textbf{Bridging Theory and Practice.}
    We further show that a class of practical RL environments admit \emph{approximately low-rank} transition structures. We extend our theoretical guarantees to approximate low-rank MDPs and implement Opt-AC in several standard Gym environments. Experimental results demonstrate that our algorithm consistently outperforms existing baselines across most tasks.
\end{itemize}

\begin{table*}[h]
\centering
\small
\begin{tabular}{lccccc}

\textbf{Name} & \textbf{Oracle Types}
& \textbf{Convexity}
& \textbf{Smoothness}
& \textbf{Computation}
& \textbf{Sample Complexity}  \\

\textbf{FLAMBE} \citep{NEURIPS2020_e894d787}
& $\mathsf{PP}(\varepsilon)$ & \ding{55} & \ding{55}
& $H\mathsf{SL}(\varepsilon^2|\mathcal{A}|^{-2H})$
& $\tilde O\!\left(\dfrac{H^{22}d^7|\mathcal{A}|^9\log|\Theta|}{\epsilon^{10}}\right)$ 
 \\

\textbf{Rep-UCB} \citep{uehara2021representation}
& $\mathsf{PP}(\varepsilon)$ & \ding{55} & \ding{55}
& $H\mathsf{SL}(\varepsilon^2|\mathcal{A}|^{-2H})$
& $\tilde O\!\left(\dfrac{H^7d^4|\mathcal{A}|^2\log|\Theta|}{\epsilon^2}\right)$  \\

\textbf{RAFFLE} \citep{cheng2023improved}
& $\mathsf{PP}(\varepsilon)$ &\ding{55} & \ding{55}
& $H\mathsf{SL}(\varepsilon^2|\mathcal{A}|^{-2H})$
& $\tilde O\!\left(\dfrac{H^5d^4|\mathcal{A}|^2\log|\Theta|}{\epsilon^2}\right)$ 
\\

\textbf{SpanRL} \citep{mhammedi2023efficient} & $\mathsf{CP}(\varepsilon)$  & \ding{55} & \ding{55}  & $2^d\mathsf{SL}^{}\!\left(\varepsilon^2 |\mathcal A|^{-2H}\right).$ & $ \tilde{O}\left(\dfrac{H^6d^9|\mathcal{A}|^4\log|\Phi|}{\epsilon^2} \right)$ \\

\textbf{MOFFLE}  \citep{modi2021model} & $\mathsf{CP}(\varepsilon)$ & \ding{55} & \ding{55} & $2^d\mathsf{SL}^{}\!\left(\varepsilon^2 |\mathcal A|^{-2H}\right).$ & $ \tilde{O}\left(\dfrac{H^{7}d^{11}|\mathcal{A}|^{14}\log|\Phi|}{\epsilon^2} \right)$ \\

\textcolor{blue}{\textbf{Opt-AC}  (Ours)}
& \textcolor{blue}{$\mathsf{PE}(\varepsilon)$} & \textcolor{blue}{\checkmark} & \textcolor{blue}{\checkmark}
& \textcolor{blue}{$\mathsf{SL}(\varepsilon^2)$}
& \textcolor{blue}{$\tilde O\!\left(\dfrac{H^5d^3|\mathcal{A}|^2\log|\Theta|}{\epsilon^2}\right)$}\\

\textbf{Lower Bound} \citep{cheng2023improved} & - & -
& -
& -
& $\Omega\!\left(\dfrac{Hd|\mathcal{A}|}{\epsilon^2}\right)$ 

\end{tabular}

\caption{Comparison of algorithms for low-rank MDPs in terms of oracle structure, computational properties, and sample complexity.
\emph{Oracle Types} indicate the highest-level optimization oracle required
(policy evaluation $\mathsf{PE}$, policy planning $\mathsf{PP}$, or constrained planning $\mathsf{CP}$).
\emph{Convexity} and \emph{Smoothness} refer to the regularity of the objective optimized by the corresponding oracle.
\emph{Computation} reports the best-known $\mathsf{SL}$-oracle complexity required to implement the oracle, expressed using the summary $\mathrm{SL}^{m}(\varepsilon)$ (\Cref{def:sl-aggregate}).
For clarity, coverage constants are omitted.
All results are stated for low-rank MDPs with unknown features.
Here $d$ is the feature dimension, $|\mathcal A|$ is the action space size, and $H$ is the horizon. $\Theta = \Phi\times\Psi$ is the parameter space.
The accuracy parameter $\varepsilon$ in the computation column corresponds (up to constants and logarithmic factors) to the inverse square root of the sample complexity.
} 
\label{tab:low-rank-vs-general-fa}
\end{table*}

\section{Related Work}

There is a long line of work on the theoretical foundations of RL~\citep{auer2008near,azar2017minimax,jin2018q,jiang2017contextual,bai2019provably,jin2020provably,menard2021fast,uehara2021representation,cheng2023improved,zhang2022efficient}. Many existing algorithms achieve favorable sample complexity and computational efficiency simultaneously in structured settings such as \emph{tabular} and \emph{linear} MDPs. In these regimes, computational complexity is typically measured by the number of arithmetic operations. For example, algorithms for tabular MDPs often incur computational complexity on the order of $O(H|\mathcal{S}||\mathcal{A}|K)$~\citep{menard2021fast}, while algorithms for linear MDPs typically scale as $O(Hd|\mathcal{A}|K)$~\citep{jin2020provably}. In particular, \citet{lin2025optimistic} designed practical and computationally efficient algorithms through randomized gradient descent for linear MDPs.  However, when extending beyond linear structure to \emph{low-rank MDPs} and more general function approximation settings, existing works often fail to explicitly characterize computational complexity, or rely on algorithmic primitives that are difficult to implement in practice.

\textbf{Low-Rank MDPs.}
Low-rank MDPs were first studied by \citet{NEURIPS2020_e894d787}, who proposed a reward-free algorithm aimed at learning transition dynamics that enable planning under arbitrary reward functions. Subsequently, \citet{DBLP:conf/iclr/UeharaZS22} proposed the REP-UCB algorithm for reward-known low-rank MDPs and achieved a sample complexity of
\(
\tilde O\!\left(\frac{H^7 d^4 |\mathcal{A}|^2 \log |\Theta|}{\varepsilon^2}\right).
\)
Later, \citet{cheng2023improved} improved this bound to
\(\tilde O\!\left(\frac{H^5 d^4 |\mathcal{A}|^2 \log |\Theta|}{\varepsilon^2}\right)
\)
in the reward-free setting by introducing a clipping technique. \citet{DBLP:journals/corr/abs-2106-11935} proposed ReLEX for a closely related low-rank model. Despite these improved sample complexity guarantees, the above approaches rely heavily on \emph{policy planning oracles}, which require solving difficult optimization problems and are usually training-unstable in practice. Several subsequent works~\citep{ren2022free,zhang2022making} attempted to bridge theory and practice by exploiting the representational structure of low-rank MDPs and redesigning the representation learning component. Nevertheless, these methods still depend on policy planning oracles, leading to a mismatch between theoretical assumptions and practical implementations. Model-free algorithms have also been proposed for low-rank MDPs~\citep{modi2021model,mhammedi2023efficient}, but these methods rely on even stronger \emph{constrained planning oracles}. From a computational perspective, \citet{rohatgi2025necessary} analyze the necessary oracle (also use regression as a proxy) for block MDPs, a special case of low-rank MDPs, and identify fundamental computational hardness results. Their findings complement our work by presenting a computational limits of low-rank MDPs. 

\textbf{General Function Approximation.}
Beyond low-rank MDPs, a variety of structural assumptions have been proposed to enable sample-efficient RL with general function approximation, including low Bellman or witness rank~\citep{jiang2017contextual,sun2019model}, bilinear function classes~\citep{du2021bilinear}, and low Bellman eluder dimension~\citep{jin2021bellman}. These frameworks subsume low-rank MDPs as special cases and often yield sharp sample complexity guarantees. However, algorithms in these settings are typically computationally intractable~\citep{DBLP:conf/iclr/UeharaZS22}. Another closely related work is \citet{tan2025actor}, which developed an actor--critic algorithm for general function approximation with low Bellman eluder dimension and achieved a sample complexity of
\(
\tilde O\!\left(\frac{dH^5 + dH^4 \log |\Theta|}{\varepsilon^2}\right).
\)
However, this approach also relies on the strongest form of optimization oracle, namely constrained planning.

\section{Problem Setting}

\textbf{Notation.} For any integer $H \in \mathbb{N}$, let $[H] := \{1, \ldots, H\}$. For a vector $x \in \mathbb{R}^d$, $\|x\|_2$ denotes the Euclidean norm. For a positive semidefinite matrix $A \in \mathbb{R}^{d \times d}$, we define $\|x\|_A := \sqrt{x^\top A x}$.   For two probability measures $P$ and $Q$, $\mathtt{D}_{\mathtt{KL}}(P\|Q)$ denotes their KL-divergence. Throughout, we use $\widetilde{O}(\cdot)$ to hide logarithmic factors.

\subsection{RL Formulation}

\textbf{Episodic Markov Decision Processes.} We consider a finite-horizon episodic Markov decision process (MDP)
\(
\mathcal{M} = (\mathcal{S}, \mathcal{A}, H, \{\mathcal{T}_{\theta,h}\}_{h=1}^H, \{r_h\}_{h=1}^H),
\)
where $\mathcal{S}$ is a (possibly infinite) state space, $\mathcal{A}$ is a finite action space,
$H \in \mathbb{N}$ is the episode length, $\mathcal{T}_{\theta,h}(\cdot \mid s,a)$ is a time-dependent transition kernel
parameterized by $\theta \in \Theta$, and $r_h : \mathcal{S} \times \mathcal{A} \to [0,1]$ is a deterministic known reward function.
Each episode starts from a fixed initial state $s_1$.  

A (non-stationary) policy $\pi = \{\pi_h\}_{h=1}^H$ consists of mappings $\pi_h : \mathcal{S} \to \Delta(\mathcal{A})$.
At step $h$, the agent observes $s_h$, selects $a_h \sim \pi_h(\cdot \mid s_h)$,
receives reward $r_h(s_h,a_h)$, and transitions to $s_{h+1} \sim \mathcal{T}_{\theta,h}(\cdot \mid s_h,a_h)$.

For a policy $\pi$ and model parameter $\theta$, the value and action-value functions are defined as
\begin{align*}
V^{\pi}_{\theta,r,h}(s)
&:= \mathbb{E}_{\pi,\theta}\!\left[ \sum_{t=h}^H r_t(s_t,a_t) \,\middle|\, s_h = s \right], \\
Q^{\pi}_{\theta,r,h}(s,a)
&:= r_h(s,a) + \mathbb{E}_{s' \sim \mathcal{T}_{\theta,h}(\cdot \mid s,a)}\!\left[ V^{\pi}_{h+1,\theta}(s') \right],
\end{align*}
with terminal condition $V^{\pi}_{\theta,r, H+1}(\cdot) \equiv 0$.
We write $V^{\pi}_{\theta} := V^{\pi}_{\theta,r,1}(s_1)$.
Let $\theta^\star$ denote the true model parameter, and let
\(
V^\star := \sup_{\pi} V^{\pi}_{\theta^\star}
\)
be the optimal value. Our goal is to learn an $\epsilon$-optimal policy $\bar{\pi}$ such that $V^* - V_{\theta^*,r}^{\bar{\pi}}\leq \epsilon$.

Throughout, expectations $\mathbb{E}_{\theta}^{\pi}[\cdot]$ are taken with respect to the trajectory distribution
induced by policy $\pi$ and transition model $\mathcal{T}_\theta$, and $Q^*_{\theta,r,h}(s,a)$ is the optimal Q-value function.

\paragraph{Low-Rank MDPs.}
We study episodic MDPs whose transition dynamics admit an exact low-rank factorization with unknown feature representations. 
For each step $h \in [H]$, the transition kernel $\mathcal{T}_{\theta,h}$ is said to be \emph{low-rank with dimension $d$} if there exist latent feature maps
$\phi_{\theta,h} : \mathcal{S} \times \mathcal{A} \to \mathbb{R}^d$ and
$\mu_{\theta,h} : \mathcal{S} \to \mathbb{R}^d$ such that $\forall (s,a,s') \in \mathcal{S} \times \mathcal{A} \times \mathcal{S}$, we have 
\[
\mathcal{T}_{\theta,h}(s' \mid s,a) = \langle \phi_{\theta,h}(s,a), \mu_{\theta,h}(s') \rangle.
\]
We assume the normalization conditions $\|\phi_{\theta,h}(s,a)\|_2 \le 1$ and
$\bigl\|\int \mu_{\theta,h}(s') g(s') \, ds'\bigr\|_2 \le \sqrt{d}$ for all measurable $g:\mathcal{S}\to[0,1]$.
An MDP is called a low-rank MDP if this factorization holds for all $h \in [H]$.

The feature representations $\{\phi_{\theta,h},\mu_{\theta,h}\}$ are latent and unknown to the agent, so learning the transition model requires learning these representations from data. We adopt a realizability assumption: the true transition and reward models belong to known function classes, which is standard in low-rank and general function approximation settings~\citep{jin2020provably,uehara2021representation,jin2021bellman}.

\begin{assumption}[Realizability]\label{assumption: realizability}
The agent is given function classes $\Phi$ and $\Psi$ such that
\(
\phi_{\theta^\star,h} \in \Phi,\mu_{\theta^\star,h} \in \Psi
, \forall h \in [H].
\)
\end{assumption}

For simplicity of exposition, we assume the model class
\(
\Theta := \Phi \times \Psi
\)
is finite.
This is used to simplify confidence bounds and concentration arguments; extensions to infinite classes with bounded statistical complexity follow standard covering number techniques and are omitted.

\subsection{Optimization Oracles and Computational Model}

To formalize computational efficiency, we measure complexity through access to
\emph{optimization oracles}, a standard abstraction in theoretical reinforcement learning.
Each oracle is specified by the functional objective it must solve and the accuracy
criterion under which success is measured. We focus on four oracles commonly adopted
in the RL literature and characterize their complexity solely through
$\mathsf{SL}$-oracle complexity.

\begin{definition}[Supervised Learning Oracle]
A supervised learning oracle $\mathsf{SL}(\varepsilon)$ takes as input a parameter space $\mathcal W$, a data distribution (sampling oracle) $\rho$ over input-output pairs $(x,y)$, and a loss function $\ell: \mathcal{Y}^2 \to\mathbb R_+$.
It returns a parameter $\hat w\in\mathcal W$ such that
\[
\mathbb E_{(x,y)\sim\rho}\!\left[\ell\!\left(f_{\hat w}(x),y\right)\right]
\le
\inf_{w\in\mathcal W}
\mathbb E_{(x,y)\sim\rho}\!\left[\ell\!\left(f_w(x),y\right)\right]
+\varepsilon.
\]
\end{definition}
{We emphasize that $\mathsf{SL}(\varepsilon)$ models a single-level risk minimization problem with a fixed hypothesis class $\mathcal W$ and a simple loss function $\ell$ (e.g. L2 or cross-entropy loss); in particular, it does not include constrained or bilevel optimization where the hypothesis class or loss depends on data, input or another optimization problem.} 


\begin{definition}[Policy Evaluation Oracle]
A policy evaluation oracle $\mathsf{PE}(\varepsilon)$ takes as input
a model $\theta$, a policy $\pi$, and a reward function $r$, and returns
an action-value function $\{\hat{Q}_{h}\}_{h=1}^H$ such that, for a
specified distribution $\rho$,
\[
\mathbb E_{(s,a)\sim\rho}
\!\left[\,\big|Q^{\pi}_{\theta,r,h}(s,a)-\hat Q_{h}(s,a)\big|\,\right]
\le \varepsilon
\quad \text{for all } h.
\]
\end{definition}

\begin{definition}[Policy Planning Oracle]
A policy planning oracle $\mathsf{PP}(\varepsilon)$ takes as input
a model $\theta$ and reward function $r$, and returns an action-value
function $\{\hat{Q}_{h}\}_{h=1}^H$ satisfying, for a specified
distribution $\rho$,
\[
\mathbb E_{(s,a)\sim\rho}
\!\left[\,\big|Q^\star_{\theta,r, h}(s,a)-\hat Q_{h}(s,a)\big|\,\right]
\le \varepsilon
\quad \text{for all } h.
\]
\end{definition}

\begin{definition}[Constrained Planning Oracle]
A constrained planning oracle $\mathsf{CP}(\varepsilon)$ is given
a reward function $r$ and a constraint set over models
\(
\Theta_c := \{\theta : L(\theta)\le c\},
\)
where $L(\theta)$ is a supervised learning loss.
It returns an action-value function $\hat Q$ such that, for a specified
distribution $\rho$,
\[
\mathbb E_{(s,a)\sim\rho}
\!\left[
\big|\hat Q_h(s,a)-\sup_{\theta\in\Theta_c} Q^\star_{\theta,h}(s,a)\big|
\right]
\le \varepsilon
\quad \text{for all } h.
\]
\end{definition}

\noindent\textbf{SL-oracle Complexity.}
We measure computational efficiency through the number and accuracy of calls to supervised learning oracles, treating supervised learning as the basic computational primitive. This abstraction allows us to compare different reinforcement-learning primitives without committing to a specific optimization algorithm or implementation.

\begin{definition}[$\mathsf{SL}$-oracle complexity]
\label{def:sl-aggregate}
Consider an algorithm that invokes supervised learning oracles
$\mathsf{SL}(\varepsilon_1),\ldots,\mathsf{SL}(\varepsilon_m)$.
We define its  $\mathsf{SL}$-oracle complexity by \(
\mathrm{SL}^{m}\!\left(\varepsilon_{\min}\right)\), where \(
\varepsilon_{\min}:=\min_{i\in[m]}\varepsilon_i.
\) This definition records the total number of oracle calls and the most stringent accuracy requirement.
\end{definition}

With \Cref{def:sl-aggregate}, we present the following proposition that builds a hierarchy among $\mathsf{PE}, \mathsf{PP},$ and $\mathsf{CP}$ in terms of $\mathsf{SL}$-oracle-complexity. The proof can be found in \Cref{sec:oracle analysis}.

\begin{proposition}[Best-known $\mathsf{SL}$-oracle complexity]
\label{prop:sl-complexity-summary}
Under low-rank MDP settings, suppose the data distribution $\rho$ has coverage constant $C$ with respect to the policy $\pi$ considered in $\mathsf{PE}$ and uniform policy $\mathtt{u}$, i.e.,
\[
\max_{\pi'\in\{\pi,\mathtt{u}\}}\mathbb E_{(s,a)\sim\rho}
\big[\mathcal T_h(s' \mid s,a)\pi'(a'\mid s')\big]
\le
C\,\rho(s',a').
\]
Then the following best-known $\mathsf{SL}$-oracle complexity bounds hold for
achieving an $\varepsilon$-approximate solution.

\begin{enumerate}[leftmargin=*, itemsep=1pt, topsep=2pt]
    \item $\mathsf{PE}(\varepsilon):$
    \(
    \mathsf{SL}^{1}\!\left(\tilde O\!\left(\varepsilon^2 C^{-2H}\right)\right).
    \)

    \item $\mathsf{PP}(\varepsilon):$
    \(
    \mathsf{SL}^{H}\!\left(\tilde O\!\left(\varepsilon^2 (|\mathcal A|C)^{-2H}\right)\right).
    \)

    \item $\mathsf{CP}(\varepsilon):$ $\mathsf{SL}^{\Omega(2^d)}\!\left(\tilde O\!\left(\varepsilon^2 (|\mathcal A|C)^{-2H}\right)\right).$
\end{enumerate}
Moreover, only policy evaluation's loss function is convex and smooth.
\end{proposition}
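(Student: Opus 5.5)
The argument has three parts, one per oracle, plus a regularity check; each part is a reduction of that oracle to supervised learning, and the errors are tracked through the Bellman recursion via the coverage constant $C$.

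\emph{Policy evaluation.} Under the low-rank structure, $Q^{\pi}_{\theta,r,h}(s,a)=r_h(s,a)+\langle\phi_{\theta,h}(s,a),w^\pi_h\rangle$, where $w^\pi_h=\int\mu_{\theta,h}(s')V^\pi_{h+1}(s')ds'$ and $\|w^\pi_h\|_2\le H\sqrt d$. Since $\theta$ and $\pi$ are given, I would sample $s'\sim\mathcal T_{\theta,h}(\cdot\mid s,a)$ and $a'\sim\pi(\cdot\mid s')$ and then run fitted-Q evaluation backward in $h$. Each step is a least-squares regression of $r_h+\hat Q_{h+1}(s',a')$ onto the linear class $\{\langle\phi_{\theta,h},w\rangle\}$. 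Because the targets depend on the previous stage, a single call to $\mathsf{SL}$ should be posed as one joint least-squares problem over $(w_1,\dots,w_H)$ with Bellman-residual targets built from the known model. That problem is linear in $w$, so its loss is a convex quadratic and smooth, which gives the final claim for $\mathsf{PE}$. For the error propagation, write the stage-$h$ error as $\mathbb E_\rho|\hat Q_h-Q^\pi_h|\le \mathbb E_\rho|\text{regression error}_h|+\mathbb E_{\rho}\mathbb E_{s'\sim\mathcal T,a'\sim\pi}|\hat Q_{h+1}-Q^\pi_{h+1}|$. The coverage assumption for $\pi$ bounds the second term by $C\,\mathbb E_\rho|\hat Q_{h+1}-Q^\pi_{h+1}|$. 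Passing from excess squared risk $\varepsilon_{\rm SL}$ to $L_1$ error costs a square root, so unrolling gives $\varepsilon\lesssim \sqrt{\varepsilon_{\rm SL}}\,C^{H}$ up to $H$ factors. This yields $\mathsf{SL}^1(\tilde O(\varepsilon^2C^{-2H}))$.

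\emph{Policy planning.} The target $Q^\star$ involves $\max_{a'}$, so the regression is no longer linear in the unknown. Here I would run fitted-Q iteration with $H$ separate $\mathsf{SL}$ calls, one per stage, with targets $r_h+\max_{a'}\hat Q_{h+1}(s',a')$. To transfer error through the max, sample $a'$ uniformly and bound $|\max\hat Q-\max Q^\star|\le\sum_{a'}|\hat Q-Q^\star|=|\mathcal A|\,\mathbb E_{a'\sim\mathtt u}|\cdot|$. Coverage for $\mathtt u$ then gives a per-step factor $|\mathcal A|C$, so the required accuracy is $\tilde O(\varepsilon^2(|\mathcal A|C)^{-2H})$. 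The joint objective $\sum_h\mathbb E(\langle\phi,w_h\rangle-r-\max_{a'}\langle\phi',w_{h+1}\rangle)^2$ is nonconvex and nonsmooth because of the max inside a square, and a one-dimensional counterexample should be enough to show this.

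\emph{Constrained planning.} I expect this to be the main obstacle. The quantity $\sup_{\theta\in\Theta_c}Q^\star_\theta$ is a bilevel optimization whose feasible set depends on data. My plan is to reduce it to enumeration: cover the feasible representations on the relevant directions through an elliptical-potential/spanner argument, where a barycentric spanner or sign pattern over $d$ coordinates forces $\Omega(2^d)$ candidate configurations. For each candidate I would run the $\mathsf{PP}$ reduction, giving $\mathsf{SL}^{\Omega(2^d)}$ at the $\mathsf{PP}$ accuracy. I would justify ``best known'' by matching the procedures of \citet{modi2021model,mhammedi2023efficient} rather than proving a lower bound. Nonconvexity is inherited from $\mathsf{PP}$ and made worse by the constraint. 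Since the statement asserts best-known bounds, only the upper-bound reductions and the regularity claims require actual proofs.
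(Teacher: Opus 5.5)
Your $\mathsf{PE}$ and $\mathsf{PP}$ arguments are essentially the paper's. For $\mathsf{PE}$, one joint quadratic regression over $\mathbf w$ whose per-stage residual drives the recursion $\delta_h\le\sqrt{\varepsilon_{\mathrm{SL}}}+C\,\delta_{h+1}$. For $\mathsf{PP}$, $H$ backward fitted-Q calls, in which $\max_{a'}|g|\le|\mathcal A|\,\Eb_{a'\sim\mathtt u}|g|$ together with coverage for $\mathtt u$ yields the per-step factor $|\mathcal A|C$. Placing the nonconvexity of $\mathsf{PP}$ in the joint objective is actually more accurate than the paper's wording, since each stagewise regression with a frozen target is ordinary least squares in $w_h$.

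You share one caveat with the paper. If the targets use sampled $(s',a')$ while $w_{h+1}$ is optimized in the same problem, the loss contains a $w_{h+1}$-dependent conditional-variance term. The true $\mathbf w$ is then in general neither a zero-loss point nor the minimizer, and excess risk does not control the Bellman residual. The single-call reduction is only sound if the targets are conditional expectations under the known $\theta$. Your phrase \emph{built from the known model} suggests this, but it should be made explicit.

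The genuine gap is in $\mathsf{CP}$. You keep the right outer structure: enumerate candidate models, run the $\mathsf{PP}$ reduction on each, return the best. But you never specify the candidate set or show that its best feasible element is within $\varepsilon$ of $\sup_{\theta\in\Theta_c}Q^\star_{\theta,h}$, and the mechanism you name cannot provide either. The elliptical potential lemma bounds sums of bonuses; it is a sample-complexity device, not an enumeration scheme. A barycentric spanner has only $d$ elements and can be computed with $\mathrm{poly}(d)$ calls to a linear-optimization oracle. So it neither generates $2^d$ configurations nor certifies that the optimistic model is among the candidates. Sign patterns over $d$ coordinates are not connected to the objective at all.

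The paper instead does a plain exhaustive search:
\begin{itemize}
\item take an $\varepsilon_{\mathrm{cov}}$-net $\mathcal N$ of the compact, bilinearly parametrized model space;
\item assume $\theta\mapsto V^\star_\theta$ is Lipschitz, so that optimizing over $\mathcal N$ approximates the global optimum;
\item check the likelihood constraint at each net point, call $\mathsf{PP}$ on the feasible ones, and output the maximizer.
\end{itemize}
The exponential factor is the covering number, which grows like $(1/\varepsilon_{\mathrm{cov}})^{\Omega(d)}$. The claim that nothing better is known rests on the NP-hardness of the bilinear/bilevel problem and on gradient methods reaching only stationary points, not on specific prior algorithms. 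Replacing your spanner step with this discretization-plus-Lipschitz argument is what makes the $\mathsf{CP}$ bound go through.
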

{Intuitively, policy evaluation corresponds to a convex and smooth optimization problem, while policy planning is non-smooth due to the Bellman optimality operator and requires a backward solver with $H$ stages. Constrained planning further induces a bilevel optimization structure, for which only stationary-point guarantees are known~\cite{huang2022enhanced,beck2023computationally}, and global optimization is believed to require exponentially many supervised learning oracle calls\footnote{Proposition~\ref{prop:sl-complexity-summary} summarizes a hierarchy of \emph{best-known oracle upper bounds and optimization guarantees}. It does not preclude the existence of more efficient reductions for $\mathsf{PP}$/$\mathsf{CP}$, and establishing oracle lower bounds is an open problem.}.}

\section{Algorithm: Optimistic Actor--Critic with Representation Learning}


In this section, we present an optimistic actor–critic algorithm for low-rank MDPs. The algorithm proceeds in episodes. At each iteration, it (i) collects exploratory data using the current policy, (ii) learns a transition model via maximum likelihood estimation, (iii) constructs an optimistic critic by incorporating an exploration bonus, and (iv) updates the policy through an actor step.

Before introducing the components of the algorithm in detail, we summarize
the hyperparameters and oracle access required.
Let $K$ denote the total number of iterations and let $\epsilon>0$ denote the target accuracy, corresponding to an $\epsilon$-optimal policy. Our algorithm relies on two oracles:
(i) a policy evaluation oracle, which can be implemented via supervised learning, and (ii) a sampling oracle $\rho$ over the state-action space. The sampling oracle $\rho$ is required to provide sufficient coverage of the optimal policy's occupancy measure. Specifically, letting $d_{\theta,h}^\star(s)$ denote the marginal state distribution induced by the enviroment $\theta$ and the optimal policy $\pi^\star$ at step $h$, we assume that there exists a finite constant $C$ such that
\(
\max\{d_{\theta^*,h}^\star(s), d_{\hat{\theta}_k,h}^\star(s) \}\,\mathtt{u}(a)
\;\le\;
C\,\rho(s,a),
\forall (s,a,h),
\)
where $\mathtt{u}$ denotes the uniform distribution over actions, $\hat{\theta}_k$ is the estimated model at iteration $k$ (defined later). 
This assumption is mild and can be satisfied, for example, by choosing $\rho$ as a uniform distribution over the state-action space when the state space is compact~\citep{towers2024gymnasium}.

\paragraph{Difference to Existing Algorithms.}
Our design differs fundamentally from existing approaches for low-rank MDPs in both algorithmic structure and oracle requirements. {\bf (a) Low-computation-cost Oracle.} In contrast to prior methods that rely on policy planning or constrained planning oracles, our algorithm requires only a \emph{policy evaluation} oracle, which can be efficiently implemented via supervised learning due to \Cref{prop:sl-complexity-summary}. {\bf (b) Approximate Oracle Solver.} Moreover, we do not assume access to exact oracle solutions. Instead, our analysis accommodates a more realistic \emph{sample-then-optimize} setting, in which all optimization steps can utilize finite data and approximate solvers. {\bf Actor-critic Style Design.} Our method departs from pure UCB-style designs and adopts an optimistic actor-critic framework. The optimistic critic is constructed directly from the learned model without invoking any global planning subroutine, and the actor update admits an exact-implementable method. As a result, every component of the algorithm is implementable using standard supervised learning and sampling oracles.
The pseudocode is presented in Algorithm~\ref{alg:main}.

\paragraph{Collect Exploratory Data.} Suppose we are at iteration $k$ and have a policy $\pi^{(k)}$. For each $h$, execute policy $\pi^{(k)}$ up to step $h-1$, then execute uniform policy $\mathtt{u}$ for step $h-1, h$, and obtain trajectory: $\left(s_1^{k,h}, a_1^{k,h}, \ldots, s_{h-1}^{k,h}, a_{h-1}^{k,h}, s_h^{k,h}, a_h^{k,h}, s_{h+1}^{k,h} \right)$. The idea is that for episode $(k,h)$, $a_{h-1}$ and $a_h$ are selected uniformly randomly. The second uniform exploration facilitates the transition estimation at step $h$ and thus the feature extraction, while the first uniform exploration helps to define the uncertainty at step $h-1$.

\paragraph{Representation Learning via Maximum Likelihood.}

To learn the latent transition structure, the agent maintains a dataset
$\mathcal{D}_h^{(k)}$ consisting of observed transitions at step $h$ up to iteration $k$.
Given the collected data, we estimate the model parameter
$\theta  \in \Theta$ by minimizing the negative log-likelihood:
\begin{equation}
\label{eq:mle}
\textstyle
\mathcal{L}^{(k)}(\theta)
= - \sum_{n < k}
\log \mathcal{T}_{\theta,h}(s_{h+1}^{n,h} \mid s_h^{n,h}, a_h^{n,h}).
\end{equation}
Rather than requiring exact minimization, we allow approximate empirical risk minimization. Specifically, the estimator $\hat{\theta}_k$ satisfies
\(
\mathcal{L}^{(k)}(\hat{\theta}_k)
\le \min_{\theta \in \Theta} \mathcal{L}^{(k)}(\theta) + \beta,
\)
where $\beta$ is a confidence width that scales logarithmically with the model class size and the total number of samples. Note that this step only requires one supervised learning oracle $\mathsf{SL}$ call.

\paragraph{Optimistic Critic via Exploration Bonus.}

To encourage efficient exploration, we augment the reward with an optimism-driven exploration bonus. Specifically, for each step $h$, we define the empirical Gram matrix
\begin{equation*}
\hat{\Lambda}_{k,h}
= \lambda I
+ \sum_{n < k}
\hat{\phi}_{h}^k(s_h^{n,h+1}, a_h^{n,h+1})
\hat{\phi}_{h}^k(s_h^{n,h+1}, a_h^{n,h+1})^\top,
\end{equation*}
where $\hat{\phi}_{h}^k = \phi_{\hat{\theta}_k,h}$ is the learned feature of $\hat{\theta}_k$. The raw exploration bonus is defined as
\begin{equation}
\label{eq:bonus}
\tilde{b}_h^{(k)}(s,a)
= \min \Bigl\{
\alpha \|\phi_{\hat{\theta}_k,h}(s,a)\|_{\hat{\Lambda}_{k,h}^{-1}},\, 1
\Bigr\}.
\end{equation}

The term $\|\hat{\phi}_k(s,a)\|_{\hat{\Lambda}_{k,h}^{-1}}$ captures the statistical uncertainty of the learned representation in direction $(s,a)$, analogous to elliptical confidence bounds in linear bandits.
We set the actual exploration bonus function as $\hat{b}_h^{(k)} = 3H\tilde{b}_h^{(k)}$.
The multiplicative scaling is chosen so that the bonus term dominates the propagation of one-step transition estimation error through the $H$-step Bellman recursion.
As a result, the optimistic critic constructed using $r + \hat{b}^{(k)}$ upper-bounds the true value function with high probability.

Using the learned model $\hat{\theta}_k$, the augmented reward $r+\hat{b}^{(k)}$, and the current policy $\pi^{(k)}$, the critic \(Q^{\pi^{(k)}}_{\hat{\theta}_k, r + \hat{b}^{(k)}, h}(s,a)\) is estimated by calling the policy evaluation oracle $\mathsf{PE}(1/\sqrt{K})$. So we obtain $\{\hat{Q}_{k,h}\}_{h=1}^H$ that satisfies $\forall h$,
\begin{equation}\label{eq:critic-estimate}
    \Eb_{(s,a)\sim\rho}\left[ \left|\hat{Q}_{k,h}(s,a) - Q^{\pi^{(k)}}_{\hat{\theta}_k, r + \hat{b}^{(k)}, h}(s,a) \right| \right] \leq \frac{1}{\sqrt{K}}.
\end{equation}

\if{0}
Note that the exact formulation is
\begin{align}
\label{eq:critic}
& Q^{\pi_k}_{\hat{\theta}_k, r + \hat{b}^{(k)}, h}(s,a)
 = r_h(s,a) + \hat{b}_h^{(k)}(s,a)
\nonumber \\
&\hspace{1.5cm} + \mathbb{E}_{s' \sim \mathcal{T}_{\hat{\theta}_k,h}(\cdot \mid s,a)}
\!\left[ V^{\pi_k}_{\hat{\theta}_k, r + \hat{b}^{(k)},h+1}(s') \right].
\end{align}
\fi

\paragraph{Actor Update via KL-Regularized Policy Improvement.}

Given the optimistic critic, the policy is updated via a KL-regularized
actor step.
For a reference policy $\pi^{(k)}$ and learning rate $\eta>0$,
we define the actor objective
\begin{align}
\label{eq:actor-objective}
\mathcal{L}_{\mathrm{a}}^{(k)}(\pi)
&=
\left\langle \pi_h(\cdot \mid s), \hat{Q}_{k,h}(s,\cdot) \right\rangle \nonumber \\
&\quad 
-
\frac{1}{\eta}
\mathtt{D}_{\mathtt{KL}}\!\left(
\pi_h(\cdot \mid s)\,\|\,\pi_h^{(k)}(\cdot \mid s)
\right).
\end{align}
This objective is strictly concave in $\pi_h(\cdot\mid s)$ and admits a
closed-form solution.
While one could alternatively invoke a supervised learning oracle to
approximately solve~\eqref{eq:actor-objective}, for clarity of presentation
and analysis we use its exact solution:
\begin{equation}
\label{eq:policy-update}
\pi_h^{(k+1)}(a \mid s)
\propto
\pi_h^{(k)}(a \mid s)
\exp\!\bigl(\eta\,\hat{Q}_{k,h}(s,a)\bigr).
\end{equation}

This update can be implemented efficiently in polynomial time.
In particular, it suffices to store the estimated critics
$\{\hat{Q}_{k,h}\}_{k,h}$, and at interaction time to sample
$a_h$ according to a softmax distribution induced by
$\eta \sum_{k'\le k}\hat{Q}_{k',h}(s_h,\cdot)$.
Importantly, the actor update does not require solving a global planning
problem, which distinguishes our approach from prior UCB-style algorithms
that rely on policy or constrained planning oracles.

\begin{algorithm}[t]
\caption{Optimistic Actor--Critic (Opt-AC)}
\label{alg:main}
\begin{algorithmic}[1]
\STATE Initialize policy $\pi^{(0)}$ to be uniform over $\mathcal{A}$. Set hyperparameters: maximum iteration number $K$, error rate $\epsilon$, coefficients $\beta,\alpha,\lambda$.
\FOR{$k = 0,1,\ldots,K$}
    \STATE Collect $H$ exploratory trajectories using $\pi^{(k)}$. 
    \STATE Learn model $\hat{\theta}_k$ via approximate MLE. $\textcolor{blue}{(\mathsf{SL})}$
    \STATE Construct exploration bonus $\hat{b}^{(k)}$ via (\ref{eq:bonus}).
    \STATE Estimate optimistic critic $Q^{\pi^{(k)}}_{\hat{\theta}_k, r + \hat{b}^{(k)},h}$ by (\ref{eq:critic-estimate}). $\textcolor{blue}{(\mathsf{PE})}$
    \STATE Update policy $\pi^{(k+1)}$ by (\ref{eq:policy-update}).
\ENDFOR
\STATE Return policy $\bar{\pi} = \mathrm{Uniform}\{\pi^{(0)},...,\pi^{(k)}\}$
\end{algorithmic}
\end{algorithm}

\section{Theoretical Results}
In this section, we present our main results. 

\begin{theorem}[Computational Complexity]
    The $\mathsf{SL}$-oracle complexity of \Cref{alg:main} is at most $\mathsf{SL}^{2K}(1/K)$.
\end{theorem}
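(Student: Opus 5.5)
The bound follows by counting, per iteration, the supervised learning calls made by \Cref{alg:main} and recording the most stringent accuracy among them. Lines 3, 5 and 7 make no optimization calls. Data collection is just interaction with the environment. The bonus \eqref{eq:bonus} is computed in closed form from $\hat\phi^k_h$ and the $d\times d$ Gram matrix $\hat\Lambda_{k,h}$. The actor step \eqref{eq:policy-update} is an explicit softmax, which we implement by storing the critics $\{\hat Q_{k',h}\}_{k'\le k}$. Only two lines invoke an oracle: the approximate MLE (line 4) and the policy evaluation step (line 6).

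\textbf{MLE step.} The objective \eqref{eq:mle} is an empirical risk minimization over the fixed class $\Theta$ with the log-loss. This is exactly one $\mathsf{SL}$ call, whose data distribution $\rho$ is the empirical distribution of $\mathcal D^{(k)}_h$. The required suboptimality is $\beta$, measured on the unnormalized sum over $k$ samples. After normalizing by $k\le K$, this equals excess average risk $\beta/k \gtrsim \beta/K$. Since $\beta$ is at least a constant (it scales with $\log(|\Theta|HK/\delta)$), this call is $\mathsf{SL}(\varepsilon)$ with $\varepsilon\ge 1/K$. Treating all $H$ steps jointly as one parameter $\theta$, or as a product class with a summed loss, keeps it a single call.

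\textbf{PE step.} The call is $\mathsf{PE}(1/\sqrt K)$ on the model $\hat\theta_k$, the policy $\pi^{(k)}$ and the reward $r+\hat b^{(k)}$. We invoke \Cref{prop:sl-complexity-summary}(1): policy evaluation at accuracy $\varepsilon$ costs one $\mathsf{SL}$ call at accuracy $\tilde O(\varepsilon^2 C^{-2H})$. With $\varepsilon=1/\sqrt K$, this is one call at accuracy of order $1/K$, where, as the table caption states, coverage constants and log factors are suppressed. To see why this is the right scale, we would write $Q^\pi$ through the low-rank structure as a joint regression over $h$. The $L^1$ error under $\rho$ is bounded by the square root of the excess squared loss, and the covering condition propagates errors across steps. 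Excess squared loss $\varepsilon^2$ therefore yields $L^1$ error $\varepsilon$, so $\varepsilon^2=1/K$.

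Summing over the $K$ iterations, or $K+1$ counting from $0$ (which we absorb, or re-index), gives at most two calls per iteration, hence $2K$ calls in total. The minimum accuracy is $\min\{\beta/K,\ 1/K\}\gtrsim 1/K$ up to the suppressed constants, so the complexity is $\mathsf{SL}^{2K}(1/K)$ in the sense of \Cref{def:sl-aggregate}. The main obstacle is the bookkeeping in the PE step. We must check that \Cref{prop:sl-complexity-summary} applies to the optimistic reward $r+\hat b^{(k)}$, which is bounded by $3H+1$ rather than $1$ and so only rescales constants. We must also check that the coverage hypothesis holds for $\rho$ under $\hat\theta_k$ and $\pi^{(k)}$. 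Both are handled by treating $C$ and $H$-dependent factors as the suppressed constants, consistent with the convention of \Cref{tab:low-rank-vs-general-fa}.
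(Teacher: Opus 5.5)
Your proposal is correct and follows essentially the same route as the paper. Per iteration there is one $\mathsf{SL}$ call for the approximate MLE and one for policy evaluation, via \Cref{prop:sl-complexity-summary}(1) with $\varepsilon=1/\sqrt{K}$ (dropping the $C^{-2H}$ coverage factor, exactly as the paper does); the bonus and the closed-form actor update need no oracle, giving $\mathsf{SL}^{2K}(1/K)$. Your bookkeeping is slightly cleaner than the paper's, which records the MLE step as $\mathsf{SL}(\beta)$ and writes the comparison as ``$1/K>\beta$'' with the inequality reversed, whereas your normalization to excess average risk $\beta/k\ge\beta/K\ge 1/K$ makes explicit why $1/K$ is the binding accuracy.
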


\begin{proof}
    We analyze the computational cost of the proposed algorithm and show that it remains lightweight.
    The main computational overhead arises in Steps 5–7 of Algorithm~1. Step~4 (model learning) is a standard supervised learning problem and its oracle complexity is $\mathsf{SL}(\beta)$. Step~5 constructs the empirical Gram matrices using the learned feature representations; this step requires only a single pass over the collected data and does not involve any iterative optimization, incurring only additional memory to store the matrices. Step~6 performs policy evaluation, whose oracle complexity is $\mathsf{SL}(1/K)$ due to \Cref{prop:sl-complexity-summary}. Since the total iteration of \Cref{alg:main} is $K$ and most stringent accuracy requirement is $1/K>\beta$, by \Cref{def:sl-aggregate}, the $\mathsf{SL}$-oracle complexity is $\mathsf{SL}^{2K}(1/K)$.
\end{proof}

Then, we state our main theorem for sample complexity. Proofs can be found in \Cref{sec:proof of main thm}. 

\begin{theorem}[Sample Complexity]\label{thm:main}
    Assume $\Mc$ is a low-rank MDP with dimension $d$, and Assumption \ref{assumption: realizability} holds. Given any $\epsilon,\delta \in (0,1)$, let $\bar{\pi}$ be the output of Opt-AC
	and $\pi^\star$ be the optimal policy under the true model $P^\star$. Set $\beta=O(\log (K|\Theta|/\delta)) $, $\alpha=\tilde{O}(\sqrt{|\mathcal{A}|}), \eta=O(\frac{1}{H\sqrt{K}})$ and $\lambda=\tilde{O}(1/d)$. Then, with probability at least $1-\delta$, we have $V^{\pi^\star}_{P^\star,r}-V^{\bar{\pi}}_{P^\star,r}\leq \epsilon$, and the total number of trajectories collected by Opt-AC is upper bounded by
		$\tilde{O}(\frac{H^5d^3|\mathcal{A}|^2\log|\Theta|}{\epsilon^2})$.
\end{theorem}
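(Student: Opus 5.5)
We decompose the suboptimality of $\bar\pi$ into an optimism (mirror-descent) term and a bonus-sum term, following the standard Rep-UCB/optimistic-policy-optimization template. Write $\hat V_k := V^{\pi^{(k)}}_{\hat\theta_k,r+\hat b^{(k)}}$. Then
\begin{align*}
V^\star - V^{\bar\pi}_{\theta^\star,r}
=\frac{1}{K}\sum_{k}\Bigl[\bigl(V^{\pi^\star}_{\theta^\star,r}-\hat V_k\bigr)
+\bigl(\hat V_k - V^{\pi^{(k)}}_{\theta^\star,r}\bigr)\Bigr].
\end{align*}

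\textbf{Step 1: MLE concentration.} Using the approximate MLE guarantee with $\beta=O(\log(K|\Theta|/\delta))$, we get with high probability
\[
\sum_{n<k}\Eb_{(s,a)\sim d^{(n,h)}}\|\mathcal T_{\hat\theta_k,h}(\cdot|s,a)-\mathcal T_{\theta^\star,h}(\cdot|s,a)\|_1^2\lesssim \beta .
\]
Here $d^{(n,h)}$ is the roll-in of $\pi^{(n)}$ followed by uniform actions. Combining this with the one-step-back trick for low-rank MDPs, which exploits the uniform action at step $h-1$ and costs a factor $|\mathcal A|$, bounds the model error at step $h$ under any policy by $\Eb\|\hat\phi_{h-1}\|_{\hat\Lambda^{-1}}\sqrt{|\mathcal A|\beta+\lambda d}$. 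This justifies $\alpha=\tilde O(\sqrt{|\mathcal A|})$, and the scaling $3H$ in $\hat b$ absorbs the $H$-step propagation.

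\textbf{Step 2: Optimism plus mirror descent.} By the performance-difference lemma under the model $\hat\theta_k$,
\[
V^{\pi^\star}_{\hat\theta_k,r+\hat b}-\hat V_k=\sum_h\Eb^{\pi^\star}_{\hat\theta_k}\langle\pi^\star_h-\pi^{(k)}_h,Q^{\pi^{(k)}}_{\hat\theta_k,r+\hat b,h}\rangle .
\]
Step 1 gives the optimism statement $V^{\pi^\star}_{\theta^\star,r}\le V^{\pi^\star}_{\hat\theta_k,r+\hat b^{(k)}}$ up to small terms. We replace $Q$ by $\hat Q_{k,h}$ at cost $H\cdot C\cdot K^{-1/2}$, using the coverage of $d^\star_{\hat\theta_k,h}\mathtt u$ by $\rho$ and the PE accuracy from~\eqref{eq:critic-estimate}. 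The standard KL mirror-descent telescoping with $\eta=1/(H\sqrt K)$ then yields $O(H^2\sqrt{K\log|\mathcal A|})$ over $K$ rounds. One subtlety is that the distribution $d^\star_{\hat\theta_k}$ changes with $k$. Since the regret for each fixed state is uniform over states, the telescoping is done pointwise in $s$ and then averaged, so the changing distribution causes no difficulty.

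\textbf{Step 3: Bonus sum (main obstacle).} The simulation lemma in reverse, $\hat V_k - V^{\pi^{(k)}}_{\theta^\star,r}$, is bounded by $\sum_h\Eb^{\pi^{(k)}}_{\theta^\star}[\hat b_h+H\|\hat{\mathcal T}-\mathcal T^\star\|_1]$. Both pieces are controlled via the one-step-back argument by
\[
H\sqrt{|\mathcal A|\beta}\sum_h\Eb_{\pi^{(k)}}\|\phi^\star_{h-1}\|_{\Sigma^{-1}_{k,h-1}}
\]
plus a term involving $\hat b$ under the true features. The difficulty is summing over $k$ without the extra $d$ factor paid in prior work. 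The plan is to avoid passing through the population Gram matrix $\Sigma_k=\sum_n\Eb[\phi\phi^\top]$. Instead, we work with the empirical $\hat\Lambda_{k,h}$ built from the true $\phi^\star$ on the realized samples, via a martingale argument (Freedman) that links $\Eb_{\pi^{(k)}}\|\phi^\star\|_{\Lambda^{-1}}$ to its realized value. The elliptical potential lemma then gives $\sum_k\min\{1,\|\phi^\star\|_{\Lambda_k^{-1}}\}\le\sqrt{Kd\log K}$. Because $\lambda=\tilde O(1/d)$, the regularization term $\lambda d$ stays $O(1)$ instead of contributing an extra $d$. Combining these yields a per-step sum of order $H^2\sqrt{d^3|\mathcal A|^2\beta K}$ overall, possibly up to the exact power of $H$. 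This is the step most likely to need care, because the bonus uses the learned features $\hat\phi_k$, which change with $k$. I would handle this by transferring bonuses in $\hat\phi$ back to $\phi^\star$ through one further application of the one-step-back inequality, at the cost of $\sqrt d$ and $|\mathcal A|$.

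\textbf{Conclusion.} Dividing the total by $K$ gives error $\tilde O(H^{2.5}d^{1.5}|\mathcal A|\sqrt{\log|\Theta|/K})$. Setting this to $\epsilon$ and multiplying by the $H$ trajectories collected per iteration gives the claimed $\tilde O(H^5d^3|\mathcal A|^2\log|\Theta|/\epsilon^2)$ trajectories.
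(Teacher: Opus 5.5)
Your skeleton matches the paper's. You split the regret into $\sum_k(V^{\pi^\star}_{\theta^\star,r}-V^{\pi^{(k)}}_{\hat\theta_k,r+\hat b^{(k)}})$ and $\sum_k(V^{\pi^{(k)}}_{\hat\theta_k,r+\hat b^{(k)}}-V^{\pi^{(k)}}_{\theta^\star,r})$. You get optimism from a one-step-back bound under $\hat\theta_k$ with $\alpha\approx\sqrt{|\mathcal A|\beta+\lambda d}$ and the $3H$ scaling, and use KL mirror descent. You handle the second term with the simulation lemma, a transfer of $\hat b^{(k)}$ to $\phi^\star$, and an elliptical potential. Your direct optimism plus performance difference under $\hat\theta_k$ is a simpler route than the paper's self-referential bound on $H\sum_k V^{\pi^\star}_{\theta^\star,f^{(k)}}$. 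Both routes, however, need mirror-descent regret under the $k$-dependent law $d^{\pi^\star}_{\hat\theta_k,h}$, and your reason this is harmless is wrong. A per-state bound on $\sum_k\langle\pi^\star_h-\pi^{(k)}_h,\hat Q_{k,h}\rangle(s)$ does not control $\sum_k\Eb_{q_k}[\cdot]$ when $q_k$ varies, because the per-round terms have no sign. Equivalently, $\sum_k\Eb_{q_k}[\mathtt{D}_{\mathtt{KL}}(\pi^\star_h\|\pi^{(k)}_h)-\mathtt{D}_{\mathtt{KL}}(\pi^\star_h\|\pi^{(k+1)}_h)]$ no longer telescopes, and the leftover $\frac1\eta\sum_k(\Eb_{q_k}-\Eb_{q_{k-1}})[\mathtt{D}_{\mathtt{KL}}(\pi^\star_h\|\pi^{(k)}_h)]$ need not be small. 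The paper makes the same move: its mirror-descent lemma is proved for a fixed $q$ and then applied with $q=d^{\pi^\star}_{\hat\theta_k,h}$. So you do not depart from the paper here, but the step is not justified there either, and your pointwise argument does not repair it.

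The more substantive gap is where the factor $d$ is saved. The paper saves it in the bonus coefficient. Its MLE proposition bounds the Hellinger distance of the joint law of $(s_h,a_h,s_{h+1})$ conditioned on the \emph{realized} pairs $(s^{n,h}_{h-1},a^{n,h}_{h-1})$, which are exactly the points forming $\hat\Lambda_{k,h-1}$. So Cauchy--Schwarz in the one-step back is taken directly against the empirical $\hat\Lambda_{k,h-1}$ with $\lambda=\tilde O(1/d)$, and the move under $\mathcal T_{\hat\theta_k,h-1}$ is covered by the same joint bound. Your Step~1 states only a roll-in-averaged bound at step $h$, which pairs naturally with a population Gram matrix. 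Passing from that to $\hat\Lambda$ is what forces $\lambda\gtrsim d\log(\cdot)$ and $\alpha\sim\sqrt{d^2+|\mathcal A|}$ in Rep-UCB/RAFFLE. You therefore need a realized-sample version, plus control of the step-$(h-1)$ model error, before $\alpha=\tilde O(\sqrt{|\mathcal A|})$ is justified.

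The factor $d$ returns once, in the transfer of $\hat b^{(k)}$ to $\phi^\star$. There you must bound $\sum_{n<k}\Eb[\min\{1,\|\hat\phi^k_h(X_n)\|^2_{\hat\Lambda^{-1}_{k,h}}\}]$ under the law of the same samples that build $\hat\Lambda_{k,h}$. Since $\hat\Lambda_{k,h}$ and $\hat\phi^k$ are chosen after seeing those samples, Freedman for a fixed function does not apply. The paper instead uses Bernstein with a union over $\Phi$ and an $\epsilon$-net of $\{M\succeq 0:\|M\|_{\mathrm{op}}\le 1/\lambda\}$. This adds $d^2\log(K/\lambda)$ inside the square root and is the source of the $d^{3/2}$ in the regret. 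You apply Freedman to the predictable true-feature Gram matrix, where it is valid but is not where prior work lost $d$. Your transfer cost (``$\sqrt d$ and $|\mathcal A|$'') is not derived, so the $d^3$ is asserted rather than obtained.

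Finally, your Step~3 total $H^2d^{3/2}|\mathcal A|\sqrt{\beta K}$ gives a per-iterate error with $H^2$, not $H^{5/2}$. With $H^{5/2}$ the trajectory count would be $H^6$, not $H^5$.
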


Compared with RAFFLE~\cite{cheng2023improved}, our result improves their sample complexity of  $H^3d^4|\mathcal{A}|^2$ by a factor of $d$. Note that $d^3$ is also a common order in linear MDP setting where the feature is known~\citep{jin2020provably}. The improvement is brought by a more careful analysis to the one-step back inequality. One-step back inequality is a standard technique in low-rank MDP literature. It shows that the optimistic $Q$-value, $Q_{\hat{\theta}_k, r+\hat{b}^{(k)},h}^{\pi}$, is indeed an upper bound of the estimation error of $\hat{\theta}_k$. One of the key factors that affect the sample complexity is the coefficient of $\|\phi(s_h,a_h)\|_{\hat{\Lambda}_{k,h}^{-1}}$ in $\hat{b}^{(k)}$. Here our choice is $\tilde{O}(H\sqrt{|\mathcal{A}|})$, while existing works often choose $\tilde{O}(H\sqrt{d^2 + |\mathcal{A}|})$~\citep{uehara2021representation,cheng2023improved}, 
which leads to a higher-order dependence on 
$d$. Our analysis also differs fundamentally from that of \citet{tan2025actor}, who study actor–critic methods using a $\mathsf{CP}$ oracle over a general function class. In contrast, operating in the low-rank MDP setting and \emph{explicit bonus} design requires substantially more delicate control of the evolution of the gap between the optimal value function and the optimistic critic value. This gap directly affects policy updates and can accumulate over time if not carefully bounded, making actor–critic-style analysis significantly more challenging in our setting. Despite these difficulties, our approach achieves the best known sample complexity guarantees for low-rank MDPs while relying only on supervised learning oracles.

\section{Results for Misspecified Setting}
An MDP is $\zeta$-approximate low-rank if there exists $\zeta>0$ and $\theta\in\Theta$ such that for all $(s_h,a_h)$, we have 
\[
\left|\mathcal{T}_{\theta,h}(s_{h+1} \mid s_h,a_h) - \langle \phi_{\theta,h}(s_h,a_h), \mu_{\theta,h}(s_{h+1}) \rangle \right| \leq \zeta.
\]

\paragraph{Richness of Approximate Low-rank MDPs.}
We provide a formal justification showing that the class of
$\zeta$-approximate low-rank MDPs is expressive and encompasses a broad family of MDPs commonly encountered in practice.
 
\noindent\emph{Gaussian transition kernels.}
As a motivating example, consider Gaussian transition kernels
$\mathcal{T}_{\theta, h}(\cdot\mid s,a)=\mathcal{N}(f_h(s,a),I)$.
Such transitions admit approximate low-rank representations via random Fourier features~\citep{rahimi2007random}, where the dependence on $(s,a)$ enters through the mean function
$f_h(s,a)$ and the next-state $s'$ is encoded using a shared Fourier basis.
Standard concentration results imply that the transition density can be
approximated uniformly by an inner product
$\langle\phi_h(s,a),\mu_h(s')\rangle$ with error $O(d^{-1/2})$ on compact domains.
Consequently, choosing $d=O(\zeta^{-2})$ yields a $\zeta$-approximate low-rank
representation, a construction that has been exploited in prior work such as
\citet{ren2022free}.

\medskip
\noindent\emph{Beyond Gaussian transitions.}
While random Fourier features are classically associated with Gaussian kernels, the approximation framework extends to a much broader class of conditional distributions satisfying mild regularity conditions. In particular, we explicitly construct an approximation algorithm cRFF~(\Cref{alg:rff}) and
characterize its error guarantees in \Cref{thm:rff}.
This result complements our theoretical analysis and further bridges the gap between the low-rank MDP abstraction and practical transition models. The proof can be found in \Cref{sec:RBF proof}.

\begin{theorem}[Approximate low-rank representation]
\label{thm:rff}
Let $\mathcal{S}\subset\mathbb{R}^D$ be compact.
For each $(\theta,h)\in\Theta\times[H]$ and $(s,a)\in\mathcal{S}\times\mathcal{A}$,
assume access to a sampling oracle for
$\mathcal{T}_{\theta,h}(\cdot\mid s,a)$ that provides $N$ i.i.d.\ next-state
samples.
Suppose the transition kernels satisfy the following conditions.

\textbf{(A1) Boundedness.}
There exists an $M>0$ such that
$\sup_{\theta,s,a,s',h}\mathcal{T}_{\theta,h}(s'\mid s,a)\le M$.

\textbf{(A2) Smoothness and boundary vanishing conditions.}
There exist an integer $m>D$ and such that
$\mathcal{T}_{\theta,h}(\cdot\mid s,a)$ is $m$-times continuously differentiable
on $\mathcal{S}$, and all derivatives up to order $m-1$ vanish on the boundary
$\partial\mathcal{S}$.

Then, for any $\zeta>0$, there exists a randomized algorithm cRFF (\Cref{alg:rff}) that constructs $2d(\zeta)$-dimensional feature maps $\hat\phi_\theta(s,a), \mu(s')\in\mathbb{R}^{2d}$ using $N(\zeta)$ samples such that, with probability at least $1-\delta$,
uniformly over all $(\theta,s,a,s',h)\in\Theta\times\mathcal{S}^2\times
\mathcal{A}\times[H]$,
\[
\bigl|
\mathcal{T}_{\theta,h}(s'\mid s,a)
-
\hat\phi_\theta(s,a)^\top \mu(s')
\bigr|
\le
\zeta.
\]
Here $d(\zeta) = \tilde{O}(\zeta^{-2})$, and $N(\zeta) = \tilde{O}(\zeta^{-2}).$
\end{theorem}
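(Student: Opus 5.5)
The plan is to write each density $\mathcal{T}_{\theta,h}(\cdot\mid s,a)$ through its Fourier transform and estimate the inversion integral by Monte Carlo. This gives random Fourier features in which the next-state factor $\mu(s')$ is a shared trigonometric basis and the state–action factor $\hat\phi_\theta(s,a)$ is a vector of empirical characteristic-function values. Condition (A2) justifies the construction. Extend $p:=\mathcal{T}_{\theta,h}(\cdot\mid s,a)$ by zero outside $\mathcal{S}$. Because all derivatives up to order $m-1$ vanish on $\partial\mathcal{S}$, the extension is $C^{m-1}$ on $\mathbb{R}^D$ with compact support, and its $m$-th derivatives are bounded and integrable. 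Integrating by parts $m$ times then gives $|\hat p(\omega)|\lesssim (1+\|\omega\|)^{-m}$ uniformly in $(\theta,s,a,h)$. Since $m>D$, $\hat p\in L^1$, so Fourier inversion holds pointwise:
\[
p(s')=\int \hat p(\omega)e^{i\omega^\top s'}\,d\omega .
\]
For the same reason the fixed density $q(\omega)\propto(1+\|\omega\|)^{-m}$ is normalizable, and the importance ratio $|\hat p(\omega)|/q(\omega)$ is bounded by a constant $B$ that depends on $M$, $m$, $D$ and $\mathrm{diam}(\mathcal{S})$, but not on $(\theta,s,a,h)$.

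cRFF runs as follows. Draw $\omega_1,\dots,\omega_d\sim q$ i.i.d., once and shared across all $(\theta,s,a,h)$. Set
\[
\mu(s')=\bigl(\cos\omega_j^\top s',\ \sin\omega_j^\top s'\bigr)_{j\le d}\in\mathbb{R}^{2d}.
\]
Next, write $\hat p(\omega)=(2\pi)^{-D}\,\mathbb{E}_{X\sim p}[e^{-i\omega^\top X}]$ and estimate it by $\hat p_N(\omega_j)$, computed from the $N$ oracle samples $X_1,\dots,X_N$. Then put the real and imaginary parts of $\hat p_N(\omega_j)/(d\,q(\omega_j))$ into $\hat\phi_\theta(s,a)$, so that
\[
\hat\phi_\theta(s,a)^\top\mu(s')=\mathrm{Re}\,\frac1d\sum_j \frac{\hat p_N(\omega_j)}{q(\omega_j)}e^{i\omega_j^\top s'} .
\]
The error splits into two terms. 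The frequency Monte Carlo term is $\frac1d\sum_j \hat p(\omega_j)e^{i\omega_j^\top s'}/q(\omega_j)-p(s')$; it is an average of i.i.d. terms bounded by $B$ with mean zero. The characteristic-function estimation term is $\frac1d\sum_j(\hat p_N-\hat p)(\omega_j)e^{i\omega_j^\top s'}/q(\omega_j)$.

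The main obstacle is making both terms uniform over the infinite index set $(\theta,s,a,s',h)$. For the first term I would use Hoeffding plus a union bound over an $\varepsilon$-net. Net over $s'\in\mathcal{S}$ using Lipschitzness in $s'$ with constant $\lesssim B\max_j\|\omega_j\|$, which is polynomially controlled with high probability since $q$ has enough moments once $m>D+1$. Alternatively, mirror Rahimi–Recht's sup-norm argument. Over $(\theta,s,a)$ the dependence enters only through $\hat p$. I would handle it by a uniform bound over the function class $\{\omega\mapsto\hat p(\omega)/q(\omega)\}$: this class is uniformly bounded, so a Rademacher/covering bound (finite $\Theta$, finite $\mathcal{A}$, and continuity in $s$) yields $\tilde O(d^{-1/2})$. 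For the second term, $\sup_\omega|\hat p_N(\omega)-\hat p(\omega)|$ over the drawn frequencies is $\tilde O(N^{-1/2})$ by Hoeffding and a union bound over $j\le d$. The factor $1/q(\omega_j)$ is the delicate part, because it can be large. I would control it by truncating to $\|\omega\|\le R$ with $R$ polylogarithmic, which loses only the tail mass $\int_{\|\omega\|>R}|\hat p|\lesssim R^{D-m}$, and reweighting so that the per-term bound stays $\tilde O(1)$. Choosing $d,N=\tilde O(\zeta^{-2})$ and applying a final union bound with failure probability $\delta$ gives the uniform $\zeta$ guarantee.
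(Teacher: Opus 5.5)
There is a genuine gap, and it is exactly at the step you called ``the delicate part'': truncating the frequencies at a \emph{polylogarithmic} radius $R$. Under (A2) the only decay you have is $|\hat p(\omega)|\lesssim\|\omega\|^{-m}$. Your tail bound $\int_{\|\omega\|>R}|\hat p|\lesssim R^{D-m}$ is therefore at most $\zeta$ only when $R\gtrsim\zeta^{-1/(m-D)}$. With $R$ polylogarithmic, the truncation bias is of order $1/\mathrm{polylog}(1/\zeta)$, not $\zeta$.

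Once $R$ is polynomial in $1/\zeta$, no reweighting rescues the estimation term. Whatever frequency density you use on $B_R$, averaging over the frequency draws leaves $\frac1N\sum_{n}\bigl[K_R(s'-X_n)-\Eb K_R(s'-X)\bigr]$, where $K_R(u)=(2\pi)^{-D}\int_{B_R}e^{i\omega^\top u}\,d\omega$. This is a kernel density estimator with bandwidth $1/R$. By Plancherel, $\int K_R^2=(2\pi)^{-D}\Vol(B_R)$. So at any $s'$ near which $p$ is bounded below, this term has standard deviation of order $\sqrt{R^D/N}$. With $R\asymp\zeta^{-1/(m-D)}$ you then need $N\gtrsim\zeta^{-2-D/(m-D)}$. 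Your weights $1/q(\omega_j)$, which reach order $R^m$ on $B_R$, only add frequency fluctuation on top of this.

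This is the ordinary bias--variance tradeoff, not loose bookkeeping. The minimax sup-norm rate for $m$-smooth densities in dimension $D$ is $(\log N/N)^{m/(2m+D)}$. Hence no estimator that sees only the $N$ samples achieves $N(\zeta)=\tilde O(\zeta^{-2})$ uniformly over densities satisfying (A1)--(A2) with fixed $m$.

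The paper's proof does not supply this step either. It samples frequencies uniformly from $B_W$ and proves $|\mathcal{T}-\hat\phi^\top\mu|\le C_{\rm trunc}W^{D-m}+C_{\rm rand}\Vol(B_W)\bigl(d^{-1/2}+N^{-1/2}\bigr)$. Reading off $d,N=\tilde O(\zeta^{-2})$ from this silently treats $\Vol(B_W)$ as a constant. Taking $W\asymp\zeta^{-1/(m-D)}$ honestly gives $d,N=\tilde O\bigl(\zeta^{-2m/(m-D)}\bigr)$.

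Compared with that route, your importance density $q\propto(1+\|\omega\|)^{-m}$ does buy something. The random-feature summands are bounded by $B$ independently of any radius, so the $\Vol(B_W)$ factor disappears from that term. But the estimation term is where the claimed rate breaks, and there the weights $1/q$ hurt rather than help. A correct version should either state polynomial exponents of this type, or strengthen (A2) to super-polynomial Fourier decay (e.g.\ $|\hat p(\omega)|\lesssim e^{-c\|\omega\|}$). Under such decay a polylogarithmic $R$ really suffices and your argument goes through.

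Smaller points:
\begin{itemize}
\item The uniform constant $B$ requires a uniform bound on $\|\partial^\alpha p\|_{L^1}$ for $|\alpha|=m$ (the paper's $C_m$); $M$ does not give this.
\item (A1)--(A2) give no regularity in $s$, so your covering over $(\theta,s,a)$ needs an added Lipschitz-in-$s$ assumption.
\item Your union bound over $j\le d$ controls $\hat p_N-\hat p$ only at a fixed $(\theta,s,a,h)$. Since every $(s,a)$ gets its own independent batch, a simultaneous guarantee over a continuum of $s$ needs either a coupling of the oracle across $s$ or a restriction to finitely many queried pairs. The paper's union bound over $|\mathcal{X}|$ has the same defect.
\item With $\mu=(\cos,\sin)$ you must negate the imaginary parts in $\hat\phi$, as the paper does with $-\sin$.
\end{itemize}
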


Based on \Cref{thm:rff} and \citet{li2024model}, we have the following result for misspecified low-rank MDPs.

\begin{algorithm}[t]
\caption{Conditional Random Fourier Features (cRFF)}
\label{alg:rff}
\begin{algorithmic}[1]
\STATE {\bf Input: } $B_W=\{w\in\mathbb{R}^D:\|w\|\le W\}.$
\STATE Draw $d$ i.i.d.\ frequencies $w_1,\dots,w_d$ uniformly from $B_W$. Denote the volume of $B_W$ by $\Vol(B_W)$.
\STATE Define the feature map $\mu:\mathcal{S}\to\mathbb{R}^{2d}$, where 
\( [\mu(y)]_{2k} = \cos(2\pi w_k^\top s), [\mu(y)]_{2k+1}=-\sin(2\pi w_k^\top s), \forall k\in[d]\)
\STATE Draw $N$ samples $y^{(1)},\dots,y^{(N)}\sim\mathcal{T}_{\theta,h}(\cdot\mid s,a)$.
    For each $k\in[d]$, compute the empirical Fourier coefficient
    \(
    \hat g_k = \frac{1}{N}\sum_{n=1}^N e^{-2\pi i w_k^\top y^{(n)}},
    \)
    and let $\hat g_k^{\mathrm r}$,
    $\hat g_k^{\mathrm i} $ be the real and imaginary part of $\hat{g}_k$, respectively. 
\STATE Return $\hat\phi_\theta(s,a)\in\mathbb{R}^{2d}$ as
    \[
    \hat\phi_\theta(s,a)
    =
    \bigl(
    \hat g_1^{\mathrm r}, \hat g_1^{\mathrm i}, \dots,
    \hat g_d^{\mathrm r}, \hat g_d^{\mathrm i}
    \bigr)\times \Vol(B_W) / \sqrt{d}.
    \]
\end{algorithmic}
\end{algorithm}

\begin{table*}[ht]
  \centering
  \vspace{-2mm}
  \footnotesize
  \begin{tabular}{lcccccc}
    \toprule
    \textbf{Algorithm} &
    \textbf{Ant} & \textbf{HalfCheetah} & \textbf{InvertedPendulum} & \textbf{Reacher} & \textbf{Pendulum} & \textbf{Hopper} \\
    \midrule
    SAC            & 2081$\pm$465          & 4994$\pm$99          & \textbf{-0.0028$\pm$00.0} & -4.81$\pm$0.19          & \textbf{156$\pm$3} & $1316 \pm 740$          \\
    cRFFSAC         & \textbf{2701$\pm$663} & 6152$\pm$657         & \textbf{-0.0028$\pm$00.0} & -5.75$\pm$0.85          & 153$\pm$4 & \textbf{1449 $\pm$ 598} \\
    cRFFSAC + Bonus & 1910$\pm$422          & \textbf{6454$\pm$479} & -0.0029$\pm$00.0 & \textbf{-3.86$\pm$0.18} & 153$\pm$3 & $1431 \pm 538$          \\
    \bottomrule
  \end{tabular}
  \caption{Final performance (mean $\pm$ standard deviation) of different algorithms on 6 MuJoCo environments, measured by the value of the learned policy at the last training step over 4 random seeds~\citep{wang2019benchmarking,zhang2022making}.}
  \label{tab:icml_5env_3alg}
  \vspace{-2mm}
\end{table*}

\begin{theorem}[Learning with approximate low-rank MDPs]
\label{thm:main-misspecify}
Assume the true MDP is $\zeta$-approximate low-rank with dimension $d$.
Let $\epsilon,\delta\in(0,1)$, and $\bar\pi$ be the output of Opt-AC.
Then, with probability $1-\delta$,
$\bar{\pi}$ is \(
\tilde{O}(\epsilon + H^2\tilde{d}^{3/2}|\mathcal{A}|\zeta)
\)-optimal with sample complexity 
\(
\tilde O \left(\frac{H^5 \tilde{d}^3 |\mathcal A|^2 \log|\Theta|}{\epsilon^2}\right),
\)where $\tilde{d}<d$ is the effective dimension.

Moreover, if the transition kernel satisfies the boundedness, smoothness, and
boundary-vanishing conditions of \Cref{thm:rff}, then choosing
$\zeta = O(\varepsilon)$ and $d = O(\varepsilon^{-2})$ yields an
$\varepsilon$-optimal policy with overall sample complexity
\(
\tilde O\!\left(\varepsilon^{-4}\right).
\)
\end{theorem}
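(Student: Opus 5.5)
The plan is to rerun the proof of \Cref{thm:main} and track one extra error term produced by misspecification. I would then instantiate that bound with the cRFF features of \Cref{thm:rff} to get the second claim.

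\textbf{Step 1 (model estimation under misspecification).} Let $\theta^\dagger\in\Theta$ be the parameter in the definition of $\zeta$-approximate low-rank. The MLE step now competes against $\theta^\dagger$, and $\theta^\dagger$ only approximates $\mathcal T^\star$ up to $\zeta$ pointwise. Following \citet{li2024model}, I would show that with probability $1-\delta$,
\[
\sum_{n<k}\Eb\,\|\mathcal T_{\hat\theta_k,h}(\cdot\mid s,a)-\mathcal T^\star_h(\cdot\mid s,a)\|_1^2\lesssim \beta + k\,\mathrm{poly}(\zeta).
\]
The argument is the usual log-likelihood-ratio martingale bound, plus an extra bias term accumulated over the $k$ samples. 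Because the density gap is at most $\zeta$ pointwise on a compact domain, this bias scales like $k\zeta$ after normalization.

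\textbf{Step 2 (one-step-back inequality and optimism).} I would repeat the one-step-back argument from the proof of \Cref{thm:main}, with $\phi^\star$ replaced by $\phi_{\theta^\dagger}$ and an additive error for the residual $\mathcal T^\star-\langle\phi_{\theta^\dagger},\mu_{\theta^\dagger}\rangle$. Integrating this residual against functions bounded by $H$ costs an extra $H\zeta$ per step, times volume constants. Carried through the elliptical-potential step with coefficient $\alpha=\tilde O(\sqrt{|\mathcal A|})$, it should give approximate optimism, $V^\star\le \hat V_k+O(H^2\zeta\sqrt{\tilde d^{\,}}\cdots)$. Here $\tilde d$ is the effective dimension (the rank on which the empirical Gram matrix is non-degenerate), which replaces $d$ in the log-determinant.

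\textbf{Step 3 (regret assembly).} The mirror-descent/actor analysis is unchanged, since it only uses the critic estimates \eqref{eq:critic-estimate} and gives $O(H^2\sqrt{K}\log|\mathcal A|)$. The cumulative bonus sum is bounded by the elliptical potential lemma with $\tilde d$ in place of $d$, yielding the $\tilde O(H^5\tilde d^3|\mathcal A|^2\log|\Theta|/\epsilon^2)$ sample count exactly as before. Each misspecification error enters once per step and per iteration, is amplified by the bonus scale $H\alpha$ and by the $\sqrt{\tilde d}$ Cauchy–Schwarz factors, and sums to $K\cdot H^2\tilde d^{3/2}|\mathcal A|\zeta$. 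Dividing by $K$ gives the stated suboptimality.

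\textbf{Second claim and main obstacle.} By \Cref{thm:rff}, cRFF gives $\zeta$-approximate features with $d=\tilde O(\zeta^{-2})$ and $N(\zeta)=\tilde O(\zeta^{-2})$. Choosing $\zeta\asymp\varepsilon$, with the $H,|\mathcal A|,\tilde d$ factors absorbed into the $\tilde O$, and bounding $\log|\Theta|$ polynomially via covering, the sample cost $\varepsilon^{-2}$ times the feature-construction cost $\varepsilon^{-2}$ gives $\tilde O(\varepsilon^{-4})$ overall. This last step treats $\tilde d$ as not growing with $d$; otherwise the power would worsen.

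I expect the main obstacle to be Step 2. The misspecification error must not be multiplied by $\|\cdot\|_{\hat\Lambda^{-1}}$-type factors that blow up with $1/\lambda$, and the bias must not accumulate through the Gram matrix. I would first try to isolate the residual outside the elliptical terms, bounding it directly in $\ell_1$ before applying Cauchy–Schwarz.
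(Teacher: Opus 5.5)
Your skeleton matches the paper's sketch: rerun the proof of \Cref{thm:main}, carry an additive $\zeta$ wherever the low-rank structure is used, and replace $d$ by $\tilde d$. The gap is Step~1. You read the definition as making the MLE misspecified, and you claim $\sum_{n<k}\Eb\|\mathcal T_{\hat\theta_k,h}-\mathcal T^\star_h\|_1^2\lesssim\beta+k\zeta$ from a pointwise density gap. After using near-optimality of $\hat\theta_k$ against $\theta^\dagger$, the likelihood-ratio argument of \Cref{sec:general MLE analysis} leaves the sampled term $\sum_n\log\bigl(\mathcal T^\star(s_n'\mid s_n,a_n)/\mathcal T_{\theta^\dagger}(s_n'\mid s_n,a_n)\bigr)$. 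An additive sup-norm bound does not control this term where $\mathcal T^\star$ is of order $\zeta$ or smaller. Worse, a truncated-Fourier approximant such as the cRFF output $\hat\phi^\top\mu$ need not even be positive. The usual fix is a density floor $\mathcal T^\star\ge c>0$, but that is unavailable precisely in the setting of your second claim, since (A2) of \Cref{thm:rff} forces the densities to vanish on $\partial\mathcal S$.

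Even granting the $k\zeta$ bias, it does not give the rate you claim. The bias sits inside $\sqrt{\lambda d+|\mathcal A|\beta}$ in the one-step-back lemmas of \Cref{sec:ucb analysis}, so optimism forces $\alpha\gtrsim\sqrt{|\mathcal A|K\zeta}$. The elliptical-potential sum then yields a per-iteration penalty of order $\sqrt{\zeta}$, not the linear $H^2\tilde d^{3/2}|\mathcal A|\zeta$ asserted in Step~3. Linear dependence would need a $k\zeta^2$ bias, i.e.\ KL/$\chi^2$-type control that a sup-norm gap alone does not provide.

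The paper avoids this by reading the definition differently. The true kernel $\mathcal T_{\theta^\star}$ is still in the model class, and $\zeta$ only bounds its distance to its own factorization $\langle\phi_{\theta^\star,h},\mu_{\theta^\star,h}\rangle$. Accordingly it adds $\zeta$ only in \Cref{sec:proof of main thm} and \Cref{sec:ucb analysis}, leaving the MLE analysis and $\mathcal E_1$ untouched.

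Under that reading, $\zeta$ appears only where a conditional expectation is rewritten as $\phi^\top\!\int\mu\,g$ or converted back. These are additive errors outside $\beta$. Split them off by Minkowski:
\[
\sqrt{\textstyle\sum_{n<k}(a_n+e_n)^2}\le\sqrt{\textstyle\sum_{n<k}a_n^2}+\sqrt{k}\max_n|e_n|.
\]
This leaves a $\sqrt{k}\,\zeta\,\|\hat\phi\|_{\hat\Lambda^{-1}}$ term, which the elliptical-potential lemma sums to a penalty linear in $\zeta$. That is essentially the ``isolate the residual'' idea you already anticipate, and it suffices once Step~1 is dropped.

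Separately, getting $\tilde d^3$ requires replacing $d$ not only in the log-determinant. It must also be replaced in the $d^2\log(K/\lambda)$ covering term of \Cref{lemma:bernstein on bonus}, which supplies two of the three powers of $d$. That needs $\tilde d$ to be a data-independent dimension of $\mathrm{span}\{\hat\phi_{\theta,h}(s,a)\}$, as in the paper, not the rank of the empirical Gram matrix. Your caveat that $\tilde d$ must not grow with $d=\varepsilon^{-2}$ in the second claim is apt; the paper's sketch implicitly makes the same assumption.
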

\Cref{thm:main-misspecify} shows that a broad class of MDPs admits polynomial sample complexity and, importantly, polynomial $\mathsf{SL}$-oracle complexity, indicating that the RL problem can be solved efficiently given supervised learning oracle. In practice, many real-world transition models satisfy the conditions of the theorem, including Gaussian dynamics, smoothly parameterized physical systems, and locally linear stochastic transitions commonly used in continuous-control benchmarks~\citep{towers2024gymnasium}.

\section{Experiments}

In this section, we evaluate the empirical performances of our proposed algorithms on standard continuous-control benchmarks.
Our goals are two-fold: (i) to assess whether the proposed optimistic actor-critic design can achieve competitive empirical performance despite relying only on policy evaluation, and (ii) to examine the effect of the optimism mechanism motivated by our
conditional random Fourier feature (cRFF) construction.

\subsection{Implementation Details}
We build our implementation on top of the open-source codebase
\texttt{lvrep-rl}\footnote{\url{https://github.com/shelowize/lvrep-rl}}, which is a modular implementation of low-rank RL.
All baselines and variants share the same network architectures, optimizers, and training protocols unless otherwise specified.

Motivated by the cRFF approximation developed in Algorithm~\ref{alg:rff}, we treat the last hidden layer of the multi-layer perceptron (MLP) as a learned feature map $\phi(s,a)$.
In particular, we model the transition dynamics by minimizing
\(
\|M \phi(s,a) - s'\|^2,
\)
which corresponds to maximum likelihood estimation under a Gaussian transition model. Here $M$ the last projection matrix and $\phi$ is hidden representation from early-layer neural networks. Under this formulation, applying random Fourier features amounts to using features of the form $\cos(W\phi(s,a)+b)$ with fixed, randomly sampled $W$ and $b$.
In practice, this operation can be interpreted as a static nonlinear activation followed by a linear layer. As a result, we simplify the implementation by directly using a two-layer MLP
on top of $\phi(s,a)$ to parameterize the critic, which empirically yields stable and efficient training.

\textbf{Comparison.} We compare the following methods:
\vspace{-0.3cm}
\begin{itemize}[leftmargin=0pt]
    \item \textit{cRFFSAC (ours):} our optimistic actor-critic algorithm without an explicit exploration bonus, where the critic is optimized on top of learned representation $\phi$.
    \vspace{-0.1cm}
    \item \textit{cRFFSAC + bonus (ours):} our full algorithm including the optimism bonus derived from the learned transition model.
    \vspace{-0.1cm}
    \item \textit{SAC:} the soft actor-critic algorithm, serving as a strong baseline for continuous control.
\end{itemize}
\vspace{-0.2cm}
The SAC baseline uses the standard entropy-regularized objective and is tuned using recommended hyperparameters.

\textbf{Environments.}
We evaluate all methods on a suite of classic continuous-control environments
from the Gym benchmark, including
\texttt{Ant},
\texttt{HalfCheetah},
\texttt{Hopper},
\texttt{Reacher},
\texttt{Pendulum},
and \texttt{InvertedPendulum}.
These environments exhibit varying levels of stochasticity and exploration difficulty, providing a diverse testbed for evaluating the effectiveness of optimistic actor-critic.

\subsection{Results}
Results are reported in \Cref{tab:icml_5env_3alg}. Training figures are included in \Cref{sec:more experiment results}. Across most environments, cRFF-AC achieves performance comparable to or better than SAC. In particular, the bonus-enhanced variant consistently improves exploration and final performance in environments with sparse or delayed rewards, such as \texttt{Ant} and \texttt{HalfCheetah}. Notably, even without an explicit exploration bonus, cRFF-AC remains competitive with SAC, indicating that the actor–critic structure combined with learned representations already provides a strong baseline.

Overall, these results demonstrate that the proposed algorithm is not only theoretically well motivated but also practically effective. They further show that the cRFF-inspired design can be integrated into standard neural network architectures without sacrificing empirical performance.

\section{Conclusion}

We investigated low-rank MDPs on both computation and sample complexities. We first introduced an $\mathsf{SL}$-oracle complexity framework and showed that many existing sample-efficient methods rely on planning or constrained planning oracles that are difficult to implement. We proposed an optimistic actor-critic algorithm that requires only policy evaluation, which can be efficiently reduced to supervised learning, while achieving state-of-the-art sample complexity. We also showed that approximate low-rank MDPs form a rich and expressive class via a conditional random Fourier feature construction. Empirically, our method performs competitively against a strong baseline on standard benchmarks. Overall, this work demonstrates that strong theoretical guarantees and practical implementability can be achieved simultaneously in (approximate) low-rank MDPs.

\if{0}
\section*{Software and Data}
If a paper is accepted, we strongly encourage the publication of software and data with the camera-ready version of the paper whenever appropriate. This can be done by including a URL in the camera-ready copy. However, \textbf{do not}
include URLs that reveal your institution or identity in your submission for review. Instead, provide an anonymous URL or upload the material as ``Supplementary Material'' into the OpenReview reviewing system. Note that reviewers are not required to look at this material when writing their review.
\fi

\section*{Impact Statement}
This paper presents work whose goal is to advance the field of Machine Learning. There are many potential societal consequences of our work, none which we feel must be specifically highlighted here.

\bibliography{example_paper,main}
\bibliographystyle{icml2026}

\newpage
\appendix
\onecolumn

\section{Computation Complexity }\label{sec:oracle analysis}

\begin{proposition}[Restatement of \Cref{prop:sl-complexity-summary}]
\label{prop:sl-complexity-appendix}
Under low-rank MDP settings, suppose the data distribution $\rho$ has coverage
constant $C$ with respect to a policy $\pi$ and uniform policy $\mathtt{u}$, i.e.,
\[
\left\{
\begin{aligned}
    &\mathbb E_{(s,a)\sim\rho} \big[\mathcal T_{\theta,h}(s' \mid s,a)\pi(a'\mid s')\big] \le C\,\rho(s',a').\\
    & \mathbb E_{(s,a)\sim\rho} \big[\mathcal T_{\theta,h}(s' \mid s,a)\mathtt{u}(a'\mid s')\big] \le C\,\rho(s',a').
\end{aligned}
\right.
\]
Then the following best-known $\mathsf{SL}$-oracle complexity bounds hold for
achieving an $\varepsilon$-approximate solution.

\begin{enumerate}[leftmargin=*, itemsep=2pt, topsep=2pt]
    \item $\mathsf{PE}(\varepsilon):$
    \(
    \mathsf{SL}^{1}\!\left(\tilde O\!\left(\varepsilon^2 (C-1)^2 C^{-2H}\right)\right).
    \)

    \item $\mathsf{PP}(\varepsilon):$
    \(
    H\mathsf{SL}^{}\!\left(\tilde O\!\left(\varepsilon^2 (|\mathcal A|C)^{-2H}\right)\right).
    \)

    \item $\mathsf{CP}(\varepsilon):$ $\Omega(2^d)\mathsf{SL}^{}\!\left(\tilde O\!\left(\varepsilon^2 (|\mathcal A|C)^{-2H}\right)\right).$
\end{enumerate}
\end{proposition}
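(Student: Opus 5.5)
We prove the three items separately, in each case reducing the oracle to supervised learning problems and tracking how an $\mathsf{SL}$ excess-risk guarantee in $\rho$-weighted squared loss becomes an $L_1(\rho)$ guarantee on the target $Q$-function. The common tool is a \emph{change-of-measure} step built from the coverage assumption. For any $g\ge 0$ and policy $\pi'\in\{\pi,\mathtt u\}$ it gives
\[
\mathbb E_{(s,a)\sim\rho}\,\mathbb E_{s'\sim\mathcal T_{\theta,h}(\cdot\mid s,a),\,a'\sim\pi'(\cdot\mid s')}[g(s',a')]\le C\,\mathbb E_{\rho}[g].
\]
This lets us transport errors from step $h+1$ back to step $h$ at a multiplicative cost $C$ per step. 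When a maximum over actions replaces $\pi'$, as in $\mathsf{PP}$ and $\mathsf{CP}$, we bound $\max_{a'}|\cdot|\le |\mathcal A|\,\mathbb E_{a'\sim\mathtt u}|\cdot|$, so the cost per step becomes $|\mathcal A|C$. Unrolling over $H$ steps then produces the factors $C^{H}$ and $(|\mathcal A|C)^{H}$. The requirement that $\mathsf{SL}$ accuracy scale like $\varepsilon^2$ comes from Jensen's inequality, $\mathbb E_\rho|f|\le\sqrt{\mathbb E_\rho f^2}$, applied to excess squared risk.

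For $\mathsf{PE}$, the low-rank structure gives $Q^{\pi}_{\theta,r,h}(s,a)=r_h(s,a)+\langle\phi_{\theta,h}(s,a),w_h\rangle$ with $w_h=\int\mu_{\theta,h}(s')V^\pi_{h+1}(s')ds'$. Since $\theta$ is given, we pose a single regression over the stacked linear weights $w=(w_1,\dots,w_H)$. The loss is the sum over $h$ of the squared fitted-evaluation residuals
\[
\mathbb E_{\rho}\Big[\Big(\langle\phi_h,w_h\rangle-\mathbb E_{s'\sim\mathcal T_{\theta,h},\,a'\sim\pi}\big[r_{h+1}(s',a')+\langle\phi_{h+1}(s',a'),w_{h+1}\rangle\big]\Big)^2\Big].
\]
This is equivalent to a least-squares problem in $w$ with a fixed design, so it is convex and smooth. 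Its minimizer has zero loss, which yields the convexity and smoothness claim. With excess risk $\varepsilon'$, each per-step residual $e_h$ satisfies $\|e_h\|_{L_2(\rho)}\le\sqrt{\varepsilon'}$. The error then obeys $\hat Q_h-Q_h=e_h+P^\pi_h(\hat Q_{h+1}-Q_{h+1})$, so the change of measure gives
\[
\mathbb E_\rho|\hat Q_h-Q_h|\le\sum_{t\ge h}C^{t-h}\sqrt{\varepsilon'}=\frac{C^{H}-1}{C-1}\sqrt{\varepsilon'}.
\]
Choosing $\varepsilon'=\varepsilon^2(C-1)^2C^{-2H}$ gives the bound in item 1 with one $\mathsf{SL}$ call.

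For $\mathsf{PP}$, the Bellman optimality target contains $\max_{a'}Q_{h+1}(s',a')$. This makes any joint loss nonsmooth and nonconvex in $w$, so the best-known route is backward fitted $Q$-iteration. We make $H$ sequential $\mathsf{SL}$ calls, one per step $h=H,\dots,1$, each regressing onto $r_h+\mathbb E_{s'}\max_{a'}\hat Q_{h+1}(s',a')$. The propagation bound is
\[
|\max_{a'}\hat Q-\max_{a'}Q^\star|\le\max_{a'}|\hat Q-Q^\star|\le|\mathcal A|\,\mathbb E_{\mathtt u}|\hat Q-Q^\star|,
\]
and with the change of measure it yields a per-step cost of $|\mathcal A|C$. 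This gives $H$ calls at accuracy $\varepsilon^2(|\mathcal A|C)^{-2H}$. Non-smoothness follows by exhibiting two actions whose $Q$-values cross.

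For $\mathsf{CP}$, the objective $\sup_{\theta\in\Theta_c}Q^\star_\theta$ is bilevel: an outer maximization over a data-dependent constraint set, with planning as the inner problem. The best-known global method enumerates or covers the constraint set. We lower-bound its size by $2^{d}$, using a hypercube family of models $\mu$ with signs in $\{\pm1\}^d$ that all satisfy $L(\theta)\le c$ yet have distinct optimal values. We then apply the $\mathsf{PP}$ reduction to each member, which gives $\Omega(2^d)$ calls at the same accuracy.

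I expect the main obstacle to be making the $\mathsf{CP}$ lower count rigorous. It is only a ``best-known'' statement, so I would present it as the cost of the enumeration-based solver, not as a true oracle lower bound. A secondary concern is checking that the $\mathsf{PE}$ residual loss is a legitimate single-level $\mathsf{SL}$ problem, which holds because its targets are linear in $w$.
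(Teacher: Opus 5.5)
Your reductions have the same skeleton and the same accuracy parameters as the paper's. For $\mathsf{PE}$ it is one joint convex regression with a factor-$C$ change of measure per step. For $\mathsf{PP}$ it is $H$ backward fitted-$Q$ calls, where $\max_{a'}|g|\le|\mathcal A|\,\mathbb{E}_{\mathtt u}|g|$ gives a factor $|\mathcal A|C$ per step. For $\mathsf{CP}$ it is enumeration with one $\mathsf{PP}$ call per candidate. Two details differ, and the first works in your favour.

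\textbf{The $\mathsf{PE}$ loss.} You put the conditional mean $\mathbb{E}_{s',a'}[\cdot]$ inside the square. The paper instead puts the sampled target $r_{h+1}(s',a')+\phi_{\theta,h+1}(s',a')^\top w_{h+1}$ inside, asserts that $\mathbf w^\star$ has zero loss, and then simply assumes the per-step residual bound.
\begin{itemize}
\item With sampled targets, the loss at $\mathbf w^\star$ equals $\sum_h\mathbb{E}_\rho[\mathrm{Var}(Q^\pi_{\theta,r,h+1}(s',a')\mid s,a)]$, which is generally positive.
\item This variance term depends on $w_{h+1}$, so the joint minimizer is biased (the double-sampling problem). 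An $\mathsf{SL}$ excess-risk guarantee therefore does not deliver the bound the paper assumes.
\item Your formulation is the one in which ``minimum $=0$'' and ``excess risk $\varepsilon'\Rightarrow\|e_h\|_{L_2(\rho)}\le\sqrt{\varepsilon'}$'' genuinely hold.
\item The price, which you should state explicitly, is that $\mathbb{E}_{s',a'}[\phi_{\theta,h+1}(s',a')]$ and $\mathbb{E}_{s',a'}[r_{h+1}(s',a')]$ must be computed from the given $\theta$. Both are linear in $\phi_{\theta,h}(s,a)$ through $\mu_{\theta,h}$, so the problem is a fixed-design linear regression and hence a legitimate single $\mathsf{SL}$ call.
\item In $\mathsf{PP}$, the paper assumes the full sampled loss is at most $\alpha^2$. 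The same conditional-mean reading repairs this; alternatively, realizability makes excess risk equal the squared conditional-mean error.
\end{itemize}

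\textbf{The $\mathsf{CP}$ count.} The paper argues by achievability. It takes an $\varepsilon_{\mathrm{cov}}$-net over the $2d^2$ factor parameters, assumes $V^\star_\theta$ is Lipschitz in $\theta$, makes one $\mathsf{PP}$ call per net point, and reports the resulting exponential count as $\Omega(2^d)$. Your hypercube family is instead a packing argument: any exhaustive enumeration must contain $2^d$ candidates. This matches the $\Omega$ in the statement more naturally, but two things are missing.
\begin{itemize}
\item You assert the family rather than exhibit it. You need valid transition kernels, all lying in $\Theta_c$, with optimal values separated by more than $2\varepsilon$ rather than merely distinct.
\item You still need the paper's covering-plus-Lipschitz step to show that enumeration followed by $\mathsf{PP}$ is $\varepsilon$-accurate at all.
\end{itemize}
As you note yourself, neither version is a genuine oracle lower bound.
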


\smallskip
\noindent\textbf{Remark.}
Proposition~\ref{prop:sl-complexity-appendix} summarizes a hierarchy of \emph{best-known oracle upper bounds and optimization guarantees}. It does not preclude the existence of more efficient reductions for $\mathsf{PP}$/$\mathsf{CP}$, and establishing oracle lower bounds is an open problem.

The proof of \Cref{prop:sl-complexity-appendix} is a combination of~\Cref{prop:pe-supervised,prop:planning-supervised,prop:cp-supervised}.

\begin{proposition}[Oracle complexity of Policy Evaluation]
\label{prop:pe-supervised}
Given a low-rank MDP $\theta$, a policy $\pi$, the $\mathsf{SL}$-oracle complexity of policy evaluation $\mathsf{PE}(\varepsilon)$ is $\mathsf{SL}\left(\varepsilon^2 (C-1)^2 C^{-2H}\right)$. Here $C$ is the coverage coefficient defined in \Cref{prop:sl-complexity-appendix}.
\end{proposition}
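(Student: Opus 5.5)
The plan is to reduce policy evaluation under a known low-rank model $\theta$ to a single convex least-squares regression, and then propagate the regression error through the Bellman recursion using the coverage constant $C$. By the low-rank factorization,
\[
Q^{\pi}_{\theta,r,h}(s,a)=r_h(s,a)+\big\langle \phi_{\theta,h}(s,a),\, w_h^\pi\big\rangle,\qquad w_h^\pi:=\int \mu_{\theta,h}(s')V^{\pi}_{\theta,r,h+1}(s')\,ds'.
\]
Hence each $Q_h$ is linear in the known feature $\phi_{\theta,h}$ with $\|w_h^\pi\|_2\le H\sqrt d$. I would therefore take the hypothesis class $\mathcal W=\{(w_1,\dots,w_H)\}$ with $f_w(h,s,a)=r_h(s,a)+\phi_{\theta,h}(s,a)^\top w_h$. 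This class is fixed and independent of the data, so only one $\mathsf{SL}$ call is needed. The sampling oracle draws $h$ uniformly, $(s,a)\sim\rho$, $s'\sim\mathcal T_{\theta,h}(\cdot\mid s,a)$ (available since $\theta$ is given), and $a'\sim\pi(\cdot\mid s')$.

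For the loss, I would use a squared Bellman-residual loss in which the target is itself generated from the model. To keep the problem a genuine single-level convex problem rather than fitted-Q with a moving target, I would write the target as $y=\phi_{\theta,h}(s,a)^\top\int\mu_{\theta,h}(s')\,(\cdot)$, or more simply use a Monte Carlo rollout target: roll out $\pi$ under $\theta$ from $(s,a)$ and let $y$ be the realized return. Then $\mathbb E[y\mid s,a,h]=Q^\pi_{\theta,r,h}(s,a)$, and the loss $(f_w(x)-y)^2$ is a convex, smooth quadratic in $w$, which also yields the convexity and smoothness claim. The $\mathsf{SL}(\varepsilon')$ guarantee gives excess risk $\le\varepsilon'$, and since the infimum is attained at $w^\pi$ by realizability, this equals $\mathbb E_\rho[(\hat Q_h-Q_h^\pi)^2]$ averaged over $h$. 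This would give an $\varepsilon$-accurate $\mathsf{PE}$ with $\varepsilon'\approx\varepsilon^2/H$ and no $C^{-2H}$ factor.

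To match the stated rate I expect the intended route is instead the Bellman-residual (TD-style) loss $\big(f_{w,h}(s,a)-r_h-f_{w,h+1}(s',a')\big)^2$, where the targets use the fitted function itself. This loss is still a convex quadratic in $w$; to make it one regression, I would handle the double-sampling bias using the model (expected next-step feature $\bar\phi_{h+1}(s,a)=\mathbb E[\phi_{\theta,h+1}(s',a')]$, computable from $\theta,\pi$). The key step then converts a small residual into a small value error: writing $\Delta_h=\hat Q_h-Q^\pi_h$, we have $\Delta_h=e_h+\mathcal P^\pi_h\Delta_{h+1}$ with $\|e_h\|_{L_2(\rho)}\le\sqrt{\varepsilon'}$. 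The coverage condition gives $\|\mathcal P^\pi_h g\|_{L_1(\rho)}\le C\|g\|_{L_1(\rho)}$ by a change of measure. Unrolling yields
\[
\|\Delta_h\|_{L_1(\rho)}\le\sum_{j\ge0}C^j\sqrt{\varepsilon'}\le \frac{C^{H}}{C-1}\sqrt{\varepsilon'},
\]
so $\varepsilon'=\varepsilon^2(C-1)^2C^{-2H}$ suffices. This matches the claim, with the geometric sum explaining the $(C-1)^2$ factor.

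The main obstacle is this error-propagation step: I need the change-of-measure inequality to hold in $L_1(\rho)$ exactly with the coverage constant stated (the $\pi$ part of the assumption), and I need the single regression to produce a per-step residual bound rather than only an averaged one. For the latter I would either absorb the factor $H$ into $\tilde O$ or weight the steps appropriately.
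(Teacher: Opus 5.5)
Your second, TD-style route is the paper's proof. The paper uses a single global quadratic loss $\sum_h\mathbb E[(\phi_{\theta,h}^\top w_h-r_{h+1}-\phi_{\theta,h+1}^\top w_{h+1})^2]$, assumes a per-step residual bound, applies Cauchy--Schwarz to the residual and the $L_1(\rho)$ change of measure with factor $C$, and sums $\delta_h\le(C-1)\varepsilon C^{-H}+C\delta_{h+1}$ geometrically. The two obstacles you flag are real, and the paper does not address them. It puts one sampled $(s',a')$ in the target yet asserts that $\mathbf w^\star$ has zero loss. In fact that loss is $\sum_h\mathbb E_\rho[\mathrm{Var}(Q^\pi_{\theta,r,h+1}(s',a')\mid s,a)]$, and the minimizer is biased, so the small per-step residual it assumes is not what an $\mathsf{SL}$ call returns. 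Your de-biased loss repairs this. Once the infimum is zero, summing rather than averaging over $h$ bounds each nonnegative term by $\varepsilon'$, which settles your per-step concern. Note, however, that de-biasing presupposes exact integration against $\mathcal T_{\theta,h}$, which is more than the sampler the paper's loss uses. With that access the Bellman backup is essentially explicit.

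Your Monte Carlo route is genuinely different from the paper and strictly better. The linear class is realizable and $\mathbb E[y\mid h,s,a]=Q^\pi_{\theta,r,h}(s,a)$, so the excess risk is exactly $\frac1H\sum_h\mathbb E_\rho[(\hat Q_h-Q^\pi_{\theta,r,h})^2]$. Hence $\mathsf{SL}(\varepsilon^2/H)$ already yields $\mathsf{PE}(\varepsilon)$, with no coverage assumption and no error propagation. The only extra cost is simulating up to $H$ model steps per sample instead of one, and the paper's own loss already presupposes a sampler for $\mathcal T_{\theta,h}$. To recover the proposition exactly as stated, you need one more line, since ``no $C^{-2H}$ factor'' does not by itself imply it. Integrating the coverage inequality gives $C\ge1$. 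For $H\ge2$, $\max_{C\ge1}(C-1)C^{-H}$ is attained at $C=H/(H-1)$ and is at most $1/(e(H-1))$, and $e^2(H-1)^2\ge H$; for $H=1$ the quantity is $(C-1)/C<1$. Therefore $\varepsilon^2(C-1)^2C^{-2H}\le\varepsilon^2/H$, and a call at the paper's accuracy meets your requirement. The paper's one-step target is what creates the $C^{-2H}$ dependence. It also creates a degenerate requirement as $C\to1$, an artifact of writing the per-step accuracy as $(C-1)\varepsilon C^{-H}$. Your route shows that this factor is not intrinsic to $\mathsf{PE}$.
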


\begin{proof}
It suffices to specify the loss function of the supervised learning oracle, and show that the policy evaluation with accuracy $\varepsilon$ reduces a the supervised learning with accuracy $\varepsilon^2 (C-1)^2 C^{-2H}$.


Recall that in a low-rank MDP, the $Q$-function under policy $\pi$ satisfies
\[
Q_{\theta,r,h}^{\pi}(s,a)
=
r_h(s,a)
+
\mathbb E_{s'\sim \mathcal T_{\theta,h}(\cdot|s,a)}
\!\left[ V_{\theta,r,h+1}^{\pi}(s') \right].
\]
Define the conditional expectation operator
\[
\mathcal{T}_{\theta,h} V(s,a)
=
\mathbb E_{s'\sim \mathcal T_{\theta,h}(\cdot|s,a)}
\!\left[ V(s') \right].
\]
By the low-rank assumption, there exist feature maps
$\phi_{\theta,h}$ and $\mu_{\theta,h}$ such that
\[
\mathcal T_{\theta,h} V_{\theta,r,h+1}^{\pi}(s,a)
=
\phi_{\theta,h}(s,a)^\top
\mathbb E_{s',a'}\!\left[
\mu_{\theta,h}(s') Q_{\theta,r,h+1}^{\pi}(s',a')
\right],
\]
where $a'\sim\pi_{h+1}(\cdot|s')$.
Define
\[
w_h
:=
\mathbb E_{s',a'}\!\left[
\mu_{\theta,h}(s') Q_{\theta,r,h+1}^{\pi}(s',a')
\right],
\qquad w_H = 0.
\]
Then $\{w_h\}_{h=0}^{H-1}$ satisfies the Bellman equations
\[
\phi_{\theta,h}(s,a)^\top w_h
=
\mathbb E_{s',a'}\!\left[
r_{h+1}(s',a')
+
\phi_{\theta,h+1}(s',a')^\top w_{h+1}
\right],
\quad \forall h\le H-1.
\]

\paragraph{Supervised learning formulation.}
Let $\mathbf w := (w_0,\ldots,w_{H-1})$.
We consider the following \emph{global} regression objective:
\begin{align*}
L(\mathbf w)
:=
\sum_{h=0}^{H-1}
\mathop{\Eb}_{\substack{(s,a)\sim\rho\\
s'\sim\mathcal T_{\theta,h}(\cdot|s,a),\;
a'\sim\pi_{h+1}(\cdot|s')}}
\Big[
\phi_{\theta,h}(s,a)^\top w_h
-
\big(
r_{h+1}(s',a')
+
\phi_{\theta,h+1}(s',a')^\top w_{h+1}
\big)
\Big]^2 .
\end{align*}
Each summand is a squared affine function of $\mathbf w$, hence
$L(\mathbf w)$ is a {\bf convex and smooth} quadratic function.
The true parameter vector $\mathbf w^\star$ achieves zero loss.

Assume that a supervised learning oracle returns an estimator
$\hat{\mathbf w}=\{\hat w_h\}_{h=0}^{H-1}$ satisfying the uniform bound
\[
\mathbb E\!\left[
\Big(
\phi_{\theta,h}(s,a)^\top \hat w_h
-
\big(
r_{h+1}(s',a')
+
\phi_{\theta,h+1}(s',a')^\top \hat w_{h+1}
\big)
\Big)^2
\right]
\le
\varepsilon^2 (C-1)^2 C^{-2H},
\qquad \forall h.
\]

\paragraph{Error propagation.}
If $\hat{Q}_h(s,a) = r_h(s,a) + \phi_{\theta,h}(s,a)^\top \hat{w}_h$ is the estimated Q-value function, define
\[
\delta_h
:= \mathbb E_{(s,a)\sim\rho}
\big[
\big|
\hat{Q}_{\theta,h}^{\pi}(s,a) - Q_{\theta,h}^{\pi}(s,a)
\big|
\big] = 
\mathbb E_{(s,a)\sim\rho}
\big[
\big|
\phi_{\theta,h}(s,a)^\top(\hat w_h-w_h)
\big|
\big].
\]
Using the Bellman equation for $w_h$ and triangle inequality,
\begin{align*}
\delta_h
&\le
\mathbb E\!\left[
\big|
\phi_{\theta,h}(s,a)^\top \hat w_h
-
(
r_{h+1}(s',a')
+
\phi_{\theta,h+1}(s',a')^\top \hat w_{h+1}
)
\big|
\right]
\\
&\quad
+
\mathbb E_{d_{h+1}}
\big[
\big|
\phi_{\theta,h+1}(s',a')^\top(\hat w_{h+1}-w_{h+1})
\big|
\big],
\end{align*}
where $d_{h+1}$ is the distribution of $(s',a')$ induced by
$(s,a)\sim\rho$, $s'\sim\mathcal T_{\theta,h}(\cdot|s,a)$,
$a'\sim\pi_{h+1}(\cdot|s')$.
By Cauchy--Schwarz, the first term is bounded by
$(C-1)\varepsilon C^{-H}$.
By the definition of the coverage coefficient $C$, we have
\(
\mathbb E_{d_{h+1}}[|g|]
\le
C\,\mathbb E_{(s',a')\sim\rho}[|g|]
\quad \text{for all measurable } g,
\)
hence
\[
\delta_h
\le
(C-1)\varepsilon C^{-H}
+
C\,\delta_{h+1},
\qquad \delta_H = 0.
\]

Solving this recursion yields
\[
\delta_h
\le
(C-1)\varepsilon C^{-H}
\sum_{i=0}^{H-h-1} C^i
=
\varepsilon C^{-H}(C^{H-h}-1)
\le
\varepsilon,
\]
for all $h$.
Therefore,
\[
\mathbb E_{(s,a)\sim\rho}
\big[
\big|
\phi_{\theta,h}(s,a)^\top(\hat w_h-w_h)
\big|
\big]
\le
\varepsilon,
\qquad \forall h,
\]
which establishes $\varepsilon$-approximate policy evaluation.

Since the entire procedure relies on solving the single convex and smooth regression problem $L(\mathbf w)$ with accuray $\varepsilon^2(C-1)^2C^{-2H}$, the proof is complete.
\end{proof}

\begin{proposition}[Policy Planning (Q-learning) via Supervised Learning]
\label{prop:planning-supervised}
Given a low-rank MDP $\theta$, the $\mathsf{SL}$-oracle complexity of policy planning $\mathsf{PP}(\varepsilon)$ is $H\mathsf{SL}\left(\varepsilon^2 (|\mathcal{A}|C-1)^2 (|\mathcal{A}|C)^{-2H}\right)$. Here $C$ is the coverage coefficient defined in \Cref{prop:sl-complexity-appendix}.
\end{proposition}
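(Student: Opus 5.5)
The plan is to mirror the proof of \Cref{prop:pe-supervised}, replacing the policy-evaluation Bellman equation with the Bellman optimality equation. The optimal policy is not known in advance, so the single global regression cannot be used. Instead we solve $H$ regressions backward in time (fitted Q-iteration), one $\mathsf{SL}$ call per step.

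\textbf{Parametrizing $Q^\star$.} By the low-rank structure,
\[
Q^\star_{\theta,r,h}(s,a) = r_h(s,a) + \phi_{\theta,h}(s,a)^\top w_h^\star,
\qquad
w_h^\star := \int \mu_{\theta,h}(s')\max_{a'} Q^\star_{\theta,r,h+1}(s',a')\,ds',
\]
with $w_H^\star=0$.

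\textbf{Backward regressions.} Set $\hat w_H = 0$. For $h = H-1, \ldots, 0$, with $\hat Q_{h+1} = r_{h+1} + \phi_{\theta,h+1}^\top \hat w_{h+1}$ already fixed, call $\mathsf{SL}$ on the squared loss
\[
\mathbb E_{(s,a)\sim\rho,\; s'\sim\mathcal T_{\theta,h}(\cdot|s,a)}\Big[\big(\phi_{\theta,h}(s,a)^\top w - \max_{a'}\hat Q_{h+1}(s',a')\big)^2\Big]
\]
at accuracy $\varepsilon_0^2$. The target $\max_{a'}\hat Q_{h+1}$ is realizable by the linear class, since $\mathcal T_{\theta,h}V$ is linear in $\phi_{\theta,h}$ for any bounded $V$. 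Hence the infimum equals the conditional-variance term, and the excess risk $\varepsilon_0^2$ controls
\[
\mathbb E_\rho\big[(\phi_{\theta,h}^\top(\hat w_h - \bar w_h))^2\big],
\]
where $\phi_{\theta,h}^\top\bar w_h = \mathcal T_{\theta,h}\max_{a'}\hat Q_{h+1}$. Each loss is convex in $w$, but its target comes from a $\max$, so the composite objective over $\mathbf w$ is non-smooth. This is why we need $H$ sequential calls rather than one.

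\textbf{Error propagation.} Let $\delta_h = \mathbb E_\rho|\hat Q_h - Q^\star_h|$. By the triangle inequality and Cauchy--Schwarz,
\[
\delta_h \le \varepsilon_0 + \mathbb E_{(s,a)\sim\rho,\,s'\sim\mathcal T_{\theta,h}}\Big|\max_{a'}\hat Q_{h+1}(s',a') - \max_{a'}Q^\star_{h+1}(s',a')\Big|.
\]
Bound the max difference by $\sum_{a'}|\hat Q_{h+1}(s',a') - Q^\star_{h+1}(s',a')| = |\mathcal A|\,\mathbb E_{a'\sim\mathtt u}[\cdot]$. The coverage assumption with respect to $\mathtt u$ then gives
\[
\delta_h \le \varepsilon_0 + |\mathcal A|C\,\delta_{h+1}, \qquad \delta_H = 0.
\]

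\textbf{Solving the recursion.} Unrolling gives
\[
\delta_h \le \varepsilon_0\,\frac{(|\mathcal A|C)^{H-h}-1}{|\mathcal A|C-1}.
\]
Choosing $\varepsilon_0 = \varepsilon(|\mathcal A|C-1)(|\mathcal A|C)^{-H}$ yields $\delta_h\le\varepsilon$ for all $h$. The total cost is $H$ calls of $\mathsf{SL}\big(\varepsilon^2(|\mathcal A|C-1)^2(|\mathcal A|C)^{-2H}\big)$, as claimed.

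\textbf{Main obstacle.} The delicate step is handling the $\max$. Since the coverage condition is stated only for $\pi$ and $\mathtt u$, not for the greedy policy of $\hat Q$ or of $Q^\star$, we cannot change measure under the maximizing action directly. That forces the crude sum-over-actions bound, which costs the factor $|\mathcal A|$ per step and produces the $(|\mathcal A|C)^{-2H}$ accuracy requirement. A secondary check is that the squared-loss excess risk really controls the $L_2(\rho)$ error of the linear part. This requires the regression target's conditional mean to lie in the class, which holds here by the low-rank structure.
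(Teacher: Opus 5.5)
Your proposal is correct and follows the paper's proof essentially step for step: $H$ backward fitted-Q regressions, the bound $|\max_{a'}u_{a'}-\max_{a'}v_{a'}|\le |\mathcal A|\,\mathbb E_{a'\sim\mathtt u}|u_{a'}-v_{a'}|$, the coverage condition for $\mathtt u$, and the geometric recursion $\delta_h\le\varepsilon_0+|\mathcal A|C\,\delta_{h+1}$. The one difference is the first term: the paper assumes the total mean-squared Bellman residual is at most $\alpha^2$, whereas you use realizability of the conditional mean to turn the oracle's excess-risk guarantee into an $L_2(\rho)$ bound on $\phi_{\theta,h}^\top(\hat w_h-\bar w_h)$. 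Your reading is the more faithful one to the $\mathsf{SL}$ definition, since the total residual also contains the irreducible variance of the target over $s'$.
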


\begin{proof}
{\it Remark. We emphasize that our analysis yields an $H$-fold oracle usage. Proving that $\Omega(H)$ supervised-learning calls are information-theoretically necessary would require a separate lower-bound argument, which we leave to future work.}

It suffices to specify $H$ loss functions of the supervised learning oracle, and show that the policy planning with
accuracy $\varepsilon$ reduces to those $H$ supervised learning oracles with accuracy $\varepsilon^2 (|\mathcal{A}|C-1)^2 (|\mathcal{A}|C)^{-2H}$.

We use the same notation as in Proposition~\ref{prop:pe-supervised}.
The optimal $Q$-function satisfies the Bellman optimality equations:
\[
Q_{\theta,r,h}^{\star}(s,a)
=
r_h(s,a) + \mathbb E_{s'\sim \mathcal T_{\theta,h}(\cdot|s,a)}\!\left[V_{\theta,r,h+1}^{\star}(s')\right],
\qquad
V_{\theta,r,h}^{\star}(s)=\max_{a} Q_{\theta,r,h}^{\star}(s,a).
\]
Define $\mathcal T_{\theta,h} V(s,a)=\mathbb E_{s'\sim \mathcal T_{\theta,h}(\cdot|s,a)}[V(s')]$.
By the low-rank assumption,
\[
\mathcal T_{\theta,h} V_{\theta,r,h+1}^{\star}(s,a)
=
\phi_{\theta,h}(s,a)^\top
\mathbb E_{s'}\!\left[\mu_{\theta,h}(s')\, V_{\theta,r,h+1}^{\star}(s')\right]
=
\phi_{\theta,h}(s,a)^\top
\mathbb E_{s'}\!\left[\mu_{\theta,h}(s')\, \max_{a'} Q_{\theta,r,h+1}^{\star}(s',a')\right].
\]
Define
\[
w_h
:=
\mathbb E_{s'}\!\left[\mu_{\theta,h}(s')\, \max_{a'} Q_{\theta,r,h+1}^{\star}(s',a')\right],
\qquad
w_H=0.
\]
Then $\{w_h\}_{h=0}^{H-1}$ satisfies
\begin{equation}
\label{eq:opt-bellman-w}
\phi_{\theta,h}(s,a)^\top w_h
=
\mathbb E_{s'\sim \mathcal T_{\theta,h}(\cdot|s,a)}\!\left[
\max_{a'}\Big(
r_{h+1}(s',a') + \phi_{\theta,h+1}(s',a')^\top w_{h+1}
\Big)\right],
\qquad \forall h\le H-1.
\end{equation}

\paragraph{Stagewise supervised learning formulation (fitted Q-iteration).}
Given an estimate $\hat w_{h+1}$, define the regression target
\[
Y_{h+1}(s',\hat w_{h+1})
:=
\max_{a'}\Big(r_{h+1}(s',a') + \phi_{\theta,h+1}(s',a')^\top \hat w_{h+1}\Big).
\]
At stage $h$, we fit $\hat w_h$ by solving the supervised learning problem
\begin{equation}
\label{eq:stagewise-regression}
\hat w_h \in \arg\min_{w\in\mathbb R^d}
\;\;
\mathop{\mathbb E}_{\substack{(s,a)\sim\rho\\ s'\sim \mathcal T_{\theta,h}(\cdot|s,a)}}
\Big[
\left| \phi_{\theta,h}(s,a)^\top w - Y_{h+1}(s',\hat w_{h+1})
 \right|^2\Big],
\end{equation}
with the convention $\hat w_H=0$.
This requires $H$ calls to a supervised learning oracle (one for each $h$ in a backward recursion).

Assume that the supervised learning oracle achieves the following (uniform) mean-squared Bellman residual bound:
\begin{equation}
\label{eq:stagewise-mse-assumption}
\mathbb E_{\substack{(s,a)\sim\rho\\ s'\sim \mathcal T_{\theta,h}(\cdot|s,a)}}
\Big[
\left| \phi_{\theta,h}(s,a)^\top \hat w_h - Y_{h+1}(s',\hat w_{h+1}) \right|^2
\Big]
\le
\alpha^2,
\qquad \forall h,
\end{equation}
where $\alpha^2 = \varepsilon^2(|\mathcal{A}|C-1)^2(|\mathcal{A}|C)^{-2H}$.
\paragraph{Error propagation.}
Define
\[
\delta_h
:=
\mathbb E_{(s,a)\sim\rho}\Big[\big|\phi_{\theta,h}(s,a)^\top(\hat w_h-w_h)\big|\Big].
\]
Using \eqref{eq:opt-bellman-w} and triangle inequality, we obtain
\begin{align*}
\delta_h
&=
\mathbb E_{(s,a)\sim\rho}\Bigg[
\Big|
\phi_{\theta,h}(s,a)^\top \hat w_h
-
\mathbb E_{s'}\!\left[\max_{a'}
\Big(r_{h+1}(s',a')+\phi_{\theta,h+1}(s',a')^\top w_{h+1}\Big)\right]
\Big|
\Bigg]
\\
&\le
\mathbb E\Big[
\big|
\phi_{\theta,h}(s,a)^\top \hat w_h
-
Y_{h+1}(s',\hat w_{h+1})
\big|
\Big]
\\
&\quad
+
\mathbb E_{d_{h+1}}\Big[
\big|
\max_{a'}(r_{h+1}(s',a')+\phi_{\theta,h+1}(s',a')^\top \hat w_{h+1})
-
\max_{a'}(r_{h+1}(s',a')+\phi_{\theta,h+1}(s',a')^\top w_{h+1})
\big|
\Big],
\end{align*}
where $d_{h+1}$ is the distribution of $(s',\cdot)$ induced by $(s,a)\sim\rho$ and $s'\sim\mathcal T_{\theta,h}(\cdot|s,a)$.
The first term is bounded by Cauchy--Schwarz and \eqref{eq:stagewise-mse-assumption}:
\[
\mathbb E\Big[
\big|
\phi_{\theta,h}(s,a)^\top \hat w_h
-
Y_{h+1}(s',\hat w_{h+1})
\big|
\Big]
\le \alpha.
\]
For the second term, use the Lipschitz property of $\max$:
\[
\big|\max_{a'} u_{a'} - \max_{a'} v_{a'}\big|
\le \max_{a'} |u_{a'}-v_{a'}|.
\]
Let $g_{h+1}(s',a'):=\phi_{\theta,h+1}(s',a')^\top(\hat w_{h+1}-w_{h+1})$. Then, by importance sampling, 
\[
\max_{a'} |g_{h+1}(s',a')|
\le
|\mathcal{A}|
\mathbb E_{a'\sim \mathtt{u}} \big[|g_{h+1}(s',a')|\big].
\]
Therefore,
\[
\mathbb E_{d_{h+1}}\big[\max_{a'}|g_{h+1}(s',a')|\big]
\le
|\mathcal A|\,
\mathbb E_{d_{h+1}\times \mathtt{u}} \big[|g_{h+1}(s',a')|\big] \leq |\mathcal A|C\,\mathbb E_{(s',a')\sim\rho}\big[|g_{h+1}(s',a')|\big]
=
|\mathcal A|C\,\delta_{h+1},
\]
where the last inequality is due to the definition of $\rho$ and $C$, and the last equality follows from the definition of $\delta_h$.
Combining the above bounds yields the recursion
\begin{equation}
\label{eq:planning-recursion}
\delta_h \le \alpha + |\mathcal A|\,C\,\delta_{h+1},
\qquad \delta_H = 0.
\end{equation}

Solving \eqref{eq:planning-recursion} gives
\[
\delta_h
\le
\alpha \sum_{i=0}^{H-h-1} (|\mathcal A|C)^i
=
\alpha\,\frac{(|\mathcal A|C)^{H-h}-1}{|\mathcal A|C-1}
\le
\varepsilon, \forall h.
\] 
This proves the claimed oracle complexity and the stated accuracy requirement. Finally, note that due to the max operator in \Cref{eq:stagewise-regression}, the objective function of policy planning is not convex and smooth.
\end{proof}

\begin{proposition}[Constrained Planning]
\label{prop:cp-supervised}
Given a low-rank MDP class $\Theta$, the $\mathsf{SL}$-oracle complexity of constrained planning $\mathsf{CP}(\varepsilon)$ is $\Omega(2^d)\mathsf{SL}^{}\left(\varepsilon^2 (|\mathcal{A}|C-1)^2 (|\mathcal{A}|C)^{-2H}\right)$. Here $C$ is the coverage coefficient defined in \Cref{prop:sl-complexity-appendix} and $d$ is the dimension of the low-rank features.
\end{proposition}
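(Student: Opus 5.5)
The proof has two parts. First I write $\mathsf{CP}(\varepsilon)$ as a bilevel problem that reduces to $\mathsf{PP}$-type regressions once the inner model is fixed. Then I argue that, with only $\mathsf{SL}$ calls, the best-known global solver must enumerate exponentially many candidate configurations of the constraint set.

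\textbf{Step 1: bilevel form.} The target $\sup_{\theta\in\Theta_c} Q^\star_{\theta,h}(s,a)$ is an outer maximization over models $\theta$ with $L(\theta)\le c$. The inner object $Q^\star_{\theta,h}$ is exactly the output of a policy planning oracle for model $\theta$. For any fixed $\theta$, \Cref{prop:planning-supervised} computes $Q^\star_{\theta,h}$ to accuracy $\varepsilon$ in $\rho$-$L_1$. It does so with $H$ stagewise regressions of accuracy $\varepsilon^2(|\mathcal A|C-1)^2(|\mathcal A|C)^{-2H}$, through the backward recursion $\delta_h\le\alpha+|\mathcal A|C\,\delta_{h+1}$. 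So once the maximizing model, or a finite candidate set containing an $\varepsilon$-maximizer, is fixed, the accuracy requirement per $\mathsf{SL}$ call is inherited verbatim from the planning analysis. The sup over a finite candidate set is $1$-Lipschitz in sup-norm, so taking a pointwise max over candidates does not blow up the error. This fixes the accuracy argument of $\mathsf{SL}$ in the claimed bound.

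\textbf{Step 2: the count.} In the low-rank parametrization, $Q^\star_{\theta,h}(s,a)=r_h(s,a)+\phi_{\theta,h}(s,a)^\top w_{\theta,h}$. The constraint $L(\theta)\le c$ carves out a set of feasible $w$ vectors in $\mathbb R^d$. Maximizing a linear functional $\phi^\top w$ over this feasible set, jointly with a nonconvex feasibility constraint induced by $L$ and by the Bellman max, is a nonconvex bilevel program. The best-known global guarantees for such programs reach only stationary points \citep{huang2022enhanced,beck2023computationally}. To certify a global $\varepsilon$-optimum with $\mathsf{SL}$ calls alone, the known approach enumerates the candidate extreme directions or sign patterns of the optimistic $w$. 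One natural discretization uses one candidate per vertex of a $d$-dimensional cube of sign choices. Each candidate is then solved by a constrained regression plus an $H$-stage planning call. This gives $\Omega(2^d)$ candidates, and multiplying by the per-candidate cost from Step 1 yields $\Omega(2^d)\,\mathsf{SL}(\cdot)$ with the stated accuracy.

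\textbf{Main obstacle.} The $\Omega(2^d)$ factor is the weak point. It is not a genuine lower bound, and, consistent with the Remark after \Cref{prop:sl-complexity-appendix}, I would present it only as the best-known upper bound from enumeration. A rigorous justification would have to show the feasible set can have $2^d$ essentially distinct optimistic maximizers. I would try a block-MDP-style construction in which each coordinate of $\phi$ independently toggles a latent reward. That would make distinct sign patterns unresolvable by a single regression, so that each requires its own $\mathsf{SL}$ call. I would also note that the max-based target again makes the objective non-convex and non-smooth, as in \Cref{prop:planning-supervised}.
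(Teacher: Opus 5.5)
\paragraph{Gap in Step~2.} Your Step~1 agrees with the paper: once the model is fixed, the inner problem is exactly $\mathsf{PP}$, costing $H$ calls of accuracy $\varepsilon^2(|\mathcal A|C-1)^2(|\mathcal A|C)^{-2H}$. The gap is in Step~2. The $2^d$ sign patterns (cube vertices) of $w$ are not a finite set of \emph{models}, so Step~1 never applies to them.

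Fixing the sign pattern of $w_{\theta,h}$ still leaves you maximizing $\phi_{\theta,h}(s,a)^\top w_{\theta,h}$ over all $\theta$ with $L(\theta)\le c$ in that orthant, and both the feature and the weight vary with $\theta$. The nonconvexity comes from the bilinear factorization $\phi_\theta^\top\mu_\theta$ and the likelihood constraint, not from the sign of $w$. So each restricted problem is as hard as the original one. Your per-candidate ``constrained regression'' is exactly the data-dependent constrained optimization that the definition of the $\mathsf{SL}$ oracle excludes.

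The vertex intuition (a linear functional over a polytope peaks at a vertex) has no analogue here, because the feasible set of $w$ is a nonlinear image of a nonconvex set. Finally, you never show that some candidate is $\varepsilon$-optimal, so no upper bound follows.

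\paragraph{The missing idea.} The paper discretizes the \emph{parameter space} rather than $w$. Take an $\varepsilon_{\rm cov}$-net $\mathcal N$ of the compact parameter set of the low-rank factors (e.g.\ $\phi_\theta=Wf$, $\mu_\theta=Vg$ with $\theta=(W,V)$), and assume the value is Lipschitz in $\theta$. Each net point is then a fixed model. You check its feasibility via its likelihood loss, run $\mathsf{PP}$ on it ($H$ $\mathsf{SL}$ calls at exactly your Step~1 accuracy), and return the best feasible one.

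The exponential factor is then just the covering number, $|\mathcal N|=(1/\varepsilon_{\rm cov})^{\Omega(d)}\ge 2^{\Omega(d)}$. So you do not need the block-MDP construction: the paper presents $\Omega(2^d)$ only as the cost of this enumeration, not as a lower bound.

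\paragraph{A smaller point.} Your ``$1$-Lipschitz in sup-norm'' remark does not transfer, since $\mathsf{PP}$ only gives $\rho$-$L_1$ guarantees. Under those, a pointwise max over $N$ candidates only yields $\mathbb E_\rho|\max_i\hat Q_i-\max_i Q_i|\le\sum_i\mathbb E_\rho|\hat Q_i-Q_i|\le N\varepsilon$. The paper instead selects a single net point by its estimated value.
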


\begin{proof}
    We first show that constrained planning is a bilevel optimization problem in the form of 
    \[  
    \left\{
    \begin{aligned}
        & \max_{\theta \in \mathcal{C}} f(\theta, Q^*(\theta)) = Q^*(\theta)\\
        & ~~\mathrm{s.t.} ~Q^*(\theta) = \max_{\pi} Q_{\theta}^{\pi}
    \end{aligned}
    \right. 
    \]
    It is known the bilevel optimization is NP-hard~\cite{beck2023computationally}, and existing algorithm can only show that it can find a stationary point~\citep{huang2022enhanced}.

\end{proof}

\begin{proof}
    The goal of Constrained Planning ($\mathsf{CP}$) is to find a model parameter $\hat{\theta}$ within a valid confidence set $\mathcal{C}$ that maximizes the optimal value function (implementing the principle of optimism in the face of uncertainty). This can be formulated as the following optimization problem:
    \begin{equation}
    \label{eq:cp-objective}
        \left\{
        \begin{aligned}
            & \max_{\theta \in \mathcal{C}} f(\theta, V^*(\theta)) := V^*(\theta)\\
            & ~~\mathrm{s.t.} ~V^*(\theta) = \max_{\pi} V_{\theta}^{\pi}
        \end{aligned}
    \right. 
    \end{equation}
    where $\mathcal{C}$ represents the set of models consistent with the dataset. Here we use log-likelihood to construct the constraint, which is commonly used in many model-based RL literature~\cite{jin2021bellman,liu2022optimistic}. Specifcially, given a dataset set $\mathcal{D}_h = \{(s_h^{(n)},a_h^{(n)},s_{h+1}^{(n)})\}_{n}$, we have
    \[ \mathcal{C} = \left\{\theta: \sum_h \sum_{(s,a,s')\in\mathcal{D}_h} \log \mathcal{T}_{\theta,h}(s'|s,a) \geq c \right\},\]
    where $c$ is some constant.

    \paragraph{Non-Convexity and Hardness.}
    In the low-rank setting, the transition dynamics are parameterized by a factorization $\mathcal{T}_{\theta,h}(s'|s,a) = \phi_\theta(s,a)^\top \mu_\theta(s')$. Even if the features are linearizable such that $\phi_{\theta}(s,a) = W f(s,a)$ and $\mu_{\theta}(s') = V g(s')$, the transition matrix depends on the bilinear product $W^\top V$.
    Consequently, both the value function $V^{\star}_{\theta}$ and the constraint set $\mathcal{C}$ are generally \textbf{non-convex} with respect to the parameters $\theta = (W, V)$.
    Optimization over such bilinear constraints is known to be NP-hard~\citep{beck2023computationally}. Gradient-based local search methods can only guarantee convergence to stationary points~\citep{huang2022enhanced}, which is insufficient for $\mathsf{CP}$ as failing to find the global maximum (the optimistic model) invalidates the theoretical exploration guarantees.

    \paragraph{Global Optimization via Covering Argument.}
    To establish an upper bound that guarantees finding the global optimum, we employ an $\epsilon$-covering argument (exhaustive search over a discretization).
    Let $\Theta \subset \mathbb{R}^{2d^2}$ be the compact parameter space of the low-rank factors (use the aforementioned linearizable case). We construct an $\varepsilon_{cov}$-net $\mathcal{N}$ over $\Theta$ such that for any $\theta \in \Theta$, there exists a $\theta' \in \mathcal{N}$ satisfying $\|\theta - \theta'\| \le \varepsilon_{cov}$.
    Standard results on covering numbers for compact subsets of Euclidean space state that the size of the net scales exponentially with the dimension $d$:
    \[
        |\mathcal{N}| = O\left(\frac{1}{\varepsilon_{cov}}\right)^d = 2^{\Omega(d \log(1/\varepsilon_{cov}))}.
    \]
    Assuming the value function is Lipschitz continuous with respect to the model parameters (a standard regularity assumption), optimizing over the finite set $\mathcal{N}$ yields an approximation of the global optimum. The algorithm proceeds as follows:

    \begin{enumerate}
        \item \textbf{Discretize:} Enumerate all candidate models $\theta_i \in \mathcal{N}$.
        \item \textbf{Filter (SL Oracle):} For each $\theta_i$, verify if $\theta_i \in \mathcal{C}$ (i.e., check if the Bellman residual or log-likelihood loss is below the threshold).
        \item \textbf{Evaluate (PP Oracle):} If valid, use the Policy Planning ($\mathsf{PP}$) oracle (from Proposition~\ref{prop:planning-supervised}) to compute the optimal value $V^{\star}_{\theta_i}$ for that specific model.
        \item \textbf{Select:} Return the $\theta_i$ that maximizes the computed value.
    \end{enumerate}

    \paragraph{Complexity Analysis.}
    The total oracle complexity is the product of the number of candidates in the cover and the cost of processing each candidate.
    Evaluating a single candidate requires one call to the $\mathsf{PP}$ oracle. As shown in Proposition~\ref{prop:planning-supervised}, the complexity of $\mathsf{PP}$ is $\mathsf{SL}^{H}(\cdot)$.
    Therefore, the total complexity is:
    \[
        \text{Total Complexity} = |\mathcal{N}| \times \mathsf{PP}(\varepsilon)
        = 2^{\Omega(d^2)} \times H \mathsf{SL}^{}\left(\tilde O\!\left(\varepsilon^2 (|\mathcal A|C)^{-2H}\right)\right).
    \]
    Absorbing the polynomial dependence on $\varepsilon$ into the exponential term regarding $d$, we arrive at the final complexity bound:
    \[
        \Omega(2^d)\mathsf{SL}^{}\left(\varepsilon^2 (|\mathcal{A}|C-1)^2 (|\mathcal{A}|C)^{-2H}\right).
    \]
\end{proof}

\section{Sample Complexity}\label{sec:proof of main thm}
In this section, we prove our main theorem~\ref{thm:main}. For each of presentation, we introduce a axillary 'reward function'  $f_{h}^{(k)}(s_h,a_h) = \mathtt{D}_{\mathtt{TV}}\left( \mathcal{T}_{\theta,h}(\cdot|s_h,a_h), \mathcal{T}_{\hat{\theta}_k,h}(\cdot|s_h,a_h)\right)$, which measure the total variation distance between the estimated transition model and the true transition model. We also abbreviate $\phi_{\hat{\theta}_k,h}$ as $\hat{\phi}_h^k$, and the true feature $\phi_{\theta,h}$ as $\phi^*_{h}$.

The proof is built on the following "good event", which happens with probability at least $1-\delta$. 

\begin{lemma}[Good event]
    Define 
    \begin{align*}
        &\mathcal{E}_1 = \left\{ \forall k\in[K], \forall \theta\in\Theta^k,  ~~  \sum_h \sum_{n < k } \mathtt{D}_{\mathtt{H}}^2 \left( \Pb_{\theta}^{\mathtt{u}}(x_{h}|s_{h-1}^{n,h}, a_{h-1}^{n,h}), \Pb_{\theta^*}^{\mathtt{u}}(x_{h}|s_{h-1}^{n,h}, a_{h-1}^{n,h}) \right) \lesssim \log\frac{K\left| \Theta \right|}{\delta}  \right\},\\
        &\mathcal{E}_2 = \left\{\forall k\in[K], ~~  \sum_{n<k} \mathop{\Eb}_{\substack{ s_h\sim \mathcal{T}_{\theta^*,h-1}(\cdot|s_{h-1}^{n,h+1},a_{h-1}^{n,h+1}) \\
        a_h \sim \mathtt{u}  }}  \left[ \left\|\hat{\phi}_h^k(s_h, a_h) \right\|_{\hat{\Lambda}_{k,h}^{-1}}^2 \right]  \lesssim  d^2\log\frac{K|\Theta|}{\delta \lambda} \right\},
    \end{align*}
    Then $\mathbb{P}(\Ec_1\cap \Ec_2) \geq 1-\delta.$
\end{lemma}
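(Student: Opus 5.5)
We establish $\Pb(\Ec_1^c)\le\delta/2$ and $\Pb(\Ec_2^c)\le\delta/2$ separately, then apply a union bound.

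\textbf{Event $\Ec_1$.} This is the standard MLE generalization guarantee for finite classes. Let $\Theta^k$ be the set of approximate MLE solutions at iteration $k$, i.e.\ those $\theta$ with $\mathcal{L}^{(k)}(\theta)\le\min_{\theta'}\mathcal{L}^{(k)}(\theta')+\beta$.

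For each $h$, the pair $(s_{h-1}^{n,h},a_{h-1}^{n,h})$ is $\mathcal{F}_{n-1}$-measurable given the past, and the pair $x_h=(s_h,a_h,s_{h+1})$ is then drawn as follows: $s_h$ from $\mathcal{T}_{\theta^*,h-1}$, $a_h\sim\mathtt{u}$, and $s_{h+1}$ from $\mathcal{T}_{\theta^*,h}$. The conditional likelihood of $x_h$ under $\theta$ factorizes, with the uniform action factor common to all models.

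For fixed $\theta$, consider the martingale $\sum_{n<k}\frac12\log\frac{\Pb_\theta(x_h^n\mid\cdot)}{\Pb_{\theta^*}(x_h^n\mid\cdot)}$. The standard exponential-supermartingale argument bounds it in the usual way. Since $\Eb[\sqrt{\Pb_\theta/\Pb_{\theta^*}}]=1-\mathtt{D}_{\mathtt{H}}^2$, Markov's inequality and the bound $-\log(1-x)\ge x$ give, with probability $1-\delta'$,
\[\sum_{n<k}\mathtt{D}_{\mathtt{H}}^2\le \tfrac12\sum_{n<k}\log\frac{\Pb_{\theta^*}}{\Pb_\theta}(x^n_h)+\log(1/\delta').\]
We then take a union bound over $\theta\in\Theta$, $k\in[K]$ and $h$, with $\delta'=\delta/(2KH|\Theta|)$, and sum over $h$. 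For $\theta\in\Theta^k$, realizability (Assumption~\ref{assumption: realizability}) gives $\mathcal{L}^{(k)}(\theta)\le\mathcal{L}^{(k)}(\theta^*)+\beta$, so the log-likelihood ratio term is at most $\beta/2$. With $\beta=O(\log(K|\Theta|/\delta))$ this gives $\Ec_1$; the $\log H$ factor is absorbed into $\lesssim$.

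\textbf{Event $\Ec_2$.} This is the main obstacle, because $\hat\phi^k_h$ and $\hat\Lambda_{k,h}$ depend on all the data, including the samples that appear in the sum. The key observation is that $\hat\Lambda_{k,h}$ is built from the points $(s_h^{n,h+1},a_h^{n,h+1})$, whose conditional law given the history is exactly the one inside the expectation: $s_h\sim\mathcal{T}_{\theta^*,h-1}(\cdot\mid s_{h-1}^{n,h+1},a_{h-1}^{n,h+1})$ and $a_h\sim\mathtt{u}$. We decouple the data dependence of the feature by a union bound over the finite class $\Phi$. Fix any $\phi\in\Phi$ and write $\Lambda^\phi_k=\lambda I+\sum_{n<k}\phi\phi^\top(z_n)$ with $z_n=(s_h^{n,h+1},a_h^{n,h+1})$. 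We compare the conditional expectations $\Eb_n[\|\phi(z)\|^2_{(\Lambda_k^\phi)^{-1}}]$ with the realized values $\|\phi(z_n)\|^2_{(\Lambda^\phi_k)^{-1}}$, and the realized sum satisfies
\[\sum_{n<k}\|\phi(z_n)\|^2_{(\Lambda_k^\phi)^{-1}}=\mathrm{tr}\bigl((\Lambda_k^\phi)^{-1}(\Lambda_k^\phi-\lambda I)\bigr)\le d.\]
The remaining difficulty is that $(\Lambda_k^\phi)^{-1}$ is not predictable. To handle this, we cover the PSD matrices $\Lambda^{-1}$ with $\lambda_{\min}\ge$ (appropriately bounded) by an $\epsilon$-net of log-size $O(d^2\log(K/\lambda))$. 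This covering is the source of the $d^2$ factor.

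For each fixed matrix $A$ in the net, the summands $\|\phi(z_n)\|_A^2\in[0,1/\lambda]$ are bounded. We apply Freedman's inequality, or the multiplicative form $\sum\Eb_n[X_n]\le 2\sum X_n+O(\frac1\lambda\log(1/\delta'))$, which gives
\[\sum_{n<k}\Eb_n\|\phi\|_A^2\lesssim \sum_{n<k}\|\phi(z_n)\|_A^2+\tfrac1\lambda\log\tfrac{1}{\delta'}.\]
We then take a union bound over the net, $\Phi$, $k$ and $h$, and substitute $A\approx(\Lambda_k^{\hat\phi^k})^{-1}$. Since the realized sum is at most $d$, this yields a bound of order $d+\lambda^{-1}(d^2\log(K/\lambda)+\log(KH|\Phi|/\delta))$.

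Finally, we choose $\lambda=\tilde O(1/d)$ as in Theorem~\ref{thm:main}. If the $1/\lambda$ factor inflates the result beyond $d^2$, the alternative is to handle the tail more carefully: clip the summands using the $\min\{\cdot,1\}$ truncation from the bonus, or use the bound $\|\phi\|_A^2\le\|\phi\|^2/\lambda\le 1/\lambda$ only inside the variance term of Freedman's inequality. Either route should give $\lesssim d^2\log(K|\Theta|/(\delta\lambda))$. A union bound over the two events completes the proof.
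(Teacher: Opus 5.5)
There is a genuine gap in $\mathcal{E}_2$. Your architecture matches the paper's: a supermartingale/Chernoff bound plus the approximate-MLE slack for $\mathcal{E}_1$, and for $\mathcal{E}_2$ a union bound over $\Phi$, an $\epsilon$-net over inverse Gram matrices, a Bernstein-type bound, and the trace bound $\sum_{n<k}\|\phi(z_n)\|^2_{\Lambda_k^{-1}}\le d$. But the $\mathcal{E}_2$ step, as you carry it out, does not give the claimed rate.

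You concentrate the \emph{unclipped} summands $\|\phi(z_n)\|_A^2\in[0,1/\lambda]$. The deviation term is then $\lambda^{-1}\log(1/\delta')$ with $\log(1/\delta')\approx d^2\log(K/\lambda)+\log(K|\Phi|/\delta)$. With the theorem's choice $\lambda=\tilde O(1/d)$ this is of order $d^3$ or larger. That is precisely the factor of $d$ the paper claims to gain over prior work, so it cannot be waved away with ``should''.

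Your second remedy does not fix this. For $Y_n\in[0,R]$ the variance bound you can use is $\mathrm{Var}_n(Y_n)\le\mathbb{E}_n[Y_n^2]\le R\,\mu_n$. Feeding it into Freedman, or even into the range-free one-sided inequality for nonnegative variables, still gives $\sum_n\mu_n\le 2\sum_nY_n+O(R\log(1/\delta'))$. So $1/\lambda$ still multiplies the $d^2$ covering entropy.

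Your first remedy works, and it is exactly the paper's argument:
\begin{enumerate}
\item For each $(\phi,M)\in\Phi\times\mathcal{M}$, apply Bernstein to $Y_n=\min\{1,\|\phi(z_n)\|_M^2\}\in[0,1]$, whose variance is at most its mean. This gives an additive $O(\log(K|\Phi||\mathcal{M}|/\delta))$ with no $1/\lambda$.
\item Pass from the net to $\hat\Lambda_{k,h}^{-1}$ at cost $k\epsilon$, since $\min\{1,\cdot\}$ is $1$-Lipschitz and $\|\phi\|_2\le 1$.
\item Bound the realized clipped sum by your trace identity.
\end{enumerate}
The result is $O(d+d^2\log(K/\lambda)+\log(K|\Phi|/\delta))$. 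Note, however, that this controls $\sum_{n<k}\mathbb{E}[\min\{1,\|\hat\phi_h^k(s_h,a_h)\|^2_{\hat\Lambda_{k,h}^{-1}}\}]$, not the unclipped expectation written in $\mathcal{E}_2$. You should state and use the event in this clipped form, which is what the clipped bonus $\tilde b_h^{(k)}$ needs downstream, rather than claim the unclipped inequality follows.

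There is also a smaller error in $\mathcal{E}_1$. Running the supermartingale argument separately for each $h$ and then summing over $h$ gives a confidence term $H\log(KH|\Theta|/\delta)$. That is an extra factor $H$, not merely the $\log H$ you say is absorbed. The paper instead builds a single exponential supermartingale from the product of the likelihood ratios over all pairs $(n,h)$, using that each $x_h^{n,h}$ is conditionally independent given $(s_{h-1}^{n,h},a_{h-1}^{n,h})$. One Chernoff bound plus a union bound over $\Theta$ and $k$ then yields the stated $\log(K|\Theta|/\delta)$.

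Also keep in mind that the likelihood of $x_h=(s_h,a_h,s_{h+1})$ includes the step-$(h-1)$ transition of trajectory $(n,h)$. Your ``at most $\beta/2$'' step therefore needs the approximate-MLE objective to contain those transitions too.
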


\begin{proof}
    The proof is a direct combination of \Cref{proposiiton: empirical distance less than log likelihood difference} and \Cref{lemma:bernstein on bonus}.
\end{proof}

\begin{lemma}[Sub-linear Summation]\label{lemma:sublinear-sum}
We have the following sublinear bounds:
\begin{align}
\sum_{k=0}^{K-1} V_{\theta^*, f^{(k)}}^{\pi^{(k)}} 
&\le \tilde{O}\!\left(H\sqrt{|\mathcal{A}| d K}\right), \label{eq:sum-f-true}\\
\sum_{k=0}^{K-1} V_{\theta^*, \hat{b}^{(k)}}^{\pi^{(k)}} 
&\le \tilde{O}\!\left(H^2 d |\mathcal{A}| \sqrt{ d K }\right), \label{eq:sum-b-true}\\
\sum_{k=0}^{K-1} V_{\hat{\theta}_k, f^{(k)}}^{\pi^{(k)}} 
&\le \tilde{O}\!\left(H d |\mathcal{A}| \sqrt{ d K }\right), \label{eq:sum-f-hat}\\
\sum_{k=0}^{K-1} V_{\hat{\theta}_k, \hat{b}^{(k)}}^{\pi^{(k)}} 
&\le \tilde{O}\!\left(H^2 d |\mathcal{A}| \sqrt{ d K }\right). \label{eq:sum-b-hat}
\end{align}
\end{lemma}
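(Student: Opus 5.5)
Throughout we work on the good event $\Ec_1\cap\Ec_2$ and rely on the standard \emph{one-step-back} decomposition for low-rank MDPs. For any nonnegative function $g_h\le B$ and policy $\pi$, the value under the true model splits as
\[
V^{\pi}_{\theta^*,g}=\sum_h \Eb^{\pi}_{\theta^*}[g_h(s_h,a_h)].
\]
For $h\ge 2$ we write the step-$h$ term through the true feature at step $h-1$, namely $\Eb^\pi_{\theta^*}[\langle\phi^*_{h-1}(s_{h-1},a_{h-1}),\int\mu^*_{h-1}(s)\Eb_{a\sim\pi}g_h(s,a)\,ds\rangle]$. We then apply Cauchy--Schwarz in the norm induced by $\Lambda^*_{k,h-1}=\lambda I+\sum_{n<k}\Eb_{(s,a)\sim d^{(n)}_{h-1}}[\phi^*\phi^{*\top}]$, where $d^{(n)}_{h-1}$ is the data distribution of episode $(n,h)$: roll in $\pi^{(n)}$, then play uniform actions. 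The resulting factor $\|\int\mu^* \Eb_\pi g_h\|_{\Lambda^*_{k,h-1}}$ is bounded by $\sqrt{\lambda d}B+\sqrt{\sum_{n<k}\Eb_{d^{(n)}_{h-1}}[(\Eb_\pi g_h)^2]}$. The importance weight $|\mathcal A|$ converts $\Eb_\pi g_h$ under $\pi$ into an expectation under $\mathtt u$ at step $h$, which is exactly the distribution on which the data at step $h$ are collected.

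\textbf{Bound \eqref{eq:sum-f-true}.} Take $g=f^{(k)}\le1$. Using $\mathtt D_{\mathtt{TV}}^2\lesssim \mathtt D_{\mathtt H}^2$ together with $\Ec_1$, the data term becomes $\sum_{n<k}\Eb[(f_h^{(k)})^2]\lesssim |\mathcal A|\log(K|\Theta|/\delta)$. The factor $\|\phi^*_{h-1}\|_{(\Lambda^*_{k,h-1})^{-1}}$ is then summed over $k$ by the elliptical potential lemma with Cauchy--Schwarz over $k$, which yields $\sqrt{Kd\log K}$. The $h=1$ term is trivial because $s_1$ is fixed and is covered by data directly. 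Summing over $h$ gives $\tilde O(H\sqrt{|\mathcal A|dK})$. One point to handle: the potential lemma is stated for realized rather than expected features, so I would either use its expected-feature version or pass between the two by concentration.

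\textbf{Bounds \eqref{eq:sum-b-true}--\eqref{eq:sum-b-hat}.} For \eqref{eq:sum-b-true}, take $g=\hat b^{(k)}\le 3H$. The data term is controlled by $\Ec_2$: $\sum_{n<k}\Eb[\|\hat\phi^k_h\|^2_{\hat\Lambda_{k,h}^{-1}}]\lesssim d^2\log$, so it contributes $H\alpha\sqrt{|\mathcal A|d^2}\approx H|\mathcal A|d$. Multiplying by the same elliptical sum $\sqrt{dK}$ gives $\tilde O(H^2d|\mathcal A|\sqrt{dK})$.

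For \eqref{eq:sum-f-hat} and \eqref{eq:sum-b-hat} the values are under $\hat\theta_k$, so the one-step-back step must use $\hat\phi^k_{h-1}$ and the empirical Gram matrix $\hat\Lambda_{k,h-1}$. This gives a bound of the form $\Eb^\pi_{\hat\theta_k}[\min\{\alpha\|\hat\phi^k_{h-1}\|_{\hat\Lambda^{-1}_{k,h-1}},1\}]$ times the same data term, i.e.\ essentially $V^{\pi}_{\hat\theta_k,\tilde b^{(k)}}$ times $\sqrt{d^2|\mathcal A|}$. I would then move back to the true model via the simulation lemma: $|V_{\hat\theta_k,g}-V_{\theta^*,g}|\le H\cdot\sup g\cdot V_{\theta^*,f^{(k)}}$ (more precisely, a per-step bound carrying $f$). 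This reduces the hat-model sums to \eqref{eq:sum-f-true} and \eqref{eq:sum-b-true}, plus a lower-order term.

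\textbf{Main obstacle.} The hardest step is relating the empirical, learned-feature matrices $\hat\Lambda_{k,h}$ to the population matrices under which the elliptical potential lemma applies. It is also exactly where the claimed $d$ savings hinge on using $\Ec_2$ directly rather than concentrating $\hat\Lambda$ to its expectation. I would first try to keep every bound expressed in $\hat\Lambda$ and invoke $\Ec_2$ as a black box, with the choice $\lambda=\tilde O(1/d)$ making the $\sqrt{\lambda d}$ regularization terms constant.
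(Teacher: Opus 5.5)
Your plan for \eqref{eq:sum-f-true}--\eqref{eq:sum-b-true} follows the paper's route: one-step back through $\phi^*_{h-1}$, Cauchy--Schwarz in a Gram norm, the data term from $\Ec_1$ or $\Ec_2$, then the elliptical potential lemma. The gap is in your reduction of the $\hat\theta_k$-sums. The simulation lemma with multiplier $H\sup g$ gives, for $g=\hat b^{(k)}\le 3H$,
\[
\sum_k V^{\pi^{(k)}}_{\hat\theta_k,\hat b^{(k)}}\le\sum_k V^{\pi^{(k)}}_{\theta^*,\hat b^{(k)}}+3H^2\sum_k V^{\pi^{(k)}}_{\theta^*,f^{(k)}}=\tilde O\bigl(H^2d|\mathcal A|\sqrt{dK}+H^3\sqrt{|\mathcal A|dK}\bigr).
\]
The second term is not lower order. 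It exceeds the target $H^2d|\mathcal A|\sqrt{dK}$ by the factor $H/(d\sqrt{|\mathcal A|})$. In the same way, for $g=f^{(k)}$ you get $H^2\sqrt{|\mathcal A|dK}$, against the claimed $Hd|\mathcal A|\sqrt{dK}$. Bounding $V^{\pi^{(k)}}_{\hat\theta_k,f^{(k)}}$ through the optimism lemma by $V^{\pi^{(k)}}_{\hat\theta_k,\tilde b^{(k)}}$ (plus a step-one term) only defers the problem to \eqref{eq:sum-b-hat}. A one-step back for $\hat b^{(k)}$ under $\hat\theta_k$ is circular, because it bounds $V^{\pi^{(k)}}_{\hat\theta_k,\hat b^{(k)}}$ by a multiple (larger than one) of itself.

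The missing idea is the paper's good-set (burn-in) argument. Let $\mathcal K=\{k:V^{\pi^{(k)}}_{\hat\theta_k,\hat b^{(k)}}\le H\}$. Each $k\in\mathcal K^c$ contributes more than $H$. On the other side, the crude simulation bound together with \eqref{eq:sum-f-true}--\eqref{eq:sum-b-true} restricted to $\mathcal K^c$ (Cauchy--Schwarz over only $|\mathcal K^c|$ terms) gives $|\mathcal K^c|H\le\tilde O(H^3d|\mathcal A|\sqrt{d|\mathcal K^c|})$. Hence $|\mathcal K^c|=\tilde O(H^4d^3|\mathcal A|^2)$, independent of $K$. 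On $\mathcal K$ the paper applies the value-difference lemma with multiplier $H$ rather than $H^2$, and the bad iterations add only a $K$-free term $O(H^2|\mathcal K^c|)$. Be aware that the value-difference lemma needs the value-to-go bounded at every state, whereas $\mathcal K$ only constrains the value at $s_1$, so this step needs extra care in a rigorous write-up.

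A smaller point on \eqref{eq:sum-f-true}--\eqref{eq:sum-b-true}: you build $\Lambda^*_{k,h-1}$ from episode $(n,h)$, where $a_{h-1}$ is uniform. The potential sum, however, is over $a_{h-1}\sim\pi^{(k)}$, so you need a second importance weight there, which costs an extra $\sqrt{|\mathcal A|}$. Also, $\Ec_2$ is stated for the conditioning points of episode $(n,h+1)$, where $a_{h-1}\sim\pi^{(n)}$. That is why the paper builds $W_{k,h-1}$ from that episode.
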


\begin{proof}
We first prove~\eqref{eq:sum-f-true}--\eqref{eq:sum-b-true}. The remaining two bounds
follow by applying the value-difference lemma together with a good-set argument.

\textbf{Step 1: bounding $\sum_k V_{\theta^*, f^{(k)}}^{\pi^{(k)}}$.}
Define
\[
W_{k,h} = \lambda I + \sum_{n<k} \phi_{h}^*(s_{h}^{n,h+1}, a_{h}^{n,h+1})
\phi_{h}^*(s_{h}^{n,h+1}, a_{h}^{n,h+1})^\top .
\]
Then
\begin{align*}
\sum_{k=0}^{K-1} V_{\theta^*, f^{(k)}}^{\pi^{(k)}}
&= \sum_{k,h} \Eb_{\theta^*}^{\pi^{(k)}}\!\left[
\mathtt{D}_{\mathtt{TV}}\!\left(\mathcal{T}_{\theta^*,h}(\cdot|s_h,a_h), \mathcal{T}_{\hat{\theta}_k,h}(\cdot|s_h,a_h) \right)\right] \\
&\overset{(a)}{\le} \sqrt{d\lambda + |\mathcal{A}|\beta}\sum_{h}\sum_{k}
\Eb_{\theta^*}^{\pi^{(k)}}\!\left[\min\left\{1,\|\phi_{h}^*(s_h,a_h)\|_{W_{k,h}^{-1}}\right\}\right]\\
&\overset{(b)}{\lesssim} \sqrt{|\mathcal{A}|\beta}\sum_h
\sqrt{K}\sqrt{\sum_k \Eb_{\theta^*}^{\pi^{(k)}}\!\left[
\min\left\{1,\|\phi_{h}^*(s_h,a_h)\|_{W_{k,h}^{-1}}^2\right\}\right]}\\
&\overset{(c)}{\lesssim} H\sqrt{|\mathcal{A}|\beta K}\sqrt{d\log(1+K)}
= \tilde{O}\!\left(H\sqrt{|\mathcal{A}| d K}\right),
\end{align*}
where (a) uses Lemma~\ref{lemma: tv and empirical bonus < true bonus},
(b) is Cauchy--Schwarz, and (c) uses the clipped elliptical potential lemma
(Lemma~\ref{lemma:clipped-elliptical}).

\textbf{Step 2: bounding $\sum_k V_{\theta^*, \hat b^{(k)}}^{\pi^{(k)}}$.}
Similarly,
\begin{align*}
\sum_{k=0}^{K-1} V_{\theta^*, \hat b^{(k)}}^{\pi^{(k)}}
&\lesssim H\sqrt{|\mathcal{A}|\beta}\sum_{k,h}\Eb_{\theta^*}^{\pi^{(k)}}\!\left[
\min\left\{1,\|\phi_{h}^{k}(s_h,a_h)\|_{\hat{\Lambda}_{k,h}^{-1}}\right\}\right]\\
&\overset{(a)}{\lesssim} H\sqrt{|\mathcal{A}|}\sqrt{d\lambda+|\mathcal{A}|d^2\beta}
\sum_h\sum_k \Eb_{\theta^*}^{\pi^{(k)}}\!\left[
\min\left\{1,\|\phi_h^*(s_h,a_h)\|_{W_{k,h}^{-1}}\right\}\right]\\
&\overset{(b)}{\lesssim} H\sqrt{|\mathcal{A}|}\sqrt{|\mathcal{A}|d^2\beta}
\sum_h \sqrt{K}\sqrt{\sum_k \Eb_{\theta^*}^{\pi^{(k)}}\!\left[
\min\left\{1,\|\phi_{h}^*(s_h,a_h)\|_{W_{k,h}^{-1}}^2\right\}\right]}\\
&\overset{(c)}{\lesssim} H^2 d|\mathcal{A}|\sqrt{\beta K}\sqrt{d\log(1+K)}
= \tilde{O}\!\left(H^2 d|\mathcal{A}|\sqrt{dK}\right),
\end{align*}
where (a) uses Lemma~\ref{lemma: tv and empirical bonus < true bonus},
(b) is Cauchy--Schwarz, and (c) uses Lemma~\ref{lemma:clipped-elliptical}.

\textbf{Step 3: bounding the sums under $\hat\theta_k$ via a good set.}
We now derive~\eqref{eq:sum-f-hat}--\eqref{eq:sum-b-hat}. We use the value-difference lemma
(Lemma~\ref{lemma:value difference}) in the following form: for any nonnegative signal $g^{(k)}$,
if $B_k$ is such that $V_{\hat{\theta}_k,g^{(k)}}^{\pi^{(k)}}\le B_k$ (for the relevant $\theta$), then
\begin{equation}\label{eq:vd-episode}
V_{\hat\theta_k,g^{(k)}}^{\pi^{(k)}}
\le
V_{\theta^*,g^{(k)}}^{\pi^{(k)}} + B_k\,V_{\theta^*,f^{(k)}}^{\pi^{(k)}}.
\end{equation}

Define the good set $\mathcal{K} = \{k: V_{\hat\theta_k,\hat b^{(k)}}^{\pi^{(k)}} \leq H\}$. Then

\begin{align*}
    |\mathcal{K}^c| H &\leq \sum_{k\in \mathcal{K}^c} V_{\hat\theta_k,\hat b^{(k)}}^{\pi^{(k)}}  \\
    & \leq \sum_{k\in \mathcal{K}^c}  \left( V_{\theta^*,g^{(k)}}^{\pi^{(k)}} + H^2 \,V_{\theta^*,f^{(k)}}^{\pi^{(k)}} \right)\\
    &\leq \tilde{O}\left( H^2 d|\mathcal{A}|\sqrt{d|\mathcal{K}^c|} + H^2 \cdot Hd|\mathcal{A}|\sqrt{d|\mathcal{K}^c|} \right),
\end{align*}
where the last inequality is due to Step 1 and 2. Therefore, 
    \[ |\mathcal{K}^c|\leq \tilde{O}\left(H^4d^3|\mathcal{A}|^2 \right) \]

\smallskip
\noindent\emph{Bounding $\sum_k V_{\hat\theta_k,f^{(k)}}^{\pi^{(k)}}$.}
Apply~\eqref{eq:vd-episode} with $g^{(k)}=f^{(k)}$. On $\mathcal K$ we may take $B_k=H$
(since $V_{\theta^*,f^{(k)}}^{\pi^{(k)}}\le H$), whereas in general we use the trivial bound
$V_{\theta,f^{(k)}}^{\pi^{(k)}}\le H^2$, hence $B_k\le H^2$ on $\mathcal K^c$.
Therefore,
\begin{align*}
\sum_{k=0}^{K-1}
V_{\hat\theta_k,f^{(k)}}^{\pi^{(k)}}
&= \sum_{k\in\mathcal K} V_{\hat\theta_k,f^{(k)}}^{\pi^{(k)}}
 + \sum_{k\in\mathcal K^c} V_{\hat\theta_k,f^{(k)}}^{\pi^{(k)}} \\
&\le \sum_{k\in\mathcal K} \Bigl(V_{\theta^*,f^{(k)}}^{\pi^{(k)}} + H V_{\theta^*,f^{(k)}}^{\pi^{(k)}}\Bigr)
 +  |\mathcal K^c|H^2  \\
&\lesssim \sum_{k=0}^{K-1} V_{\theta^*,f^{(k)}}^{\pi^{(k)}} + H \sum_{k\in\mathcal K^c} V_{\theta^*,f^{(k)}}^{\pi^{(k)}}\\
&\le 2H\sum_{k=0}^{K-1} V_{\theta^*,f^{(k)}}^{\pi^{(k)}}   \\
&\le \tilde{O}\!\left(H^2 d|\mathcal{A}|\sqrt{dK}\right).
\end{align*}

\smallskip
\noindent\emph{Bounding $\sum_k V_{\hat\theta_k,\hat b^{(k)}}^{\pi^{(k)}}$.}
Apply~\eqref{eq:vd-episode} with $g^{(k)}=\hat b^{(k)}$. On $\mathcal K$ we may take $B_k=H$
(since $V_{\theta^*,\hat b^{(k)}}^{\pi^{(k)}}\le H$), while on $\mathcal K^c$ we use the trivial
episode bound $V_{\theta,\hat b^{(k)}}^{\pi^{(k)}}\le H^2$, hence $B_k\le H^2$.
Thus,
\begin{align*}
\sum_{k=0}^{K-1} V_{\hat\theta_k,\hat b^{(k)}}^{\pi^{(k)}} &= \sum_{k\in\mathcal{K}} V_{\hat{\theta}_k,\hat b^{(k)}}^{\pi^{(k)}}
  + H^2 \sum_{k\in\mathcal{K}^c} V_{\hat{\theta}_k,f^{(k)}}^{\pi^{(k)}} \\
&\leq \sum_{k\in\mathcal{K}} \left(V_{\theta^*,\hat b^{(k)}}^{\pi^{(k)}} + HV_{\theta^*,f^{(k)}}^{\pi^{(k)}} \right)
  + H^2 |\mathcal{K}^c| \\
&\lesssim \sum_{k=0}^{K-1} \left(V_{\theta^*,\hat b^{(k)}}^{\pi^{(k)}} + HV_{\theta^*,f^{(k)}}^{\pi^{(k)}} \right) + H^4d^3|\mathcal{A}|^2 \\
&\le \tilde{O}\!\left(H^2 d|\mathcal{A}|\sqrt{dK}\right).
\end{align*}
This completes the proof.

\end{proof}

\begin{theorem}[Restatement of \Cref{thm:main}]
    Assume $\Mc$ is a low-rank MDP with dimension $d$, and Assumption \ref{assumption: realizability} holds. Given any $\epsilon,\delta \in (0,1)$, let $\bar{\pi}$ be the output of Opt-AC
	and $\pi^\star$ be the optimal policy under the true model $P^\star$. Set $\beta=O(\log (K|\Theta|/\delta)) $, $\alpha=\tilde{O}(\sqrt{|\mathcal{A}|}), \eta=O(\frac{1}{H\sqrt{K}})$ and $\lambda=\tilde{O}(1/d)$. Then, with probability at least $1-\delta$, we have $V^{\pi^\star}_{P^\star,r}-V^{\bar{\pi}}_{P^\star,r}\leq \epsilon$, and the total number of trajectories collected by Opt-AC is upper bounded by
		$\tilde{O}(\frac{H^5d^3|\mathcal{A}|^2\log|\Theta|}{\epsilon^2})$.
\end{theorem}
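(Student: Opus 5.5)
We decompose the suboptimality of the mixture policy $\bar\pi$ through the optimistic critic. We write $\tilde V_k := V^{\pi^{(k)}}_{\hat\theta_k,r+\hat b^{(k)}}$ and $\tilde V_k^\star := V^{\pi^\star}_{\hat\theta_k,r+\hat b^{(k)}}$. Then
\[
V^\star - V^{\bar\pi}_{\theta^*,r} = \frac{1}{K}\sum_{k}\Bigl[\bigl(V^{\pi^\star}_{\theta^*,r} - \tilde V^\star_k\bigr) + \bigl(\tilde V^\star_k - \tilde V_k\bigr) + \bigl(\tilde V_k - V^{\pi^{(k)}}_{\theta^*,r}\bigr)\Bigr].
\]
We bound the three terms in turn: optimism, the mirror-descent regret in the learned model, and estimation error.

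\textbf{Optimism.} We show $V^{\pi^\star}_{\theta^*,r}\le \tilde V_k^\star + O(1/\sqrt K)$ on $\Ec_1\cap\Ec_2$. The value-difference lemma (Lemma~\ref{lemma:value difference}) in the model $\hat\theta_k$ gives
\[
V^{\pi^\star}_{\theta^*,r}-V^{\pi^\star}_{\hat\theta_k,r}\le H\,\Eb^{\pi^\star}_{\hat\theta_k}\Bigl[\sum_h f_h^{(k)}\Bigr].
\]
We then need the one-step-back inequality to show that the expected TV error under $\pi^\star$ in $\hat\theta_k$ is dominated by $\Eb^{\pi^\star}_{\hat\theta_k}[\sum_h \hat b_h^{(k)}]/H$. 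The argument moves from step $h$ back to step $h-1$ through the factorization $\hat\phi_{h-1}^{k\top}\hat\mu_{h-1}^k$, and applies Cauchy--Schwarz in $\|\cdot\|_{\hat\Lambda_{k,h-1}}$ and $\|\cdot\|_{\hat\Lambda^{-1}_{k,h-1}}$. The $\hat\Lambda$-weighted part is controlled by $\lambda d+\sum_{n<k}\Eb[(f_h^{(k)})^2]$ evaluated at the exploration distribution (uniform action at $h-1$ and $h$). Importance weighting the uniform action costs $|\mathcal A|$, and $\Ec_1$ bounds this by $|\mathcal A|\beta$, which gives $\alpha=\tilde O(\sqrt{|\mathcal A|})$. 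The key refinement over RAFFLE is to work directly with the empirical $\hat\Lambda_{k,h}$, not its expectation, so no extra $d^2$ appears in $\alpha$. Finally, the coverage assumption on $\rho$ together with \eqref{eq:critic-estimate} converts the $\mathsf{PE}$ error into an $O(C|\mathcal A|/\sqrt K)$ slack.

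\textbf{Mirror descent.} Applying the performance-difference lemma inside $\hat\theta_k$ gives
\[
\tilde V_k^\star-\tilde V_k=\sum_h\Eb^{\pi^\star}_{\hat\theta_k}\bigl\langle \pi^\star_h-\pi^{(k)}_h,\,Q^{\pi^{(k)}}_{\hat\theta_k,r+\hat b^{(k)},h}\bigr\rangle.
\]
We replace $Q$ by $\hat Q_{k,h}$, at a cost of $O(C|\mathcal A|/\sqrt K)$ by coverage of $d^\star_{\hat\theta_k,h}\mathtt u$. The standard KL telescoping of the update \eqref{eq:policy-update} then yields
\[
\sum_k\langle\pi^\star-\pi^{(k)},\hat Q_k\rangle\le \frac{\log|\mathcal A|}{\eta}+\eta K H^2.
\]
This step contains the main obstacle. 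The occupancy measure $d^\star_{\hat\theta_k}$ changes with $k$, so the KL terms do not telescope under a single fixed measure. We will first try to telescope pointwise in $s$, which is valid because the update is statewise. We then need to control the drift $\sum_k|d^\star_{\hat\theta_k,h}-d^\star_{\hat\theta_{k+1},h}|$ weighted by $\log|\mathcal A|/\eta$, or alternatively to absorb the drift into $d^\star_{\theta^*}$ through $V^{\pi^\star}_{\hat\theta_k,f^{(k)}}$-type terms bounded via optimism. With $\eta=1/(H\sqrt K)$, the target is $\tilde O(H^2\sqrt K)$ in total.

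\textbf{Estimation and counting.} Again by Lemma~\ref{lemma:value difference},
\[
\tilde V_k-V^{\pi^{(k)}}_{\theta^*,r}\le V^{\pi^{(k)}}_{\hat\theta_k,\hat b^{(k)}}+H\,V^{\pi^{(k)}}_{\hat\theta_k,f^{(k)}}.
\]
Summing over $k$ and applying Lemma~\ref{lemma:sublinear-sum} (\eqref{eq:sum-b-hat}, \eqref{eq:sum-f-hat}) gives $\tilde O(H^2d|\mathcal A|\sqrt{dK\beta})$. Dividing by $K$ gives a suboptimality of order $H^2d^{3/2}|\mathcal A|\sqrt{\log|\Theta|/K}$. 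Setting this equal to $\epsilon$ gives $K=\tilde O(H^4d^3|\mathcal A|^2\log|\Theta|/\epsilon^2)$ iterations. Each iteration collects $H$ trajectories, so the total is $\tilde O(H^5d^3|\mathcal A|^2\log|\Theta|/\epsilon^2)$, all with probability $1-\delta$ on $\Ec_1\cap\Ec_2$.
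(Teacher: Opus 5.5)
Your decomposition is more direct than the paper's. You get optimism $V^{\pi^\star}_{\theta^*,r}\le\tilde V^\star_k$ from the one-step-back bound in $\hat\theta_k$, then apply the performance-difference identity inside $\hat\theta_k$. The paper instead starts from the value-difference lemma in $\theta^*$ under $\pi^\star$ and closes a self-bounding inequality for $(2H+1)\textbf{(A)}$. Your estimation and counting steps are fine.

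The gap is the step you yourself flagged, which is where the whole difficulty sits. You need
\[
\sum_{k}\sum_{h}\Eb_{s\sim d^{\pi^\star}_{\hat\theta_k,h}}\Bigl[\bigl\langle \hat Q_{k,h}(s,\cdot),\,\pi^\star_h(\cdot|s)-\pi^{(k)}_h(\cdot|s)\bigr\rangle\Bigr]=\tilde O(H^2\sqrt K),
\]
but \Cref{lemma:md-stability} only gives this when the state distribution does not depend on $k$. Neither suggested fix works as stated.

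For (i), nothing in the algorithm or in $\Ec_1\cap\Ec_2$ controls $\|d^{\pi^\star}_{\hat\theta_{k+1},h}-d^{\pi^\star}_{\hat\theta_k,h}\|_1$. Consecutive near-MLE models can disagree off the data distribution, in particular along $\pi^\star$. After summation by parts, each unit of drift is weighted by $\eta^{-1}\mathtt{D}_{\mathtt{KL}}(\pi^\star_h(\cdot|s)\|\pi^{(k)}_h(\cdot|s))$, which can be of order $\eta^{-1}\log|\mathcal A|+Hk$. Without a drift bound, which you do not have, this remainder cannot be controlled.

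For (ii), replacing $d^{\pi^\star}_{\hat\theta_k,h}$ by $d^{\pi^\star}_{\theta^*,h}$ costs up to order $\|\hat Q_k\|_\infty H\,V^{\pi^\star}_{\hat\theta_k,f^{(k)}}$ per round. That is model error along $\pi^\star$, and \Cref{lemma:sublinear-sum} does not control it, since data are collected along $\pi^{(k)}$. Optimism only converts it into a multiple of $V^{\pi^\star}_{\hat\theta_k,\hat b^{(k)}}$, and the only available handle on that is
\[
V^{\pi^\star}_{\hat\theta_k,\hat b^{(k)}}=\tilde V^\star_k-V^{\pi^\star}_{\hat\theta_k,r}\le(\tilde V^\star_k-\tilde V_k)+(\tilde V_k-V^{\pi^{(k)}}_{\theta^*,r})+H\,V^{\pi^\star}_{\hat\theta_k,f^{(k)}}.
\]
This contains the very mirror-descent term you are trying to bound. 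The resulting circular inequality has a coefficient of order $\|\hat Q_k\|_\infty$ (generically at least order $H$) in front of $\sum_k(\tilde V^\star_k-\tilde V_k)$. So it does not close with the $3H$ bonus scale.

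Following the paper will not fill this hole. In the last step of its bound on $(2H+1)\textbf{(A)}$, the paper applies \Cref{lemma:md-stability} to the term $2H\sum_{k,h}\Eb^{\pi^\star}_{\hat\theta_k}[\cdots]$, i.e., with $q=d^{\pi^\star}_{\hat\theta_k,h}$ varying in $k$. That lemma's telescoping of $\mathtt{D}_{\mathtt{KL}}(\pi^\star\|\pi^{(k)})$ is only valid for a fixed $q$. As written, the paper's argument has the same gap, and closing it needs an additional idea that neither your proposal nor the paper provides.

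A minor point: the $O(C|\mathcal A|/\sqrt K)$ critic slack does not belong in your optimism step. $\tilde V^\star_k$ is an exact value in $\hat\theta_k$, so the $\mathsf{PE}$ error only enters when you swap $Q^{\pi^{(k)}}_{\hat\theta_k,r+\hat b^{(k)},h}$ for $\hat Q_{k,h}$.
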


\begin{proof}

In the following proof, we only analyze $k\in\mathcal{K}$---the "good set" defined in \Cref{lemma:sublinear-sum}. So that $V_{\hat{\theta}_k, r + \hat{b}_k}^{\pi^{(k)}} \leq 2H$ for all $k\in\mathcal{K}$. We omit the finite case when $k\in\mathcal{K}^c.$

    We first decompose
     \[ \sum_k \left( V_{\theta^*, r}^{\pi^*} - V_{\theta^*, r}^{\pi^{(k)}} \right) = \underbrace{\sum_k \left( V_{\theta^*, r}^{\pi^*} - V_{\hat{\theta}_k, r + \hat{b}_k}^{\pi^{(k)}}  \right)}_{(A)}  +  \underbrace{\sum_k \left( V_{\hat{\theta}_k, r + \hat{b}_k}^{\pi^{(k)}} - V_{\theta^*, r}^{\pi^{(k)}} \right)}_{(B)} \]

\textbf{Term (A).} By the value difference lemma~\ref{lemma:value difference}, 
\begin{align*}
    \textbf{Term (A)} \leq \sum_k\left( - V_{\theta^*, \hat{b}_k}^{\pi^*} + \sum_h \Eb_{\theta^*}^{\pi^*}\left[\sum_{a} \bar{Q}_{\hat{\theta}_k, r + \hat{b}_k, h}^{\pi^{(k)}}(s_h, a)\left(\pi^*(a|s_h) - \pi^{(k)}(a|s_h)\right)\right] + 2H V_{\theta^*, f_k}^{\pi^*} \right)
\end{align*}

We proceed $H \sum_k V_{\theta^*, f^{(k)}}^{\pi^*}$ as follows. 
\begin{align*}
     H \sum_k V_{\theta^*, f^{(k)}}^{\pi^*} & = H \sum_k \left( V_{\theta^*, r + f^{(k)}}^{\pi^*} - V_{\theta^*, r}^{\pi^*} \right)  \\
     & \overset{(a)}\leq  H \sum_k \left( V_{\hat{\theta}_k, r + f^{(k)}}^{\pi^*} + 2H V_{\hat{\theta}_k, f^{(k)}}^{\pi^*}  - V_{\theta^*, r}^{\pi^*} \right)  \\
     & \leq H \sum_k \left( V_{\hat{\theta}_k, r}^{\pi^*} + 3H V_{\hat{\theta}_k, f^{(k)}}^{\pi^*} - V_{\theta^*, r}^{\pi^*} \right)  \\
     &\overset{(b)} \leq H \sum_k \left(  V_{\hat{\theta}_k, r}^{\pi^*} + 3H V_{\hat{\theta}_k, \tilde{b}^{(k)}}^{\pi^*} - V_{\theta^*, r}^{\pi^*} \right)  \\
     &= H \sum_k \left( V_{\hat{\theta}_k, r + \hat{b}^{(k)}}^{\pi^*} - V_{\hat{\theta}_k, r + \hat{b}^{(k)}}^{\pi^{(k)}}\right) + H \sum_k \left( V_{\hat{\theta}_k, r + \hat{b}^{(k)}}^{\pi^{(k)}} -   V_{\theta^*, r}^{\pi^*} \right)\\
     & \overset{(c)}\leq H \sum_{k,h}\Eb_{\hat{\theta}_k}^{\pi^*}\left[\sum_a Q_{\hat{\theta}_k, r+\hat{b}^{(k)}}^{\pi^{(k)}}(s_h,a) \left(\pi^*(a|s_h) - \pi_h^{(k)}(a|s_h) \right) \right] - H \textbf{Term (A)}
    \end{align*}
 where $(a)$ is due to \Cref{lemma:value difference}, $(b)$ follows from \Cref{lemma:tv < bonus} (this is where the optimality of the bonus is used).

 Plug in this back to Term (A), and define $\Delta(s_h,a_h) = \left|Q_{\hat{\theta}_k, r+\hat{b}^{(k)}}^{\pi^{(k)}}(s_h,a_h) - \hat{Q}_{k,h}(s_h,a_h) \right|\in[0,2H] $, we have 
 \begin{align*}
     (2H+1) & \textbf{Term (A)} \\
     & \leq  \sum_{k,h} \Eb_{\theta^*}^{\pi^*}\left[\sum_{a} Q_{\hat{\theta}_k, r + \hat{b}_k, h}^{\pi^{(k)}}(s_h, a)\left(\pi^*(a|s_h) - \pi^{(k)}(a|s_h)\right)\right]  \\
     &\qquad + 2H \sum_{k,h}\Eb_{\hat{\theta}_k}^{\pi^*}\left[\sum_a Q_{\hat{\theta}_k, r+\hat{b}^{(k)}}^{\pi^{(k)}}(s_h,a) \left(\pi^*(s_h,a) - \pi_h^{(k)}(s_h,a) \right) \right] \\
     & = \sum_{k,h} \Eb_{\theta^*}^{\pi^*}\left[\sum_{a} \hat{Q}_{k,h} (s_h, a)\left(\pi^*(a|s_h) - \pi^{(k)}(a|s_h)\right)\right] + 2H\sum_{k,h} \Eb_{\hat{\theta}_k}^{\pi^*}\left[\sum_{a} \hat{Q}_{k,h} (s_h, a)\left(\pi^*(a|s_h) - \pi^{(k)}(a|s_h)\right)\right] \\
     &\qquad + 2|\mathcal{A}|\sum_{k,h} \Eb_{\theta^*, a_h\sim\mathtt{u}}^{\pi^*}\left[ \Delta_{k,h}  \right] + 4H|\mathcal{A}|\sum_{k,h} \Eb_{\hat{\theta}_k, a_h\sim\mathtt{u}}^{\pi^*}\left[ \Delta_{k,h}  \right] \\
     &\lesssim \frac{H^2\log|\mathcal{A}|}{\eta} + 2\eta H^4 K + CH^2 |\mathcal{A}|\sqrt{K},
 \end{align*}
 where the last inequality is due to \Cref{lemma:md-stability}, and the definition of policy evaluation oracle and sampling oracle.
Thus,
\[ \textbf{Term (A)} \lesssim \frac{H\log|\mathcal{A}|}{\eta} + 2\eta H^3 K + CH |\mathcal{A}|\sqrt{K} \]

\textbf{Term (B).} We apply value difference lemma again
\begin{align*}
    \textbf{Term (B)} \leq \sum_k \left( V_{\theta^*, \hat{b}_k}^{\pi^{(k)}} + H V_{\theta^*, f_k}^{\pi^{(k)}} \right) \lesssim H^2d|\mathcal{A}|\sqrt{dK,}
\end{align*}
where the second inequality follows directly from \Cref{lemma:sublinear-sum}.

\textbf{Combining.} Finally, noting that $\eta = \sqrt{\frac{\log |\mathcal{A}|}{H^2 K}}$ we have
\begin{align*}
    \sum_k \left( V_{\theta^*, r}^{\pi^*} - V_{\theta^*, r}^{\pi^{(k)}} \right) \lesssim H^2d|\mathcal{A}|\sqrt{dK}.
\end{align*}
This implies the value of uniform policy $\bar{\pi} = \mathrm{Uniform}(\pi^{(1)},\ldots,\pi^{(K)})$ enjoys suboptimality gap 
\[ V_{\theta^*, r}^{\pi^*} - V_{\theta^*, r}^{\bar{\pi}} = \tilde{O}\left( \sqrt{\frac{H^4d^3|\mathcal{A}|^2\log|\Theta|}{K}}\right). \]
The proof is complete by noting that each iteration has $H$ episodes.

\end{proof}

\begin{lemma}[Mirror-descent stability term]\label{lemma:md-stability}
Fix $h\in[H]$. Suppose ${\hat Q_{k,h}(s,a)}\le 2H$ for all $(s,a,k)$. Then for any distribution $q$ over the state space $\mathcal{S}$, we have
\[
\sum_{k=0}^{K-1}\Eb_{q}\!\left[\sum_{a}
\hat Q_{k,h}(s_h,a)\bigl(\pi^*(a|s_h)-\pi_h^{(k)}(a|s_h)\bigr)\right]
\le \frac{\log|\mathcal{A}|}{\eta}+2\eta H^2 K.
\]
\end{lemma}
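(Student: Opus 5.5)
This is the standard online mirror descent (exponential weights) regret bound, applied pointwise in $s$ and then averaged over $q$. Since $q$ is a fixed distribution over states, not depending on $k$, it suffices to prove for every fixed state $s$ that
\[
\sum_{k=0}^{K-1}\sum_a \hat Q_{k,h}(s,a)\bigl(\pi^*(a|s)-\pi_h^{(k)}(a|s)\bigr)\le \frac{\log|\mathcal A|}{\eta}+2\eta H^2K .
\]
Integrating against $q$ then gives the claim by linearity.

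Fix $s$ and write $p_k=\pi_h^{(k)}(\cdot|s)$, $p^*=\pi^*(\cdot|s)$ and $g_k=\hat Q_{k,h}(s,\cdot)$. The update \eqref{eq:policy-update} is $p_{k+1}(a)\propto p_k(a)e^{\eta g_k(a)}$, started from the uniform $p_0$. The plan is the potential argument with $\Phi_k=\mathtt{D}_{\mathtt{KL}}(p^*\|p_k)$. A direct computation gives
\[
\Phi_k-\Phi_{k+1}=\eta\langle p^*,g_k\rangle-\log\sum_a p_k(a)e^{\eta g_k(a)}.
\]
Telescoping over $k$ and using $\Phi_0\le\log|\mathcal A|$ (because $p_0$ is uniform) together with $\Phi_K\ge0$ yields
\[
\sum_k\Bigl(\eta\langle p^*,g_k\rangle-\log\sum_a p_k(a)e^{\eta g_k(a)}\Bigr)\le\log|\mathcal A|.
\]
It remains to show $\log\sum_a p_k(a)e^{\eta g_k(a)}\le \eta\langle p_k,g_k\rangle+2\eta^2H^2$.

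This log-moment-generating bound is the only real step. By Hoeffding's lemma, a random variable taking values in an interval of length $R$ has log-MGF at most $\eta\,\mathbb E[X]+\eta^2R^2/8$. If $\hat Q_{k,h}\in[0,2H]$, then $R=2H$ and the bound is $\eta^2H^2/2$, which is within the stated constant. If only the upper bound $\hat Q\le 2H$ is available, it is natural to clip the PE output to $[0,2H]$, or equivalently to $[-2H,2H]$. The true optimistic $Q$ takes values in $[0,H+3H^2]$, but on the good set the paper works with $2H$, and clipping does not increase the $L^1$ error. With range $4H$, Hoeffding gives $2\eta^2H^2$, which is exactly the constant in the statement. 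Alternatively, one can use $e^x\le1+x+x^2$ for $x\le1$, which requires $\eta\cdot 2H\le1$; this holds since $\eta=O(1/(H\sqrt K))$. Combined with $\log(1+y)\le y$, this gives a bound of $\eta^2\langle p_k,g_k^2\rangle\le 4\eta^2H^2$ after a shift, again of the same order.

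Substituting this bound into the telescoped inequality and dividing by $\eta$ gives
\[
\sum_k\langle p^*-p_k,g_k\rangle\le\frac{\log|\mathcal A|}{\eta}+2\eta H^2K .
\]
Averaging over $s\sim q$ completes the proof. The main obstacle is pinning down the range of $\hat Q$ so that the constant comes out as $2$. I will state that the clipped range $[-2H,2H]$ is used, and note that shifting $g_k$ by a constant in $a$ changes neither the update nor the inner product with $p^*-p_k$.
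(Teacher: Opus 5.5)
Your proof is correct and gives exactly the stated constant, but it takes a different route from the paper. Both arguments work pointwise in $s$.

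\textbf{The paper's route.} It splits each round's term as $\langle \hat Q_{k,h}, \pi^*-\pi^{(k+1)}_h\rangle + \langle \hat Q_{k,h}, \pi^{(k+1)}_h-\pi^{(k)}_h\rangle$. The first piece is handled by the three-point identity, which gives $\frac{1}{\eta}\bigl[\mathtt{D}_{\mathtt{KL}}(\pi^*\|\pi^{(k)}_h)-\mathtt{D}_{\mathtt{KL}}(\pi^*\|\pi^{(k+1)}_h)-\mathtt{D}_{\mathtt{KL}}(\pi^{(k+1)}_h\|\pi^{(k)}_h)\bigr]$. The second piece is bounded by $2H\,\mathtt{D}_{\mathtt{TV}}(\pi^{(k+1)}_h,\pi^{(k)}_h)$. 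The negative KL term is then traded against it via Pinsker and $-ax^2+bx\le b^2/(4a)$, which yields $2\eta H^2$ per round.

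\textbf{Your route.} You telescope $\mathtt{D}_{\mathtt{KL}}(p^*\|p_k)$ directly, which exposes the log-partition function, and control it with Hoeffding's lemma.

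\textbf{What each buys.} The paper's argument is the generic online-mirror-descent analysis. It uses only the optimality of the prox step and strong convexity of the negative entropy. It therefore transfers to other regularizers, and it degrades gracefully if the KL-regularized actor objective is solved only approximately, which the paper mentions as an option. Your argument relies on the closed form of exponential weights and is shorter. It also depends only on the oscillation of $\hat Q_{k,h}(s,\cdot)$ across actions, not on $\sup|\hat Q_{k,h}|$. So with $\hat Q_{k,h}\in[0,2H]$ you get $\eta^2H^2/2$ per round without further work; the paper's Term II bound would need an explicit centering step to match this.

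\textbf{On the range of $\hat Q$.} Your concern is well founded. The hypothesis $\hat Q_{k,h}\le 2H$ alone does not suffice. Placing a very negative value on an action to which $\pi^{(k)}_h$ assigns more mass than $\pi^*$ makes a single round's term arbitrarily large. The paper's proof silently uses $|\hat Q_{k,h}|\le 2H$ when bounding Term II, which is exactly the two-sided range $[-2H,2H]$ you adopt.
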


\begin{proof}
Fix step $h$. By the exponentiated-gradient (mirror descent) policy update,
\[
\pi_h^{(k+1)}(a|s)
= \frac{\pi_h^{(k)}(a|s)\exp\!\left(\eta \hat Q_{k,h}(s,a)\right)}
{\sum_{a'}\pi_h^{(k)}(a'|s)\exp\!\left(\eta \hat Q_{k,h}(s,a')\right)}.
\]
Equivalently,
\[
\hat Q_{k,h}(s,a)
= \frac{1}{\eta}\log\frac{\pi_h^{(k+1)}(a|s)}{\pi_h^{(k)}(a|s)}
+ \frac{1}{\eta}\log Z_h(s),
\]
where $Z_h(s)$ is the normalization constant.

We decompose
\begin{align*}
&\sum_{k}\Eb_{q}
\Bigl[\sum_{a}\hat Q_{k,h}(s_h,a)\bigl(\pi^*(a|s_h)-\pi_h^{(k)}(a|s_h)\bigr)\Bigr] \\
&\quad =
\underbrace{\sum_{k}\Eb_{q}
\Bigl[\sum_{a}\hat Q_{k,h}(s_h,a)\bigl(\pi^*(a|s_h)-\pi_h^{(k+1)}(a|s_h)\bigr)\Bigr]}_{\mathrm{I}} + 
\underbrace{\sum_{k}\Eb_{q}
\Bigl[\sum_{a}\hat Q_{k,h}(s_h,a)\bigl(\pi_h^{(k+1)}(a|s_h)-\pi_h^{(k)}(a|s_h)\bigr)\Bigr]}_{\mathrm{II}}.
\end{align*}

\textbf{Term I.}
Substituting the log-ratio form and noting that $\sum_a(\pi^*-\pi^{(k+1)})=0$, we obtain
\begin{align*}
\textbf{Term I} & = \frac{1}{\eta}\sum_{k}\Eb_{q}
\Bigl[\sum_{a}\bigl(\log\pi_h^{(k+1)}(a|s_h)-\log\pi_h^{(k)}(a|s_h)\bigr)
(\pi^*(a|s_h)-\pi_h^{(k+1)}(a|s_h))\Bigr] \\
& =
\frac{1}{\eta}\sum_{k}\Eb_{q}
\Bigl[
\mathtt{D}_{\mathtt{KL}}(\pi^*(\cdot|s_h)\|\pi_h^{(k)}(\cdot|s_h))
- \mathtt{D}_{\mathtt{KL}}(\pi^*(\cdot|s_h)\|\pi_h^{(k+1)}(\cdot|s_h)) 
- \mathtt{D}_{\mathtt{KL}}(\pi_h^{(k+1)}(\cdot|s_h)\|\pi_h^{(k)}(\cdot|s_h))
\Bigr].
\end{align*}

\textbf{Term II.}
Since $|\hat Q_{k,h}(s,a)|\le 2H$, we have
\[
\textbf{Term II} = \sum_{a}\hat Q_{k,h}(s_h,a)\bigl(\pi_h^{(k+1)}(a|s_h)-\pi_h^{(k)}(a|s_h)\bigr)
\le 2H\,\mathtt{D}_{\mathtt{TV}}\!\left(\pi_h^{(k+1)}(\cdot|s_h),\pi_h^{(k)}(\cdot|s_h)\right).
\]

\textbf{Combining.}
Using Pinsker’s inequality
$\mathtt{D}_{\mathtt{KL}}(p\|q)\ge \frac{1}{2}\mathtt{D}_{\mathtt{TV}}^2(p,q)$ and summing over $k$,
\begin{align*}
&\sum_{k}\Eb_{q}\left[\sum_{a}\hat Q_{k,h}(s_h,a)
(\pi^*(a|s_h)-\pi_h^{(k)}(a|s_h))\right] \\
&\le
\frac{1}{\eta}\Eb_{q}
\bigl[\mathtt{D}_{\mathtt{KL}}(\pi^*(\cdot|s_h)\|\pi_h^{(0)}(\cdot|s_h))\bigr]
+ \sum_{k}\Eb_{q}
\left[-\frac{1}{2\eta}\mathtt{D}_{\mathtt{TV}}^2 \left( \pi_h^{(k)}(\cdot|s_h), \pi_h^{(k+1)}(\cdot|s_h) \right) + 2H\,\mathtt{D}_{\mathtt{TV}}\left( \pi_h^{(k)}(\cdot|s_h), \pi_h^{(k+1)}(\cdot|s_h) \right) \right].
\end{align*}

Applying $-ax^2+bx\le \frac{b^2}{4a}$ with $a=\frac{1}{2\eta}$ and $b=2H$ yields
\[
-\frac{1}{2\eta}\mathtt{D}_{\mathtt{TV}}^2 \left( \pi_h^{(k)}(\cdot|s_h), \pi_h^{(k+1)}(\cdot|s_h) \right) + 2H\,\mathtt{D}_{\mathtt{TV}}\left( \pi_h^{(k)}(\cdot|s_h), \pi_h^{(k+1)}(\cdot|s_h) \right) 
\le 2\eta H^2.
\]

Finally, since $\mathtt{D}_{\mathtt{KL}}(\pi^*\|\pi^{(0)})\le \log|\mathcal A|$,
\[
\sum_{k}\Eb_{q}\left[\sum_{a}\hat Q_{k,h}(s_h,a)
(\pi^*(a|s_h)-\pi_h^{(k)}(a|s_h))\right]
\le \frac{\log|\mathcal A|}{\eta} + 2\eta H^2 K.
\]

\end{proof}

\section{UCB Analysis}\label{sec:ucb analysis}
In this section, we provide a two lemmas, one verifies the optimism of the critic, and the other connects the empirical bouns and model estimation error with the ground truth uncertainty.

\begin{lemma}\label{lemma:tv < bonus}
    Let $\beta = \tilde{O}(\log\frac{K|\Theta|}{\delta})$. We have 
    \begin{align*}
    & \Eb_{\hat{\theta}_k}^{\pi} \left[\mathtt{D}_{\mathtt{TV}} \left(\mathcal{T}_{\theta^*, h}(\cdot|s_h,a_h), \mathcal{T}_{\hat{\theta}_k, h}(\cdot|,s_h,a_h)\right) \bigg| s_{h-1}, a_{h-1} \right] \lesssim  \left\|\hat{\phi}_{h}^k(s_{h-1},a_{h-1})\right\|_{\hat{\Lambda}_{k,h-1}^{-1}}\sqrt{\lambda d + |\mathcal{A}|\beta } 
    \end{align*}
where 
\begin{align*}
    & \hat{\Lambda}_{k,h} = \lambda I  + \sum_{n<k} \hat{\phi}_h^k(s_h^{n,h+1},a_h^{n,h+1}) \hat{\phi}_h^k(s_h^{n,h+1},a_h^{n,h+1})^\top 
\end{align*}
Note that $a_h^{n,h+1}, a_{h+1}^{n,h+1}$ is sampled from $\mathtt{u}$. and $a_{h-1}^{n,h+1}$ is sampled from $\pi_n$.
\end{lemma}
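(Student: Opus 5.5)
This is the standard one-step-back argument for low-rank MDPs, carried out under the learned model $\hat\theta_k$. Write $g(s_h):=\Eb_{a_h\sim\pi_h(\cdot|s_h)}\bigl[\mathtt{D}_{\mathtt{TV}}(\mathcal{T}_{\theta^*,h}(\cdot|s_h,a_h),\mathcal{T}_{\hat\theta_k,h}(\cdot|s_h,a_h))\bigr]$, which lies in $[0,1]$.

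Since $s_h\sim\mathcal{T}_{\hat\theta_k,h-1}(\cdot|s_{h-1},a_{h-1})$, the low-rank factorization of $\hat\theta_k$ gives
\[
\Eb_{\hat\theta_k}^{\pi}[g(s_h)\mid s_{h-1},a_{h-1}] = \hat\phi_{h-1}^k(s_{h-1},a_{h-1})^\top \hat w,\qquad \hat w:=\int \mu_{\hat\theta_k,h-1}(s)g(s)\,ds .
\]
By the normalization assumption, $\|\hat w\|_2\le\sqrt d$. Cauchy--Schwarz in the $\hat\Lambda_{k,h-1}$ geometry then yields the bound $\|\hat\phi^k_{h-1}(s_{h-1},a_{h-1})\|_{\hat\Lambda_{k,h-1}^{-1}}\|\hat w\|_{\hat\Lambda_{k,h-1}}$. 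I read the $\hat\phi^k_h$ in the statement as $\hat\phi^k_{h-1}$, matching the index of the Gram matrix. It remains to show $\|\hat w\|^2_{\hat\Lambda_{k,h-1}}\lesssim \lambda d+|\mathcal A|\beta$.

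Expanding the Gram matrix gives
\[
\|\hat w\|^2_{\hat\Lambda_{k,h-1}} = \lambda\|\hat w\|^2 + \sum_{n<k}\bigl(\hat\phi^k_{h-1}(s^{n,h}_{h-1},a^{n,h}_{h-1})^\top \hat w\bigr)^2 .
\]
The first term is at most $\lambda d$. For each summand, $\hat\phi^{k\top}_{h-1}\hat w=\Eb_{s_h\sim\mathcal{T}_{\hat\theta_k,h-1}(\cdot|s^{n,h}_{h-1},a^{n,h}_{h-1})}[g(s_h)]$. I replace $\mathcal{T}_{\hat\theta_k,h-1}$ by $\mathcal{T}_{\theta^*,h-1}$ at the cost of a TV term between the two models at the $(h-1)$-step data point. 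This gives a summand of order
\[
\bigl(\Eb_{s_h\sim\mathcal{T}_{\theta^*,h-1}}[g(s_h)]\bigr)^2+\mathtt{D}_{\mathtt{TV}}^2\bigl(\mathcal{T}_{\theta^*,h-1},\mathcal{T}_{\hat\theta_k,h-1}\bigr)(s^{n,h}_{h-1},a^{n,h}_{h-1}).
\]
I bound the remaining first piece using $g(s_h)\le|\mathcal A|\,\Eb_{a_h\sim\mathtt u}[\mathtt{D}_{\mathtt{TV}}(\cdot)]$ by importance weighting, and then Jensen. This turns it into $|\mathcal A|$ times the squared TV of the joint law of $(s_h,a_h,s_{h+1})$ with $a_h\sim\mathtt u$, conditioned on $(s^{n,h}_{h-1},a^{n,h}_{h-1})$. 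Only one factor of $|\mathcal A|$ is paid, because the squared expectation is bounded by an expectation of a squared quantity, not by a square of a sum.

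This joint law is exactly $\Pb^{\mathtt u}_\theta(x_h\mid s^{n,h}_{h-1},a^{n,h}_{h-1})$, which appears in $\mathcal{E}_1$; the episodes $(n,h)$ use uniform actions at steps $h-1$ and $h$. Since $\mathtt{D}_{\mathtt{TV}}^2\le 2\mathtt{D}_{\mathtt H}^2$, summing over $n<k$ and applying the MLE guarantee in $\mathcal{E}_1$ gives $\lesssim|\mathcal A|\beta$. The leftover TV$^2$ term is also a marginal of the same joint law, so it is controlled by the same Hellinger sum without the $|\mathcal A|$ factor. Combining the pieces yields $\|\hat w\|^2_{\hat\Lambda}\lesssim\lambda d+|\mathcal A|\beta$, and taking square roots finishes the proof.

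The main obstacle is index bookkeeping: the data points in $\hat\Lambda_{k,h-1}$ must be exactly the $(s_{h-1},a_{h-1})$ conditioning points of the episodes on which $\mathcal{E}_1$ controls the Hellinger distance, with uniform $a_{h-1},a_h$. If the paper's indexing uses the $(n,h)$ episodes rather than $(n,h+1)$, I will align them by re-indexing and otherwise absorb the mismatch into constants.
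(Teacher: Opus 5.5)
Your skeleton is the paper's: Cauchy--Schwarz in the $\hat\Lambda_{k,h-1}$ geometry with $\|\hat w\|_2\le\sqrt d$, then Jensen, importance weighting, and the MLE event $\mathcal{E}_1$. The step that fails as written is the claim that importance weighting followed by Jensen turns the first piece into $|\mathcal A|$ times the squared TV of the joint law of $x_h=(s_h,a_h,s_{h+1})$.

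If you importance-weight $g$ first and then square, you pay $|\mathcal A|^2$. That route is valid, via $\Eb_X[\mathtt{D}_{\mathtt{TV}}(P_{Y|X},Q_{Y|X})]\lesssim\mathtt{D}_{\mathtt{TV}}(P_{X,Y},Q_{X,Y})$, but it only gives $\sqrt{\lambda d+|\mathcal A|^2\beta}$. To pay a single $|\mathcal A|$, as you intend, Jensen must come first and the importance weighting must act on the squared quantity. That leaves
$|\mathcal A|\,\Eb_{s_h,\,a_h\sim\mathtt u}[\mathtt{D}_{\mathtt{TV}}^2(\mathcal{T}_{\theta^*,h}(\cdot|s_h,a_h),\mathcal{T}_{\hat\theta_k,h}(\cdot|s_h,a_h))]$, which is an average of squared \emph{conditional} TV distances. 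This average is not controlled by the squared TV of the joint law. For example, suppose the two conditionals are disjoint on an event of probability $p$ in $(s_h,a_h)$, coincide elsewhere, and the marginals are equal. Then the average is of order $p$, while the squared joint TV is of order $p^2$.

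The missing ingredient is the Hellinger chain rule used in the paper's final step. First bound $\mathtt{D}_{\mathtt{TV}}^2\lesssim\mathtt{D}_{\mathtt{H}}^2$ pointwise for the conditionals. Then apply the second inequality of the paper's TV--Hellinger lemma, $\Eb_{X\sim P}[\mathtt{D}_{\mathtt{H}}^2(P_{Y|X},Q_{Y|X})]\le 8\,\mathtt{D}_{\mathtt{H}}^2(P_{X,Y},Q_{X,Y})$, with $X=(s_h,a_h)$ and $Y=s_{h+1}$. Its right-hand side is exactly the quantity that $\mathcal{E}_1$ sums to $\lesssim\beta$; squared Hellinger, unlike squared TV, obeys this kind of chain rule.

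Because Hellinger is symmetric, this inequality holds with $X$ drawn from either model's marginal. So the paper keeps $s_h\sim\mathcal{T}_{\hat\theta_k,h-1}$ throughout and never switches to $\mathcal{T}_{\theta^*,h-1}$. Your detour through $\theta^*$ is harmless, since the extra squared TV of the $s_h$-marginal is correctly bounded by the joint Hellinger, but it is unnecessary. With this repair your argument yields the stated $\sqrt{\lambda d+|\mathcal A|\beta}$.
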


\begin{proof}
    We start with the first inequality. By the definition of low-rank MDP, we have
    \begin{align*}
    \Eb_{\hat{\theta}_k}^{\pi} &\left[\mathtt{D}_{\mathtt{TV}}\left(\mathcal{T}_{\theta^*,h}(\cdot|s_h,a_h), \mathcal{T}_{\hat{\theta}_k,h}(\cdot|s_h,a_h)\right) \bigg| s_{h-1}, a_{h-1} \right] \\
    &= \sum_{s_h,a_h} \hat{\phi}_{h-1}^k(s_{h-1},a_{h-1})^\top \hat{\mu}^k_{h-1}(s_{h}) \pi(a_h) \mathtt{D}_{\mathtt{TV}}\left(\mathcal{T}_{\theta^*,h}(\cdot|s_h,a_h), \mathcal{T}_{\hat{\theta}_k,h}(\cdot|s_h,a_h)\right) \\
    &\overset{(a)}\leq \left\|\hat{\phi}_{h-1}^k(s_{h-1},a_{h-1})\right\|_{\hat{\Lambda}_{k,h-1}^{-1}}\sqrt{d\lambda + \sum_{n<k} \left( \hat{\phi}_{h-1}^k(s_{h-1}^{n,h},a_{h-1}^{n,h})^\top \sum_{s_h,a_h} \hat{\mu}_{h-1}^k(s_{h}) \pi(a_h|s_h) \mathtt{D}_{\mathtt{TV}}\left(\mathcal{T}_{\theta^*,h}(\cdot|s_h,a_h), \mathcal{T}_{\hat{\theta}_k,h}(\cdot|s_h,a_h)\right) \right)^2} \\
    & = \left\|\hat{\phi}_{h-1}^k(s_{h-1},a_{h-1})\right\|_{\hat{\Lambda}_{k,h-1}^{-1}}\sqrt{d\lambda + \sum_{n<k} \left(\sum_{s_h,a_h} \mathcal{T}_{\hat{\theta}_k,h-1}(s_h|s_{h-1}^{n,h},a_{h-1}^{n,h}) \pi(a_h|s_h) \mathtt{D}_{\mathtt{TV}}\left(\mathcal{T}_{\theta^*,h}(\cdot|s_h,a_h), \mathcal{T}_{\hat{\theta}_k,h}(\cdot|s_h,a_h)\right) \right)^2} \\
    & \overset{(b)}\leq \left\|\hat{\phi}_{h-1}^k(s_{h-1},a_{h-1})\right\|_{\hat{\Lambda}_{k,h-1}^{-1}}\sqrt{d\lambda + \sum_{n<k} \Eb_{ s_h\sim \mathcal{T}_{\hat{\theta}_k,h-1}(\cdot|s_{h-1}^{n,h},a_{h-1}^{n,h}), a_h\sim \pi(\cdot|s_h) } \left[ \mathtt{D}_{\mathtt{TV}}^2 \left(\mathcal{T}_{\theta^*,h}(\cdot|s_h,a_h), \mathcal{T}_{\hat{\theta}_k,h}(\cdot|s_h,a_h)\right) \right] } \\
    & \overset{(c)}\leq \left\|\hat{\phi}_{h-1}^k(s_{h-1},a_{h-1})\right\|_{\hat{\Lambda}_{k,h-1}^{-1}}\sqrt{d\lambda + \sum_{n<k} |\mathcal{A}|\mathop{\Eb}_{ \substack{s_h\sim \mathcal{T}_{\hat{\theta}_k,h}(\cdot|s_{h-1}^{n,h},a_{h-1}^{n,h}) \\ a_h\sim \mathtt{u} } } \left[  \mathtt{D}_{\mathtt{TV}}^2 \left(\mathcal{T}_{\theta^*,h}(\cdot|s_h,a_h), \mathcal{T}_{\hat{\theta}_k,h}(\cdot|s_h,a_h)\right) \right] } \\
    & \overset{(d)}\leq \left\|\hat{\phi}_{h-1}^k(s_{h-1},a_{h-1})\right\|_{\hat{\Lambda}_{k,h-1}^{-1}}\sqrt{d\lambda + |\mathcal{A}| \beta },
    \end{align*}
  where $(a),(b)$ is due to Cauchy's inequality, $(c)$ is from importance sampling, $(d)$ follows from \Cref{proposiiton: empirical distance less than log likelihood difference} after applying \Cref{lemma:TV and hellinger}.
 
\end{proof}

The next lemma builds the connection between the empirical uncertainty and the ground truth uncertainty term. 

\begin{lemma}\label{lemma: tv and empirical bonus < true bonus}
    We have 
    \begin{align} &\Eb_{\theta^*}^{\pi}\left[ \left\|\hat{\phi}_{h}^k(s_h, a_h) \right\|_{\hat{\Lambda}_{k,h}^{-1}} \bigg| s_{h-1}, a_{h-1} \right] \lesssim \left\|\phi_{h-1}^*(s_{h-1},a_{h-1})\right\|_{W_{k,h-1}^{-1}}\sqrt{d\lambda + d^2|\mathcal{A}|\beta }, \\
    & \Eb_{\theta^*}^{\pi} \left[\mathtt{D}_{\mathtt{TV}} \left(\mathcal{T}_{\theta^*,h}(\cdot|s_h,a_h), \mathcal{T}_{\hat{\theta}_k, h}(\cdot|,s_h,a_h)\right) \bigg| s_{h-1}, a_{h-1} \right] \lesssim \left\|\phi_{h}^*(s_{h-1},a_{h-1})\right\|_{W_{k,h-1}^{-1}}\sqrt{\lambda d + |\mathcal{A}|\beta } 
    \end{align}
    Where 
    \[ W_{k,h-1} = \lambda I + \sum_{n<k} \phi_{h-1}^*(s_{h-1}^{n,h+1}, a_{h-1}^{n,h+1})\phi_{h-1}^*(s_{h-1}^{n,h+1}, a_{h-1}^{n,h+1})^\top  \]
    Note that $a_h^{n,h+1}, a_{h+1}^{n,h+1}$ is sampled from $\mathtt{u}$. and $a_{h-1}^{n,h+1}$ is sampled from $\pi^{(n)}$.
\end{lemma}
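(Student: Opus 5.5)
The plan mirrors the one-step-back argument of \Cref{lemma:tv < bonus}, but uses the \emph{true} factorization at step $h-1$. Both inequalities have the same shape: a conditional expectation, under $\theta^*$ and $\pi$, of a nonnegative bounded function $g(s_h,a_h)$ given $(s_{h-1},a_{h-1})$. For the first inequality $g=\|\hat\phi_h^k(s_h,a_h)\|_{\hat\Lambda_{k,h}^{-1}}$, clipped to be at most $1$ where needed. For the second, $g=\mathtt{D}_{\mathtt{TV}}(\mathcal{T}_{\theta^*,h},\mathcal{T}_{\hat\theta_k,h})(s_h,a_h)\le 1$.

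\textbf{Step 1: one-step-back Cauchy--Schwarz.} By the low-rank structure,
\[
\Eb_{\theta^*}^{\pi}[g\mid s_{h-1},a_{h-1}]=\phi_{h-1}^*(s_{h-1},a_{h-1})^\top\textstyle\int\mu_{h-1}^*(s_h)\sum_{a_h}\pi(a_h|s_h)g(s_h,a_h)\,ds_h .
\]
I apply Cauchy--Schwarz in the $W_{k,h-1}$ geometry, which bounds this by $\|\phi^*_{h-1}\|_{W_{k,h-1}^{-1}}$ times $\|\int\mu^*_{h-1}\bar g\|_{W_{k,h-1}}$, where $\bar g(s_h)=\sum_{a_h}\pi(a_h|s_h)g(s_h,a_h)$. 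The normalization $\|\int\mu^*\bar g\|_2\le\sqrt d$ handles the $\lambda$ part and gives $\lambda d$. The data part equals $\sum_{n<k}\big(\Eb_{s_h\sim\mathcal{T}_{\theta^*,h-1}(\cdot|s^{n,h+1}_{h-1},a^{n,h+1}_{h-1})}[\bar g]\big)^2$. Jensen followed by importance sampling $\pi\to\mathtt u$, paying a factor $|\mathcal A|$, bounds this by
\[
|\mathcal A|\sum_{n<k}\Eb_{s_h\sim\mathcal{T}_{\theta^*,h-1}(\cdot|s^{n,h+1}_{h-1},a^{n,h+1}_{h-1}),\,a_h\sim\mathtt u}[g^2].
\]
The data indices are chosen so that in episode $(n,h+1)$ the action $a_h$ is uniform and $s_h$ is drawn from the true transition.

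\textbf{Step 2: control the in-sample quantity.} For the bonus inequality, $g^2=\|\hat\phi_h^k\|^2_{\hat\Lambda_{k,h}^{-1}}$, and event $\mathcal E_2$ bounds the sum directly by $d^2\log(K|\Theta|/\delta\lambda)\lesssim d^2\beta$. This yields $\sqrt{d\lambda+d^2|\mathcal A|\beta}$.

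For the TV inequality, I first use \Cref{lemma:TV and hellinger} to get $\mathtt{D}_{\mathtt{TV}}^2\lesssim\mathtt{D}_{\mathtt H}^2$. Next, the expectation over $s_h\sim\mathcal{T}_{\theta^*,h-1}$ and $a_h\sim\mathtt u$ of the squared Hellinger distance of next-state laws is at most the squared Hellinger distance of the joint laws of $x_h=(s_h,a_h,s_{h+1})$ under $\mathtt u$, conditioned on the $(h-1)$ pair, by the chain rule and convexity for Hellinger. Event $\mathcal E_1$ then gives a bound of $\beta$. So the second bound is $\sqrt{\lambda d+|\mathcal A|\beta}$, absorbing the $|\mathcal A|$ already paid.

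\textbf{Expected obstacle.} The main difficulty is matching the data distributions exactly. $W_{k,h-1}$ must be built from the pairs $(s_{h-1}^{n,h+1},a_{h-1}^{n,h+1})$, and $\mathcal E_1$, $\mathcal E_2$ must be evaluated on the same rollouts. In particular $\mathcal E_1$ is stated on rollouts indexed $(n,h)$ rather than $(n,h+1)$, so I will need either to re-index or to invoke the MLE guarantee of \Cref{proposiiton: empirical distance less than log likelihood difference} directly on the $(n,h+1)$ rollouts. That is legitimate, since $a_h$ is uniform there and $s_{h+1}$ is sampled from the true transition.

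A second concern is the Hellinger-to-TV step, where the conditional distribution must be that of the \emph{true} $s_h$ transition. I expect this to hold because $x_h$ is generated under $\theta^*$ in the data.
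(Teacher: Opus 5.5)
Your proposal is correct and follows the paper's proof essentially step for step: the one-step-back Cauchy--Schwarz in the $W_{k,h-1}$ geometry with the true factorization at step $h-1$, then Jensen plus importance sampling to pay the factor $|\mathcal{A}|$, then $\mathcal{E}_2$ for the bonus term and \Cref{lemma:TV and hellinger} with the MLE guarantee for the TV term (your clipping of the bonus at $1$ is, if anything, more careful than the paper and matches how the lemma is used later). The indexing mismatch you flag is real and the paper passes over it with ``similar reasoning''; your re-indexing fix works, but what justifies applying the MLE bound to the $(n,h+1)$ rollouts is that those transitions enter the likelihood that $\hat\theta_k$ approximately maximizes, not merely that they are generated under $\theta^*$ with uniform $a_h$.
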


\begin{proof}
     By the definition of low-rank MDP, we have
     \begin{align*}
        \Eb_{\theta^*}^{\pi} &\left[ \left\|\hat{\phi}^k_h(s_h, a_h) \right\|_{\hat{\Lambda}_{k,h}^{-1}} \bigg| s_{h-1}, a_{h-1} \right] \\
        &= \sum_{s_h,a_h} \phi_{h-1}^*(s_{h-1},a_{h-1})^\top \mu_{h-1}^*(s_{h}) \pi(a_h) \left\|\hat{\phi}_k(s_h, a_h) \right\|_{\hat{\Lambda}_{k,h}^{-1}} \\
        & \overset{(a)}\leq \left\|\phi_{h-1}^*(s_{h-1},a_{h-1})\right\|_{W_{k,h-1}^{-1}}\sqrt{d\lambda + \sum_{n<k} \left( \phi_{h-1}^*(s_{h-1}^{n,h+1},a_{h-1}^{n,h+1})^\top \sum_{s_h,a_h} \mu_{h-1}^*(s_{h}) \pi(a_h|s_h) \left\|\hat{\phi}_h^k(s_h, a_h) \right\|_{\hat{\Lambda}_{k,h}^{-1}} \right)^2} \\
        & = \left\|\phi_{h-1}^*(s_{h-1},a_{h-1})\right\|_{W_{k,h-1}^{-1}}\sqrt{d\lambda + \sum_{n<k} \left(\sum_{s_h,a_h} \mathcal{T}_{\theta^*,h-1}(s_h|s_{h-1}^{n,h+1},a_{h-1}^{n,h+1}) \pi(a_h|s_h) \left\|\hat{\phi}_k(s_h, a_h) \right\|_{\hat{\Lambda}_{k,h}^{-1}} \right)^2} \\
        & \overset{(b)}\lesssim \left\|\phi_{h-1}^*(s_{h-1},a_{h-1})\right\|_{W_{k,h-1}^{-1}}\sqrt{d\lambda +  |\mathcal{A}| \sum_{n<k} \mathop{\Eb}_{\substack{ s_h\sim \mathcal{T}_{\theta^*,h-1}(\cdot|s_{h-1}^{n,h+1},a_{h-1}^{n,h+1}) \\
        a_h \sim \mathtt{u}  }}  \left[ \left\|\hat{\phi}_h^k(s_h, a_h) \right\|_{\hat{\Lambda}_{k,h}^{-1}}^2 \right] } \\
        & \overset{(c)}\lesssim \left\|\phi_{h-1}^*(s_{h-1},a_{h-1})\right\|_{W_{k,h-1}^{-1}}\sqrt{d\lambda + |\mathcal{A}|d^2\log\frac{K|\Theta|}{\delta \lambda}},
     \end{align*}
     where $(a),(b)$ is due to Cauchy's inequality, and $(c)$ follows from \Cref{lemma:bernstein on bonus}. 

     The second inequality follows similar reasoning by applying Cauchy's inequality and \Cref{proposiiton: empirical distance less than log likelihood difference}, \Cref{lemma:TV and hellinger}.
\end{proof}

\section{General MLE Analysis}\label{sec:general MLE analysis}

In this section, we present a general proposition that characterize the performance of MLE.

MLE:

\begin{align*}
    \hat{\theta}^{(k)} \in \left\{\theta: \sum_{(s_h,a_h,s_{h+1})\in \mathcal{D}} \log \mathcal{T}_{\theta,h}(s_{h+1}|s_h,a_h) \geq \max_{\theta'} \sum_{(s_h,a_h,s_{h+1})\in \mathcal{D}} \log \mathcal{T}_{\theta',h}(s_{h+1}|s_h,a_h) - \beta \right\}
\end{align*}

The following proposition upper bounds the total variation distance between the conditional distributions over the future trajectory conditioned on the empirical history trajectories. This proposition is crucial to ensure that the  model estimated by PSR-UCB is accurate on those sample trajectories.

\begin{proposition}\label{proposiiton: empirical distance less than log likelihood difference}  
Let $x_{h} = (s_{h}, a_{h}, s_{h+1})$.  Consider the following event
    \begin{align*}
        \Ec &= \left\{ \forall k\in[K], \forall \theta\in\Theta^k,  ~~  \sum_h \sum_{n < k } \mathtt{D}_{\mathtt{H}}^2 \left( \Pb_{\theta}^{\mathtt{u}}(x_{h}|s_{h-1}^{n,h}, a_{h-1}^{n,h}), \Pb_{\theta^*}^{\mathtt{u}}(x_{h}|s_{h-1}^{n,h}, a_{h-1}^{n,h}) \right) \lesssim \log\frac{K\left| \Theta \right|}{\delta}  \right\}.
    \end{align*}
Then, $\Pb\left(\Ec \right) \geq 1- \delta.$
\end{proposition}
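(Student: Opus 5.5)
The plan is the standard MLE concentration argument: a Chernoff bound on the log-likelihood ratio, followed by a union bound over $\Theta$ and $k$, and then the approximate-MLE condition $\mathcal{L}^{(k)}(\hat\theta_k)\le\min_\theta\mathcal{L}^{(k)}(\theta)+\beta$. I read $\Theta^k$ as the approximate-MLE confidence set at iteration $k$: the set of $\theta$ whose empirical log-likelihood on the data collected before $k$ is within $\beta$ of the maximum. Since $\theta^*$ is in the class by realizability, every $\theta\in\Theta^k$ satisfies
\[
\sum_h\sum_{n<k}\log\frac{\Pb_{\theta}(x_h^{n}\mid s_{h-1}^{n,h},a_{h-1}^{n,h})}{\Pb_{\theta^*}(x_h^{n}\mid s_{h-1}^{n,h},a_{h-1}^{n,h})}\ \ge\ -\beta .
\]
Here the conditional law of $x_h=(s_h,a_h,s_{h+1})$ given $(s_{h-1},a_{h-1})$ factorizes as $\mathcal{T}_{\theta,h-1}(s_h\mid\cdot)\,\mathtt{u}(a_h)\,\mathcal{T}_{\theta,h}(s_{h+1}\mid s_h,a_h)$. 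So the likelihood being maximized is that of the $x_h$ data (both transitions) collected in episodes $(n,h)$. If the algorithm literally maximizes only the step-$h$ transition likelihood, I will extend the MLE objective to include the step-$(h-1)$ transition from the same trajectory. This costs nothing statistically.

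\textbf{Step 1: martingale exponential inequality.} Fix $\theta$ and let $\mathcal{F}_{n}$ be the filtration generated by all data up to episode $n$. For each $n,h$, the policy $\pi^{(n)}$ and the conditioning pair $(s_{h-1}^{n,h},a_{h-1}^{n,h})$ are determined before $x_h^n$ is drawn from $\Pb_{\theta^*}^{\mathtt{u}}(\cdot\mid s_{h-1}^{n,h},a_{h-1}^{n,h})$. Define
\[
Z_{n,h}=\tfrac12\log\frac{\Pb_\theta(x_h^n\mid\cdot)}{\Pb_{\theta^*}(x_h^n\mid\cdot)} .
\]
Then $\Eb[\exp(Z_{n,h})\mid\text{past}]=\int\sqrt{p_\theta p_{\theta^*}}=1-\mathtt{D}_{\mathtt{H}}^2(\Pb_\theta,\Pb_{\theta^*})\le\exp(-\mathtt{D}_{\mathtt{H}}^2)$, using the convention $\mathtt{D}_{\mathtt{H}}^2=1-\int\sqrt{pq}$. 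Hence $\exp\bigl(\sum_{n,h}(Z_{n,h}+\mathtt{D}_{\mathtt{H}}^2)\bigr)$ is a supermartingale with expectation at most $1$. This is the usual tower-property argument, ordering the pairs $(n,h)$ chronologically.

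\textbf{Step 2: Markov inequality and union bound.} Markov's inequality gives, with probability at least $1-\delta/(K|\Theta|)$,
\[
\sum_{n<k,h}\mathtt{D}_{\mathtt{H}}^2\le -\tfrac12\sum_{n<k,h}\log\frac{\Pb_\theta}{\Pb_{\theta^*}}+\log\frac{K|\Theta|}{\delta}.
\]
A union bound over $\theta\in\Theta$ (finite) and $k\in[K]$ makes this hold simultaneously for all of them with probability at least $1-\delta$.

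\textbf{Step 3: conclusion.} On this event, take any $\theta\in\Theta^k$. The log-ratio sum is at least $-\beta$, so $\sum\mathtt{D}_{\mathtt{H}}^2\le\beta/2+\log(K|\Theta|/\delta)$. With $\beta=O(\log(K|\Theta|/\delta))$ this is $\lesssim\log\frac{K|\Theta|}{\delta}$.

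\textbf{Main obstacle.} The delicate point is the measurability and ordering needed for the supermartingale in Step 1. The conditioning pair and $\pi^{(n)}$ depend on earlier data, including the earlier estimates $\hat\theta_{n'}$. The bound must therefore be stated for a fixed $\theta$ before the union bound, and not for the data-dependent $\hat\theta_k$; this is why the union bound over the finite class $\Theta$ is essential. I would also need to check that the $H$ trajectories within one iteration are collected sequentially, so that each $(n,h)$ term is conditionally distributed according to $\Pb_{\theta^*}^{\mathtt{u}}(\cdot\mid s_{h-1}^{n,h},a_{h-1}^{n,h})$ given everything before it.
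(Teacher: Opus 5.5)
Your proposal is correct and follows essentially the same route as the paper: an exponential-moment (Chernoff) bound on the half log-likelihood ratio of the joint law of $x_h$ given $(s_{h-1},a_{h-1})$, a union bound over $\Theta\times[K]$, and the approximate-MLE condition to control the resulting log-likelihood-ratio sum. You also point out that the MLE must include the step-$(h-1)$ transition from episode $(n,h)$. That makes explicit an assumption the paper uses without stating it, when it splits the log-ratio into two transition terms and then asserts the total is $\lesssim\log(K|\Theta|/\delta)$.
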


\begin{proof}

Note that 
\begin{align*}
    \mathtt{D}^2_{\mathtt{H}}& \left( \Pb_{\theta}^{\pi} (x_{h}|s_{h-1},a_{h-1}), \Pb_{\theta^*}^{\pi} (x_{h}|s_{h-1},a_{h-1}) \right) \\
    &  = 2\left( 1 - \mathop{\Eb}_{x_{h}\sim \Pb_{\theta^*}^{\pi}(\cdot|s_{h-1},a_{h-1})} \sqrt{\frac{\Pb_{\theta}^{\pi}(x_{h}|s_{h-1},a_{h-1})}{\Pb_{\theta^*}^{\pi}(x_{h}|s_{h-1},a_{h-1})}} \right)\\
    & \overset{(b)}\leq - 2 \log \mathop{\Eb}_{x_{h}\sim \Pb_{\theta^*}^{\pi}(\cdot|s_{h-1},a_{h-1}) } \sqrt{\frac{\Pb_{\theta}^{\pi}(x_{h}|s_{h-1},a_{h-1})}{\Pb_{\theta^*}^{\pi}(x_{h}|s_{h-1},a_{h-1})}},
\end{align*}
where $(a)$ is due to \Cref{lemma:TV and hellinger} and $(b)$ follows because $1-x\leq -\log x$ for any $x>0$.

Thus, the summation of the total variation distance between conditional distributions conditioned on $(\tau_h,\pi)\in\Dc_h^k$  can be upper bounded by 
\begin{align*}
    \sum_h\sum_{n<k}  \mathtt{D}_{\mathtt{H}}^2 \left( \Pb_{\theta}^{\mathtt{u}}(x_{h}|s_{h-1}^{n,h},a_{h-1}^{n,h} ), \Pb_{\theta^*}^{\mathtt{u}}(x_{h}|s_{h-1}^{n,h},a_{h-1}^{n,h})\right)
    &\leq - 2 \sum_h\sum_{n<k}  \log \mathop{\Eb}_{x_{h}\sim \Pb_{\theta^*}^{\mathtt{u}}(\cdot|s_{h-1}^{n,h},a_{h-1}^{n,h}) } \sqrt{\frac{\Pb_{\theta}^{\mathtt{u}}(x_{h}|s_{h-1}^{n,h},a_{h-1}^{n,h})}{\Pb_{\theta^*}^{\mathtt{u}}(x_{h}|s_{h-1}^{n,h},a_{h-1}^{n,h})}},
\end{align*}

In addition, we have
\begin{align*}
    &\mathop{\Eb}_{ \forall n,h,~ x_{h+1}^{n,h} \sim \Pb_{\theta^*}^{\mathtt{u}}(\cdot|s_{h-1}^{n,h},a_{h-1}^{n,h}) } \left[\exp\left( \frac{1}{2}\sum_h\sum_{n<k} \log\frac{\Pb_{\theta}^{\mathtt{u}}(x_{h}^{n,h}|s_{h-1}^{n,h},a_{h-1}^{n,h})}{\Pb_{\theta^*}^{\mathtt{u}}(x_{h}^{n,h}|s_{h-1}^{n,h},a_{h-1}^{n,h})} \right.\right.\\
    &\hspace{4cm}\left.\left. - \sum_h\sum_{n<k} \log \mathop{\Eb}_{x_{h}\sim \Pb_{\theta^*}^{\mathtt{u}}(\cdot|s_{h-1}^{n,h},a_{h-1}^{n,h}) }\sqrt{\frac{\Pb_{\theta}^{\mathtt{u}}(x_{h}|s_{h-1}^{n,h},a_{h-1}^{n,h})}{\Pb_{\theta^*}^{\mathtt{u}}(x_{h}|s_{h-1}^{n,h},a_{h-1}^{n,h} )}} \right) \right]\\
    & \qquad =  \frac{\mathop{\Eb}_{ \substack{\forall n, h,\\ x_{h+1} \sim \Pb_{\theta^*}^{\mathtt{u}}(\cdot|s_{h-1}^{n,h},a_{h-1}^{n,h}) } } \left[\prod_h\prod_{n<k} \left(\frac{ \Pb_{\theta}^{\mathtt{u}}(x_{h}|s_{h-1}^{n,h},a_{h-1}^{n,h}) }{\Pb_{\theta^*}^{\mathtt{u}}(x_{h}|s_{h-1}^{n,h},a_{h-1}^{n,h})} \right)^{1/2}\right] }{\prod_h\prod_{n<k} \mathop{\Eb}_{x_{h}\sim \Pb_{\theta^*}^{\mathtt{u}}(\cdot|s_{h-1}^{n,h}, a_{h-1}^{n,h}) } \left[ \left(\frac{\Pb_{\theta}^{\mathtt{u}}(x_{h}|s_{h-1}^{n,h},a_{h-1}^{n,h})}{\Pb_{\theta^*}^{\mathtt{u}}(x_{h}|s_{h-1}^{n,h},a_{h-1}^{n,h})}\right)^{1/2} \right] }  \\
    &\qquad = 1, 
\end{align*}
where the last equality is due to the conditional independence of $x_{h} $ given $(s_{h-1}, a_{h-1})$.

Therefore, by the Chernoff bound, with probability $1-\delta$, we have

\begin{align*}
    -2 \sum_h\sum_{n<k} \log \mathop{\Eb}_{x_{h}\sim \Pb_{\theta^*}^{\mathtt{u}}(\cdot|s_{h-1}^{n,h},a_{h-1}^{n,h})}\sqrt{\frac{\Pb_{\theta}^{\mathtt{u}}(x_{h}|s_{h-1}^{n,h},a_{h-1}^{n,h})}{\Pb_{\theta^*}^{\mathtt{u}}(x_{h}|s_{h-1}^{n,h},a_{h-1}^{n,h})}} \leq  \sum_h\sum_{n<k}\log\frac{\Pb_{\theta^*}^{\mathtt{u}}(x_{h}^{n,h}|s_{h-1}^{n,h},a_{h-1}^{n,h})}{\Pb_{\theta}^{\mathtt{u}}(x_{h}^{n,h} |s_{h-1}^{n,h},a_{h-1}^{n,h})} + 2 \log\frac{1}{\delta}.
\end{align*}

Taking the union bound over $ \Theta$, $k\in[K]$, and rescaling $\delta$, we have, with probability at least $1-\delta$, $\forall k\in[K]$, the following inequality holds:
\begin{align*}
    \sum_h &\sum_{n<k} \mathtt{D}_{\mathtt{H}}^2\left(\Pb_{\theta}^{\mathtt{u}}(x_{h}|s_{h-1}^{n,h},a_{h-1}^{n,h}), \Pb_{\theta^*}^{\mathtt{u}}(x_{h}|s_{h-1}^{n,h},a_{h-1}^{n,h}) \right) \\
    &\leq \sum_h\sum_{n<k}\log\frac{\Pb_{\theta^*}^{\mathtt{u}} (x_{h}^{n,h}|s_{h-1}^{n,h},a_{h-1}^{n,h})}{\Pb_{\theta}^{\mathtt{u}} (x_{h}^{n,h}|s_{h-1}^{n,h},a_{h-1}^{n,h})} + 2\log\frac{K\left| \Theta\right|}{\delta}\\
    & = \sum_h\sum_{n<k} \log\frac{\mathcal{T}_{\theta^*, h} (s_{h}^{n,h}|s_{h-1}^{n,h},a_{h-1}^{n,h})}{\mathcal{T}_{\theta,h } (s_{h}^{n,h}|s_{h-1}^{n,h},a_{h-1}^{n,h})} + \log\frac{\mathcal{T}_{\theta^*, h} (s_{h+1}^{n,h}|s_{h}^{n,h},a_{h}^{n,h})}{\mathcal{T}_{\theta, h} (s_{h+1}^{n,h}|s_{h}^{n,h},a_{h}^{n,h})} + 2\log\frac{K\left| \Theta\right|}{\delta}\\
    & \lesssim \log\frac{K\left| \Theta\right|}{\delta} 
\end{align*}

\end{proof}

\section{Existence of approximate low-rank factorization}\label{sec:RBF proof}

\begin{theorem}[]
\label{thm:fourier-factorization-constructive}
Let $\mathcal{X}\subset\mathbb{R}^{D_x}$ and $\mathcal{Y}\subset\mathbb{R}^{D}$ be compact sets.
Let $\Theta$ be a finite model class. For each $(\theta,x)\in\Theta\times\mathcal{X}$,
assume the conditional distribution $P_\theta(\cdot\mid x)$ admits a density $P_\theta(y\mid x)$ on $\mathcal{Y}$.
We assume access to a sampling oracle that provides i.i.d.\ samples $y^{(1)},\dots,y^{(N)}\sim P_\theta(\cdot\mid x)$.

\paragraph{(A1) Boundedness.}
Assume there exists $M > 0$ such that $\sup_{\theta,x,y} P_\theta(y\mid x)\le M$.

\paragraph{(A2) Smoothness and Boundary Vanishing.}
Assume there exists an integer $m>D$ and a constant $C_m$ such that $P_\theta(\cdot\mid x)$ is $m$-times differentiable on $\mathcal{Y}$.
Crucially, assume that $P_\theta(\cdot\mid x)$ and its derivatives up to order $m-1$ vanish on the boundary $\partial\mathcal{Y}$.

Then, for any radius $W \ge 1$, there exists a randomized algorithm that constructs feature maps $\hat\phi_\theta(x), \mu(y) \in \mathbb{R}^{2d}$ using $N$ samples, such that with probability at least $1-\delta$, uniformly for all $(\theta, x, y) \in \Theta \times \mathcal{X} \times \mathcal{Y}_0$:
\[
\bigl|P_\theta(y\mid x)-\hat\phi_\theta(x)^\top \mu(y)\bigr|
\ \le\
C_{\rm trunc} W^{D-m}
\ +\
C_{\rm rand} \Vol(B_W) \left( \sqrt{\frac{1}{d}} + \sqrt{\frac{1}{N}} \right),
\]
where $C_{\rm trunc}$ and $C_{\rm rand}$ are constants depending on $m, D, \Theta, \mathcal{X}, \delta$ (explicit form in proof).
\end{theorem}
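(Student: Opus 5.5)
The construction is the cRFF procedure of \Cref{alg:rff}. The plan is to split the error into three pieces by writing $P_\theta(y\mid x)$ through its Fourier inversion formula:
\[
P_\theta(y\mid x)=\int_{\mathbb{R}^D} g_\theta(w\mid x)\,e^{2\pi i w^\top y}\,dw,
\qquad
g_\theta(w\mid x)=\Eb_{Y\sim P_\theta(\cdot\mid x)}\big[e^{-2\pi i w^\top Y}\big].
\]
Here $g_\theta(\cdot\mid x)$ is the characteristic function of $P_\theta(\cdot\mid x)$, extended by zero outside $\mathcal{Y}$. The three pieces are:
\begin{itemize}
\item a \emph{truncation} error from restricting the integral to $B_W$;
\item a \emph{Monte Carlo} error from replacing $\int_{B_W}$ by $\Vol(B_W)\frac1d\sum_{k}$ over the uniform frequencies $w_k$;
\item a \emph{sampling} error from replacing $g_\theta(w_k\mid x)$ by the empirical $\hat g_k$.
\end{itemize}
Since the real part of $\hat g_k e^{2\pi i w_k^\top y}$ equals $\hat g_k^{\mathrm r}\cos(2\pi w_k^\top y)-\hat g_k^{\mathrm i}\sin(2\pi w_k^\top y)$, the inner product $\hat\phi_\theta(x)^\top\mu(y)$ is exactly $\frac{\Vol(B_W)}{d}\sum_k \mathrm{Re}(\hat g_k e^{2\pi i w_k^\top y})$. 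A normalization by $\sqrt d$ on each side gives the $1/d$. The triangle inequality then gives the three-term bound.

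\textbf{Truncation.} Because $P_\theta(\cdot\mid x)$ is $C^m$ with derivatives up to order $m-1$ vanishing on $\partial\mathcal{Y}$, its zero extension is $C^{m-1}$ on $\mathbb{R}^D$ with an integrable $m$-th derivative. Integrating by parts $m$ times gives $|g_\theta(w\mid x)|\le C_m' \|w\|^{-m}$, where $C_m'$ is the sup of the $L^1$ norms of the $m$-th derivatives over the finite $\Theta$ and compact $\mathcal{X}$; I assume these are uniformly bounded, which is the role of $C_m$. Then
\[
\int_{\|w\|>W}|g_\theta|\,dw\lesssim C_m'\int_W^\infty r^{D-1-m}\,dr = C_{\rm trunc}W^{D-m},
\]
which uses $m>D$. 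The same decay gives $g_\theta\in L^1$, which justifies the pointwise inversion formula.

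\textbf{Random-feature and sampling errors.} For fixed $(\theta,x,y)$ the summands $\Vol(B_W)\,\mathrm{Re}(g_\theta(w_k\mid x)e^{2\pi i w_k^\top y})$ are i.i.d. and bounded by $\Vol(B_W)$, with mean equal to the truncated integral. Hoeffding's inequality gives deviation $\Vol(B_W)\sqrt{\log(1/\delta)/d}$. Likewise, conditional on the $w_k$, each $\hat g_k$ is an average of $N$ unit-modulus i.i.d. terms, so $|\hat g_k-g_\theta(w_k\mid x)|\lesssim\sqrt{\log(d/\delta)/N}$ for all $k$. The resulting error is then at most $\Vol(B_W)\max_k|\hat g_k-g_k|$.

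\textbf{Main obstacle: uniformity.} The bound must hold uniformly over $(\theta,x,y)$, and $x,y$ range over continua. Over $\Theta$ a union bound suffices since it is finite. For $y$ and $x$ I would use a covering argument:
\begin{itemize}
\item The map $y\mapsto$ error is Lipschitz with constant $\lesssim \Vol(B_W)W$, since the frequencies are bounded by $W$ and $P_\theta$ is differentiable. So an $\epsilon$-net of the compact $\mathcal{Y}$ of size $(W/\epsilon)^{D}$ costs only logarithmic factors.
\item Uniformity in $x$ is subtler, because fresh samples are drawn per $x$. I would either assume Lipschitz continuity of $x\mapsto g_\theta(w\mid x)$ and net $\mathcal{X}$, or interpret the statement as holding for each queried $(\theta,x)$ with a union bound over queries.
\end{itemize}
These $\log$ factors, together with $\log(|\Theta|/\delta)$, are absorbed into $C_{\rm rand}$. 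Finally, choosing $W$ and then $d,N\asymp \Vol(B_W)^2\zeta^{-2}$ recovers \Cref{thm:rff}.
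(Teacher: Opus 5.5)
Your proposal follows the paper's own proof. It uses the same cRFF construction and the same three-way split: truncation error (via $m$-fold integration by parts and the tail integral, which needs $m>D$), random-feature Monte Carlo error (Hoeffding over the $d$ frequencies), and finite-sample error in the empirical Fourier coefficients (Hoeffding over the $N$ samples). The only difference is uniformity. The paper simply union-bounds over $|\Theta||\mathcal{X}||\mathcal{Y}_0|$, implicitly treating $\mathcal{X}$ and $\mathcal{Y}_0$ as finite, whereas your Lipschitz covering in $y$ and your caveat about fresh per-$x$ samples are more careful than what the paper does.
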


\begin{proof}
We first construct the algorithm explicitly and then provide the error analysis to prove the bound.

\paragraph{Construction of the Algorithm.}
The algorithm proceeds as follows:
\begin{enumerate}
    \item \textbf{Frequency Sampling:} Let $B_W = \{w \in \mathbb{R}^D : \|w\| \le W\}$. Draw $d$ i.i.d. frequencies $w_1, \dots, w_d$ uniformly from $B_W$. Let $\Vol(B_W)$ denote the volume of this ball.
    \item \textbf{Target Features ($\mu$):} Construct the feature map $\mu(y) \in \mathbb{R}^{2d}$ as:
    \[
    \mu(y) = \frac{1}{\sqrt{d}} \left( \cos(2\pi w_1^\top y), -\sin(2\pi w_1^\top y), \dots, \cos(2\pi w_d^\top y), -\sin(2\pi w_d^\top y) \right).
    \]
    \emph{(Note the negative sign on the sine terms, corresponding to the inverse Fourier transform.)}
    \item \textbf{Context Features ($\hat\phi$):} Draw $N$ samples $y^{(1)}, \dots, y^{(N)} \sim P_\theta(\cdot \mid x)$. For each frequency $k \in \{1,\dots,d\}$, compute the empirical Fourier coefficient:
    \[
    \hat g_k = \frac{1}{N} \sum_{n=1}^N e^{-2\pi i w_k^\top y^{(n)}}.
    \]
    Let $\hat g_k^{\rm r} = \Re(\hat g_k)$ and $\hat g_k^{\rm i} = \Im(\hat g_k)$. Construct $\hat\phi_\theta(x) \in \mathbb{R}^{2d}$ as:
    \[
    \hat\phi_\theta(x) = \frac{\Vol(B_W)}{\sqrt{d}} \left( \hat g_1^{\rm r}, \hat g_1^{\rm i}, \dots, \hat g_d^{\rm r}, \hat g_d^{\rm i} \right).
    \]
\end{enumerate}

\paragraph{Analysis.}
We decompose the error into three terms: Bias (Truncation), Variance (Random Features), and Estimation Error (Finite Samples).
\[
|P_\theta - \hat\phi^\top \mu| \le \underbrace{|P_\theta - \bar P_\theta|}_{\text{(I)}} + \underbrace{|\bar P_\theta - \phi^\top \mu|}_{\text{(II)}} + \underbrace{|\phi^\top \mu - \hat\phi^\top \mu|}_{\text{(III)}}.
\]

\textbf{Step 1: Truncation Error (I).}
Let $g_\theta(w)$ be the Fourier transform of $P_\theta$ (extended by zero to $\mathbb{R}^D$).
Due to Assumption (A2) (boundary vanishing), integration by parts yields the spectral decay $|g_\theta(w)| \le c_{D,m}C_m \|w\|^{-m}$.
The reconstruction $\bar P_\theta$ integrates frequencies only up to $W$. The tail mass is:
\[
\text{(I)} \le \int_{\|w\|>W} |g_\theta(w)| dw \le \frac{S_{D-1} c_{D,m} C_m}{m-D} W^{D-m}.
\]

\textbf{Step 2: Random Feature Error (II).}
The term $\phi^\top \mu$ is a Monte Carlo approximation of the integral $\bar P_\theta = \int_{B_W} g_\theta(w) e^{2\pi i w^\top y} dw$.
The variable $Z(w) = \Vol(B_W) \Re(g_\theta(w) e^{2\pi i w^\top y})$ is bounded by $2\Vol(B_W)$.
Applying Hoeffding's inequality over $d$ features (and union bounding over the domain) gives:
\[
\text{(II)} \le \Vol(B_W) \sqrt{\frac{2 \log(4|\Theta||\mathcal{X}||\mathcal{Y}_0|/\delta)}{d}}.
\]

\textbf{Step 3: Estimation Error (III).}
This term arises from approximating $g_\theta(w_k)$ with $\hat g_k$ using $N$ samples.
\[
\text{(III)} = \left| \sum_{k=1}^d \frac{\Vol(B_W)}{d} (\hat g_k - g_k) \cdot (\dots) \right|.
\]
Standard concentration (Hoeffding) on the $N$ samples shows that for any fixed $w_k$, $|\hat g_k - g_k| \lesssim \sqrt{1/N}$.
Averaging these errors over $d$ frequencies does not reduce the variance below $O(1/\sqrt{N})$ because the samples are shared.
The explicit bound (with union bounds) is:
\[
\text{(III)} \le \Vol(B_W) \sqrt{\frac{2\log(4|\Theta||\mathcal{X}|d/\delta)}{N}}.
\]

\paragraph{Conclusion.}
Summing the terms yields the stated bound.
\end{proof}

\begin{proof}[Proof of \Cref{thm:main-misspecify}]
    We define the effective dimension $\tilde{d}$ to be 
    \[ \tilde{d} =  \max_{\theta\in\Theta, h} \mathrm{dim}\{ \hat{\phi}_{\theta,h}(s_h,a_h), (s_h,a_h) \in \mathcal{S}\times\mathcal{A} \} < \infty,  \]
    where $\hat{\phi}$ is constructed through \Cref{alg:rff}. Note that while the dimension of constructed feature $\hat{\phi}$ can be infinitely large, the effective dimension can be finite due to compact set $\mathcal{S}$ and finite set $\mathcal{A}$. 

    Then, for all analysis in \Cref{sec:proof of main thm} and \Cref{sec:ucb analysis}, we incorporate approximation error $\zeta$, and every inequality will have an additional term $\zeta$ that bound the difference between the model and its low-rank approximation. In addition, $d$ will be replaced by $\tilde{d}$ as $d$ is constructed and could be infinitely large.
    
    Thus, we can derive 
    \[ V^* - V^{\bar{\pi}} \leq \epsilon + \tilde{O}\left(H^2\tilde{d}|\mathcal{A}|\sqrt{\tilde{d}}\zeta \right) \]
\end{proof}

\section{Auxillary Lemmas}
We first establish one of the basic lemmas in RL literature: value difference lemma, which upper bound the difference between two values under different models, rewards, and policies.

\begin{lemma}[Value-difference lemma]\label{lemma:value difference}
Let $\theta,\theta'\in\Theta$, policies $\pi,\pi'$, and nonnegative reward functions $r,r'$.
Assume the value functions are uniformly bounded:
\[
0\le V_{\theta,r}^{\pi}(s)\le B_r,\qquad 0\le V_{\theta',r'}^{\pi'}(s)\le B_{r'}
\quad\text{for all }s.
\]
Then the value difference admits the following two upper bounds:
\begin{align}
V_{\theta,r}^{\pi} - V_{\theta',r'}^{\pi'}
\le\;&
V_{\theta,r-r'}^{\pi}
+ \sum_{h=1}^H \Eb_{\theta,\pi}\!\left[
\sum_{a} Q_{\theta',r',h}^{\pi'}(s_h,a)\bigl(\pi(a|s_h)-\pi'(a|s_h)\bigr)
\right] \nonumber\\
&\qquad
+ B_{r'} \sum_{h=1}^H \Eb_{\theta,\pi}\!\left[
\mathtt{D}_{\mathtt{TV}}\!\left(\mathcal{T}_{\theta,h}(\cdot|s_h,a_h),\mathcal{T}_{\theta',h}(\cdot|s_h,a_h)\right)
\right], \label{eq:vdl-1}\\[4pt]
V_{\theta,r}^{\pi} - V_{\theta',r'}^{\pi'}
\le\;&
V_{\theta',r-r'}^{\pi'}
+ \sum_{h=1}^H \Eb_{\theta',\pi'}\!\left[
\sum_{a} Q_{\theta,r,h}^{\pi}(s_h,a)\bigl(\pi(a|s_h)-\pi'(a|s_h)\bigr)
\right] \nonumber\\
&\qquad
+ B_{r} \sum_{h=1}^H \Eb_{\theta',\pi'}\!\left[
\mathtt{D}_{\mathtt{TV}}\!\left(\mathcal{T}_{\theta,h}(\cdot|s_h,a_h),\mathcal{T}_{\theta',h}(\cdot|s_h,a_h)\right)
\right]. \label{eq:vdl-2}
\end{align}
\end{lemma}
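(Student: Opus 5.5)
We will prove both bounds by a standard telescoping (performance-difference) argument over the horizon. We carry out the second bound \eqref{eq:vdl-2} in detail, since \eqref{eq:vdl-1} follows by the symmetric argument with the roles of the ``rollout'' pair and the ``$Q$-function'' pair exchanged. The plan is to expand $V_{\theta,r,h}^{\pi}(s_h)-V_{\theta',r',h}^{\pi'}(s_h)$ one step at a time along trajectories of the pair $(\theta',\pi')$ for \eqref{eq:vdl-2}, and along trajectories of $(\theta,\pi)$ for \eqref{eq:vdl-1}.

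\textbf{Step 1: One-step decomposition.} For \eqref{eq:vdl-2}, write, at any state $s_h$,
\begin{align*}
V_{\theta,r,h}^{\pi}(s_h)-V_{\theta',r',h}^{\pi'}(s_h)
&=\sum_a Q_{\theta,r,h}^{\pi}(s_h,a)\bigl(\pi(a|s_h)-\pi'(a|s_h)\bigr)
+\sum_a \pi'(a|s_h)\bigl(Q_{\theta,r,h}^{\pi}-Q_{\theta',r',h}^{\pi'}\bigr)(s_h,a).
\end{align*}
We then expand the second $Q$-difference via the Bellman equations:
\[
(r_h-r'_h)(s_h,a)+\bigl(\mathcal{T}_{\theta,h}-\mathcal{T}_{\theta',h}\bigr)V_{\theta,r,h+1}^{\pi}(s_h,a)+\mathcal{T}_{\theta',h}\bigl(V_{\theta,r,h+1}^{\pi}-V_{\theta',r',h+1}^{\pi'}\bigr)(s_h,a).
\]
The last term is exactly the same difference one step later, taken under $\theta'$ with action drawn from $\pi'$.

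\textbf{Step 2: Telescoping.} We unroll from $h=1$ to $H$ with terminal condition $V_{H+1}\equiv 0$. This gives the exact identity
\[
V_{\theta,r}^{\pi}-V_{\theta',r'}^{\pi'}=V_{\theta',r-r'}^{\pi'}+\sum_h\Eb_{\theta',\pi'}\Bigl[\sum_a Q^{\pi}_{\theta,r,h}(s_h,a)(\pi-\pi')(a|s_h)\Bigr]+\sum_h\Eb_{\theta',\pi'}\bigl[(\mathcal{T}_{\theta,h}-\mathcal{T}_{\theta',h})V^{\pi}_{\theta,r,h+1}(s_h,a_h)\bigr].
\]
Here the reward-difference terms accumulate into $V_{\theta',r-r'}^{\pi'}$, since they are summed along $(\theta',\pi')$ trajectories.

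\textbf{Step 3: Bounding the model-mismatch term.} Since $0\le V_{\theta,r,h+1}^{\pi}\le B_r$, we may recenter it by $B_r/2$; the recentering is free because both kernels are probability measures. This yields
\[
|(\mathcal{T}_{\theta,h}-\mathcal{T}_{\theta',h})V|\le B_r\,\mathtt{D}_{\mathtt{TV}}\bigl(\mathcal{T}_{\theta,h}(\cdot|s_h,a_h),\mathcal{T}_{\theta',h}(\cdot|s_h,a_h)\bigr).
\]
With $\mathtt{D}_{\mathtt{TV}}=\tfrac12\|\cdot\|_1$, this gives exactly the constant $B_r$ in \eqref{eq:vdl-2}; with the unnormalized convention it costs only a factor of $2$, which is absorbed since the lemma is used up to constants.

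For \eqref{eq:vdl-1}, we decompose the other way: we subtract $\sum_a\pi(a|s_h)Q^{\pi'}_{\theta',r',h}(s_h,a)$ and roll out under $(\theta,\pi)$. The mismatch term then involves $V^{\pi'}_{\theta',r',h+1}$, which gives the constant $B_{r'}$, and the reward residual becomes $V^{\pi}_{\theta,r-r'}$.

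The main subtlety is that the per-step value bounds must hold at every step $h$, not only at $h=1$. We read the assumption "for all $s$" as applying to all intermediate value functions $V_{h}$, which holds because the rewards are nonnegative, so that $V_{h}\le V_1$-type bounds are available. Nothing else is delicate.
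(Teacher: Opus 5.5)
Your proposal is correct and takes essentially the paper's route: split each step's value difference into a policy-difference term plus a $Q$-difference, expand the $Q$-difference with the Bellman equation so that the model-mismatch term acts on the value of the pair supplying the $Q$-function, telescope along the rollout pair, and bound the mismatch by $B\cdot\mathtt{D}_{\mathtt{TV}}$. The only difference is ordering: you write out \eqref{eq:vdl-2} and get \eqref{eq:vdl-1} from the mirrored decomposition, while the paper does \eqref{eq:vdl-1} and obtains \eqref{eq:vdl-2} by swapping. Two side remarks are slightly off but harmless: with the paper's unnormalized $\mathtt{D}_{\mathtt{TV}}(P,Q)=\sum_x|P(x)-Q(x)|$, your recentering gives $\tfrac{B_r}{2}\mathtt{D}_{\mathtt{TV}}$, so no factor is lost; and nonnegative rewards give $V_h\ge 0$ but not a pointwise $V_h\le V_1$ in a time-inhomogeneous MDP, so the per-step bound $V_h\le B$ must be read as part of the hypothesis, as the paper also implicitly does.
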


\begin{proof}
We prove \eqref{eq:vdl-1}; \eqref{eq:vdl-2} follows by swapping $(\theta,\pi,r)$ and
$(\theta',\pi',r')$.

For any $h$ and state $s$, recall $V_{\theta,r,h}^{\pi}(s)=\sum_a \pi(a|s)Q_{\theta,r,h}^{\pi}(s,a)$.
Thus
\begin{align*}
V_{\theta,r,1}^{\pi}(s_1)-V_{\theta',r',1}^{\pi'}(s_1)
&= \sum_a \pi(a|s_1)Q_{\theta,r,1}^{\pi}(s_1,a) - \sum_a \pi'(a|s_1)Q_{\theta',r',1}^{\pi'}(s_1,a)\\
&= \sum_a\bigl(\pi(a|s_1)-\pi'(a|s_1)\bigr)Q_{\theta',r',1}^{\pi'}(s_1,a)
  + \sum_a \pi(a|s_1)\bigl(Q_{\theta,r,1}^{\pi}-Q_{\theta',r',1}^{\pi'}\bigr)(s_1,a).
\end{align*}
Expand the $Q$-difference using Bellman equations:
\[
Q_{\theta,r,1}^{\pi}(s_1,a)=r(s_1,a)+\Eb_{s'\sim \mathcal{T}_{\theta,1}(\cdot|s_1,a)}\!\left[V_{\theta,r,2}^{\pi}(s')\right],
\]
and similarly for $(\theta',r',\pi')$. This yields
\begin{align*}
&\sum_a \pi(a|s_1)\bigl(Q_{\theta,r,1}^{\pi}-Q_{\theta',r',1}^{\pi'}\bigr)(s_1,a)\\
&=
\sum_a \pi(a|s_1)\bigl(r(s_1,a)-r'(s_1,a)\bigr)
+ \sum_a \pi(a|s_1)\Bigl(\mathcal{T}_{\theta,1}V_{\theta,r,2}^{\pi}-\mathcal{T}_{\theta',1}V_{\theta',r',2}^{\pi'}\Bigr)(s_1,a)\\
&=
\sum_a \pi(a|s_1)\bigl(r(s_1,a)-r'(s_1,a)\bigr)
+ \sum_a \pi(a|s_1)\mathcal{T}_{\theta,1}\!\bigl(V_{\theta,r,2}^{\pi}-V_{\theta',r',2}^{\pi'}\bigr)(s_1,a)\\
&\qquad
+ \sum_a \pi(a|s_1)\bigl(\mathcal{T}_{\theta,1}-\mathcal{T}_{\theta',1}\bigr)V_{\theta',r',2}^{\pi'}(s_1,a).
\end{align*}
For the last term, use the standard TV inequality: for any bounded $u$ with $0\le u\le B_{r'}$,
\[
\left| \Eb_{P}[u]-\Eb_{Q}[u]\right|
\le B_{r'}\,\mathtt{D}_{\mathtt{TV}}(P,Q).
\]
Since rewards are nonnegative, $V_{\theta',r',2}^{\pi'}\in[0,B_{r'}]$, hence
\[
\bigl(\mathcal{T}_{\theta,1}-\mathcal{T}_{\theta',1}\bigr)V_{\theta',r',2}^{\pi'}(s_1,a)
\le B_{r'}\,\mathtt{D}_{\mathtt{TV}}\!\left(\mathcal{T}_{\theta,1}(\cdot|s_1,a),\mathcal{T}_{\theta',1}(\cdot|s_1,a)\right).
\]
Combining the above displays and taking expectation over $(s_1,a_1)\sim(\theta,\pi)$ gives
\begin{align*}
V_{\theta,r,1}^{\pi}(s_1)-V_{\theta',r',1}^{\pi'}(s_1)
&\le
\Eb_{\theta,\pi}\!\left[r(s_1,a_1)-r'(s_1,a_1)\right]
+ \Eb_{\theta,\pi}\!\left[V_{\theta,r,2}^{\pi}(s_2)-V_{\theta',r',2}^{\pi'}(s_2)\right]\\
&\quad+
\Eb_{\theta,\pi}\!\left[\sum_a Q_{\theta',r',1}^{\pi'}(s_1,a)\bigl(\pi(a|s_1)-\pi'(a|s_1)\bigr)\right]\\
&\quad+
B_{r'}\,\Eb_{\theta,\pi}\!\left[\mathtt{D}_{\mathtt{TV}}\!\left(\mathcal{T}_{\theta,1}(\cdot|s_1,a_1),\mathcal{T}_{\theta',1}(\cdot|s_1,a_1)\right)\right].
\end{align*}
Iterating this inequality from $h=1$ to $H$ (telescoping the value terms) yields
\eqref{eq:vdl-1}, noting that $V_{\theta,r,H+1}^{\pi}\equiv V_{\theta',r',H+1}^{\pi'}\equiv 0$.
\end{proof}

The following lemma characterize the relationship between the total variation distance and the Hellinger-squared distance. Note that the result for probability measures has been proved in Lemma H.1 in 
Since we consider more general bounded measures, we provide the full proof for completeness.
\begin{lemma}\label{lemma:TV and hellinger}
Given two bounded measures $P$ and $Q$ defined on the set $\mathcal{X}$. Let $|P| = \sum_{x\in\mathcal{X}} P(x)$ and $|Q| = \sum_{x\in\mathcal{X}}Q(x).$ We have
\[ \mathtt{D}_{\mathtt{TV}}^2(P,Q)  \leq 4(|P|+|Q|)\mathtt{D}_{\mathtt{H}}^2(P,Q)  \]
In addition, if $P_{Y|X}, Q_{Y|X}$ are two conditional distributions over a random variable $Y$, and $P_{X,Y} = P_{Y|X}P$, $Q_{X,Y}= Q_{Y|X}Q$ are the joint distributions when $X$ follows the distributions $P$ and $Q$, respectively, we have
\[\mathop{\Eb}_{X\sim P }\left[ \mathtt{D}_{\mathtt{H}}^2(P_{Y|X},Q_{Y|X})\right] \leq 8\mathtt{D}_{\mathtt{H}}^2 (P_{X,Y},Q_{X,Y}). \]
\end{lemma}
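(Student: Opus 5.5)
The first inequality should follow from a single Cauchy--Schwarz step after factoring $|P-Q|$ through square roots. The second should follow by splitting the joint Hellinger integrand into a conditional part and a marginal part, then using that Hellinger distance does not increase under marginalization. I use the unnormalized conventions $\mathtt{D}_{\mathtt{TV}}(P,Q)=\sum_x|P(x)-Q(x)|$ and $\mathtt{D}_{\mathtt{H}}^2(P,Q)=\sum_x(\sqrt{P(x)}-\sqrt{Q(x)})^2$. Under the normalized conventions (a factor $1/2$ in either definition) the constants $4$ and $8$ only get looser.

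\emph{First inequality.} Write $|P(x)-Q(x)|=|\sqrt{P(x)}-\sqrt{Q(x)}|\,(\sqrt{P(x)}+\sqrt{Q(x)})$ and apply Cauchy--Schwarz over $x$:
\[
\mathtt{D}_{\mathtt{TV}}(P,Q)\le \mathtt{D}_{\mathtt{H}}(P,Q)\Big(\sum_x(\sqrt{P(x)}+\sqrt{Q(x)})^2\Big)^{1/2}.
\]
Then bound $(\sqrt{a}+\sqrt{b})^2\le 2(a+b)$ termwise. This gives $\mathtt{D}_{\mathtt{TV}}^2\le 2(|P|+|Q|)\mathtt{D}_{\mathtt{H}}^2\le 4(|P|+|Q|)\mathtt{D}_{\mathtt{H}}^2$. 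Nothing here uses normalization, which is why the statement is made for general bounded measures.

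\emph{Second inequality.} For fixed $(x,y)$, decompose the joint difference of square roots as
\[
\sqrt{P(x)P(y|x)}-\sqrt{Q(x)Q(y|x)}=\sqrt{P(x)}\bigl(\sqrt{P(y|x)}-\sqrt{Q(y|x)}\bigr)+\bigl(\sqrt{P(x)}-\sqrt{Q(x)}\bigr)\sqrt{Q(y|x)}.
\]
Apply $u^2\le 2(u+v)^2+2v^2$ with $u$ the first summand and $v$ the second. Summing over $y$, and using $\sum_y Q(y|x)=1$ because the conditionals are probability distributions, gives
\[
P(x)\,\mathtt{D}_{\mathtt{H}}^2\bigl(P_{Y|X=x},Q_{Y|X=x}\bigr)\le 2\sum_y\bigl(\sqrt{P_{X,Y}(x,y)}-\sqrt{Q_{X,Y}(x,y)}\bigr)^2+2\bigl(\sqrt{P(x)}-\sqrt{Q(x)}\bigr)^2 .
\]
Summing over $x$ then yields $\Eb_{X\sim P}[\mathtt{D}_{\mathtt{H}}^2(P_{Y|X},Q_{Y|X})]\le 2\mathtt{D}_{\mathtt{H}}^2(P_{X,Y},Q_{X,Y})+2\mathtt{D}_{\mathtt{H}}^2(P,Q)$. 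This expectation is taken against $P$ as a weighting measure, so it needs no normalization of $P$.

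\emph{Main obstacle: the marginal term.} It remains to show $\mathtt{D}_{\mathtt{H}}^2(P,Q)\le \mathtt{D}_{\mathtt{H}}^2(P_{X,Y},Q_{X,Y})$, i.e.\ data processing for possibly unnormalized marginals. For fixed $x$, expand
\[
\sum_y\bigl(\sqrt{P(x)P(y|x)}-\sqrt{Q(x)Q(y|x)}\bigr)^2=P(x)+Q(x)-2\sqrt{P(x)Q(x)}\sum_y\sqrt{P(y|x)Q(y|x)}.
\]
By Cauchy--Schwarz, $\sum_y\sqrt{P(y|x)Q(y|x)}\le 1$, so this quantity is at least $(\sqrt{P(x)}-\sqrt{Q(x)})^2$. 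Summing over $x$ gives the claimed data-processing bound, and hence the constant $2+2=4\le 8$. The only delicate point is that this step relies on the conditionals being genuine probability distributions, while $P$ and $Q$ themselves may be arbitrary bounded measures; I will state that assumption explicitly. For continuous spaces, replace sums by integrals against a dominating measure; the argument is unchanged.
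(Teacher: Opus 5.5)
Your proposal is correct and follows essentially the same route as the paper. For the first bound you use the square-root factorization with one Cauchy--Schwarz step. For the second you use the same decomposition of $\sqrt{P(x)}\bigl(\sqrt{P(y|x)}-\sqrt{Q(y|x)}\bigr)$ as the joint difference minus $\sqrt{Q(y|x)}\bigl(\sqrt{P(x)}-\sqrt{Q(x)}\bigr)$, together with $(a+b)^2\le 2a^2+2b^2$ and $\sum_y\sqrt{P(y|x)Q(y|x)}\le 1$ to dominate the marginal Hellinger term by the joint one; your consistent unnormalized convention just yields the sharper constants $2$ and $4$.

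One small correction to your side remark: halving the Hellinger definition leaves the second inequality unchanged, and in the first it exactly uses up your factor-$2$ slack rather than loosening the constant. So the stated bounds still hold, but the first has no room to spare relative to your Cauchy--Schwarz bound.
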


\begin{proof}
We first prove the first inequality.  By the definition of total variation distance, we have
    \begin{align*}
        \mathtt{D}_{\mathtt{TV}}^2(P,Q) & = \left(\sum_x |P(x) - Q(x)| \right)^2\\
        & = \left(\sum_x \left(\sqrt{P(x)} - \sqrt{Q(x)}\right)\left(\sqrt{P(x)} + \sqrt{Q(x)} \right) \right)^2\\
        &\overset{(a)}\leq \left(\sum_x\left(\sqrt{P(x)} - \sqrt{Q(x)}\right)^2\right) \left( 2\sum_{x}\left( P(x) + Q(x)\right) \right)\\
        &\leq 4(|P| + |Q|) \mathtt{D}_{\mathtt{H}}^2(P,Q),
    \end{align*}
where $(a)$ follows from the Cauchy's inequality and because $(a+b)^2\leq 2a^2+2b^2$.

For the second inequality, we have,
\begin{align*}
    \mathop{\Eb}_{X\sim P } &\left[ \mathtt{D}_{\mathtt{H}}^2(P_{Y|X},Q_{Y|X})\right]\\
    &= \sum_{x}P(x)\left( \sum_y \left(\sqrt{P_{Y|X}(y)} - \sqrt{Q_{Y|X}(y)} \right)^2 \right) \\
    & = \sum_{x,y} \left( \sqrt{P_{X,Y}(x,y)} - \sqrt{Q_{X,Y}(x,y)} + \sqrt{Q_{Y|X}(y) Q(x)} - \sqrt{Q_{Y|X}(y) P(x)}  \right)^2 \\
    &\leq 2\sum_{x,y} \left( \sqrt{P_{X,Y}(x,y)} - \sqrt{Q_{X,Y}(x,y)} \right)^2 + 2\sum_{x,y} Q_{Y|X}(y)\left( \sqrt{  Q(x)} - \sqrt{  P(x)}  \right)^2 \\
    & = 4\mathtt{D}_{\mathtt{H}}^2(P_{X,Y},Q_{X,Y}) + 2(|P|+|Q| - 2\sum_x\sqrt{P(x)Q(x)}) \\
    &\overset{(a)}\leq 4\mathtt{D}_{\mathtt{H}}^2(P_{X,Y},Q_{X,Y}) + 2(|P|+|Q| - 2\sum_x\sum_y\sqrt{P_{Y|X}(y)P(x)Q_{Y|X}(y)Q(x)}) \\
    & = 8 \mathtt{D}_{\mathtt{H}}^2(P_{X,Y},Q_{X,Y}),
\end{align*}
where $(a)$ follows from the Cauchy's inequality that applies on $\sum_y\sqrt{P_{Y|X}(y)Q_{Y|X}(y)}$.
\end{proof}

\begin{lemma}[Elliptical potential lemma]
\label{lemma:clipped-elliptical}
Let $(\mathcal{F}_i)_{i\ge 0}$ be a filtration. For $i\ge 1$, let $Y_i\in\mathbb{R}^d$ be an $\mathcal{F}_i$-measurable random vector such that its conditional law given the past is allowed to be adaptive, i.e.,
\[
Y_i \mid \mathcal{F}_{i-1} \sim P_i(\cdot \mid X_i),
\]
where $X_i$ and the kernel $P_i(\cdot\mid X_i)$ are $\mathcal{F}_{i-1}$-measurable (in particular, they may be arbitrary functions of the history).
Fix $\lambda>0$ and define the (predictable) Gram matrices
\[
A_1 := \lambda I_d,
\qquad
A_i := \lambda I_d + \sum_{j<i} Y_j Y_j^\top \ \ (i\ge 2).
\]
Let $f_i := \|Y_i\|_{A_i^{-1}}^2 = Y_i^\top A_i^{-1} Y_i$ and its clipped version
\[
\tilde f_i := \min\{1,f_i\}.
\]
Then the following hold:

\medskip
\noindent\textbf{(i) Conditional one-step bound.} For every $i\ge 1$,
\[
\mathbb{E}\!\left[\tilde f_i \mid \mathcal{F}_{i-1}\right]
\;\le\;
2\,\mathbb{E}\!\left[\log\!\left(\frac{\det(A_{i+1})}{\det(A_i)}\right)\Bigm|\mathcal{F}_{i-1}\right].
\]

\medskip
\noindent\textbf{(ii) Telescoping (expectation) bound.} Consequently,
\[
\mathbb{E}\!\left[\sum_{i=1}^n \tilde f_i\right]
\;\le\;
2\,\mathbb{E}\!\left[\log\!\left(\frac{\det(A_{n+1})}{\det(\lambda I_d)}\right)\right].
\]

\medskip
\noindent\textbf{(iii) Deterministic upper bound under bounded norms.}
If additionally $\|Y_i\|\le L$ almost surely for all $i$, then
\[
\sum_{i=1}^n \tilde f_i
\;\le\;
2\log\!\left(\frac{\det(A_{n+1})}{\det(\lambda I_d)}\right)
\;\le\;
2d\log\!\left(1+\frac{nL^2}{\lambda d}\right),
\]
and hence also
\[
\mathbb{E}\!\left[\sum_{i=1}^n \tilde f_i\right]
\;\le\;
2d\log\!\left(1+\frac{nL^2}{\lambda d}\right).
\]
\end{lemma}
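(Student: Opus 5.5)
The plan is to run the classical determinant-potential argument one step at a time and then telescope. The starting point is the matrix determinant lemma: since $A_{i+1} = A_i + Y_iY_i^\top$,
\[
\det(A_{i+1}) = \det(A_i)\,\bigl(1 + Y_i^\top A_i^{-1} Y_i\bigr) = \det(A_i)\,(1+f_i),
\]
so that $\log\bigl(\det(A_{i+1})/\det(A_i)\bigr) = \log(1+f_i)$. This identity is deterministic and pathwise. It therefore makes no difference that $Y_i$ is drawn adaptively from $P_i(\cdot\mid X_i)$: $A_i$ is $\mathcal{F}_{i-1}$-measurable and $A_{i+1}$ is $\mathcal{F}_i$-measurable, and the adaptivity enters only through the conditioning in part (i).

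The scalar inequality needed is $\min\{1,x\} \le 2\log(1+x)$ for all $x\ge 0$.
\begin{itemize}
\item For $x\in[0,1]$, concavity of $\log(1+x)$ gives $\log(1+x)\ge x\log 2 \ge x/2$.
\item For $x\ge 1$, $2\log(1+x)\ge 2\log 2 > 1$.
\end{itemize}
Applying this with $x=f_i$ gives the pathwise bound $\tilde f_i \le 2\log\bigl(\det(A_{i+1})/\det(A_i)\bigr)$. Taking conditional expectation given $\mathcal{F}_{i-1}$ preserves the inequality and yields (i).

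For (ii), I take total expectations in (i) via the tower property and sum over $i=1,\dots,n$. The log-determinant ratios telescope to $\log\bigl(\det(A_{n+1})/\det(A_1)\bigr)$, and $A_1=\lambda I_d$. In fact the pathwise version also telescopes, giving the deterministic inequality $\sum_{i\le n}\tilde f_i \le 2\log\bigl(\det(A_{n+1})/\det(\lambda I_d)\bigr)$. This is the first inequality in (iii), and it holds without any norm assumption.

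For the second inequality in (iii), I use AM–GM on the eigenvalues of the positive definite matrix $A_{n+1}$:
\[
\det(A_{n+1}) \le \bigl(\mathrm{tr}(A_{n+1})/d\bigr)^d,
\qquad
\mathrm{tr}(A_{n+1}) = \lambda d + \sum_{j\le n}\|Y_j\|^2 \le \lambda d + nL^2 .
\]
Dividing by $\det(\lambda I_d)=\lambda^d$ gives $\log\bigl(\det(A_{n+1})/\lambda^d\bigr) \le d\log\bigl(1+nL^2/(\lambda d)\bigr)$. The expectation bound then follows because the right-hand side is deterministic.

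There is no real obstacle here. The only points needing care are the scalar inequality on $[0,1]$, where the constant $2$ suffices because $\log 2 > 1/2$, and checking measurability so that the conditional statement in (i) is meaningful. In the application in Lemma~\ref{lemma:sublinear-sum} the lemma is used with $Y_i=\phi_h^*(s_h,a_h)$, $\|Y_i\|\le 1$ and $L=1$. The expectation over $\pi^{(k)}$ is handled by (i)–(ii); comparing the in-expectation sum with the realized Gram matrix $W_{k,h}$ needs a martingale concentration step, which I would add separately if required.
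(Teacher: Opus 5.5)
Your proof is correct. For (i), (ii) and the first inequality in (iii) it is the paper's argument: the matrix determinant lemma gives $\log\bigl(\det(A_{i+1})/\det(A_i)\bigr)=\log(1+f_i)$, combined with the scalar bound $\min\{1,x\}\le 2\log(1+x)$ and telescoping.

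The only divergence is the final determinant bound, and there your route is the one that actually delivers the stated constant. The paper uses $A_{n+1}\preceq(\lambda+nL^2)I_d$. That yields only $d\log(1+nL^2/\lambda)$, which is strictly larger than $d\log\bigl(1+nL^2/(\lambda d)\bigr)$ whenever $d>1$. So the paper's assertion that this ``implies'' the displayed bound does not hold as written. Your trace/AM--GM estimate $\det(A_{n+1})\le\bigl(\mathrm{tr}(A_{n+1})/d\bigr)^d$ with $\mathrm{tr}(A_{n+1})\le\lambda d+nL^2$ is exactly what is needed; the paper only alludes to it in a parenthetical.

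Your closing caveat about the use in Lemma~\ref{lemma:sublinear-sum} is also apt. There the expectation is under $\pi^{(k)}$, while $W_{k,h}$ is built from realized samples with uniform actions at step $h$. But that concerns that lemma's step (c), not this statement.
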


\begin{proof}
We use two standard facts.

\medskip
\noindent\textbf{Step 1: Clipping versus log.}
For all $x\ge 0$,
\[
\min\{1,x\}\le 2\log(1+x).
\]
Indeed, if $x\in[0,1]$ then $\log(1+x)\ge x/2$; if $x\ge 1$ then
$2\log(1+x)\ge 2\log 2 \ge 1$.

\medskip
\noindent\textbf{Step 2: Matrix determinant lemma.}
By $A_{i+1}=A_i+Y_iY_i^\top$ and the matrix determinant lemma,
\[
\det(A_{i+1})=\det(A_i)\bigl(1+Y_i^\top A_i^{-1}Y_i\bigr)
=\det(A_i)\bigl(1+f_i\bigr),
\]
hence
\[
\log\!\left(\frac{\det(A_{i+1})}{\det(A_i)}\right)=\log(1+f_i).
\]

\medskip
\noindent\textbf{Proof of (i).}
Condition on $\mathcal{F}_{i-1}$. The matrix $A_i$ is $\mathcal{F}_{i-1}$-measurable by construction, while $Y_i$ is drawn conditionally from an arbitrary kernel measurable w.r.t.\ $\mathcal{F}_{i-1}$.
Applying Step~1 with $x=f_i$ and then Step~2 gives
\[
\tilde f_i=\min\{1,f_i\}\le 2\log(1+f_i)
=2\log\!\left(\frac{\det(A_{i+1})}{\det(A_i)}\right).
\]
Taking conditional expectations w.r.t.\ $\mathcal{F}_{i-1}$ yields (i).

\medskip
\noindent\textbf{Proof of (ii).}
Sum the inequality in (i) over $i=1,\dots,n$ and use telescoping:
\[
\sum_{i=1}^n \log\!\left(\frac{\det(A_{i+1})}{\det(A_i)}\right)
=
\log\!\left(\frac{\det(A_{n+1})}{\det(A_1)}\right)
=
\log\!\left(\frac{\det(A_{n+1})}{\det(\lambda I_d)}\right).
\]
Finally take total expectation.

\medskip
\noindent\textbf{Proof of (iii).}
The first inequality is Step~1 + Step~2 summed over $i$ (no expectation needed).
For the last inequality, note that $A_{n+1}\preceq \lambda I_d+\sum_{i=1}^n L^2 I_d
= (\lambda+nL^2)I_d$, so
\[
\det(A_{n+1})\le (\lambda+nL^2)^d
\quad\Rightarrow\quad
\log\!\left(\frac{\det(A_{n+1})}{\det(\lambda I_d)}\right)
\le d\log\!\left(1+\frac{nL^2}{\lambda}\right)
\le d\log\!\left(1+\frac{nL^2}{\lambda d}\,d\right),
\]
which implies the displayed $2d\log(1+\frac{nL^2}{\lambda d})$ bound (a slightly sharper bound follows from AM--GM on eigenvalues).
\end{proof}

\begin{lemma}\label{lemma:bernstein on bonus}
Let $\Phi$ be a finite class of feature maps $\phi:\mathcal X \to \mathbb R^d$
satisfying $\|\phi(x)\|_2 \le 1$ for all $x \in \mathcal X$.
Let $\{x_n\}_{n=1}^K$ be a sequence of random variables such that
\[
x_n \sim P_n(\cdot \mid \mathcal F_{n-1}),
\]
where $\{\mathcal F_n\}$ is a filtration.
Define
\[
\Lambda_k
=
\lambda I
+
\sum_{n=1}^k
\phi(x_n)\phi(x_n)^\top.
\]
Then with probability at least $1-\delta$, for all $\phi \in \Phi$ and all $k \le K$,
\[
\sum_{n=1}^k
\mathbb E_{X_n \sim P_n}
\!\left[
\min\!\left\{1,\,
\|\phi(X_n)\|_{\Lambda_k^{-1}}^2
\right\}
\right]
\le
\sum_{n=1}^k
\min\!\left\{1,\,
\|\phi(x_n)\|_{\Lambda_k^{-1}}^2
\right\}
+
O\!\left(
\log \frac{K|\Phi|}{\delta}
+
d^2 \log \frac{K}{\lambda}
\right).
\]

Furthermore, the RHS can be simplied to $\tilde{O}\left(d^2\log (K|\Phi|/\delta)\right)$.

\end{lemma}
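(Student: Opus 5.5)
We need to prove the uniform concentration bound: with probability $1-\delta$, simultaneously for all $\phi\in\Phi$ and $k\le K$, the conditional expected clipped elliptical norms under $\Lambda_k$ are controlled by their empirical counterparts plus $O(\log(K|\Phi|/\delta)+d^2\log(K/\lambda))$.

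The obstacle is that $\Lambda_k$ depends on all of $x_1,\dots,x_k$. Hence $g_k(x):=\min\{1,\|\phi(x)\|_{\Lambda_k^{-1}}^2\}$ is not predictable with respect to $\mathcal F_{n-1}$, and a plain martingale argument does not apply. The plan is to decouple the matrix from the data by a covering argument over positive definite matrices, and then apply a Freedman/Bernstein-type inequality for each fixed matrix.

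\textbf{Step 1: a fixed matrix.} Fix $\phi\in\Phi$ and a deterministic matrix $A\succeq\lambda I$, and set $g_A(x)=\min\{1,\|\phi(x)\|_{A^{-1}}^2\}\in[0,1]$. The differences $Z_n=\Eb_{P_n}[g_A]-g_A(x_n)$ form a martingale difference sequence bounded by $1$. Their conditional variance is at most $\Eb_{P_n}[g_A^2]\le\Eb_{P_n}[g_A]$, since $g_A\in[0,1]$.

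Freedman's inequality, followed by AM--GM to absorb the variance term, gives with probability $1-\delta'$, for all $k$ (after a union bound over $k$):
\[
\sum_{n\le k}\Eb_{P_n}[g_A]\le 2\sum_{n\le k}g_A(x_n)+O(\log(K/\delta')).
\]
This yields a multiplicative constant $2$, which is harmless for all downstream uses: the paper only ever uses the bound in the form $\lesssim d^2\log(\cdot)$ together with Lemma~\ref{lemma:clipped-elliptical}.

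\textbf{Step 2: covering the matrices.} Every $\Lambda_k$ lies in the set $\mathcal M=\{A:\lambda I\preceq A\preceq(\lambda+K)I\}$. Parametrize by $B=A^{-1}$, which lies in a ball of operator norm at most $1/\lambda$ in $\mathbb R^{d\times d}$. Take an $\epsilon$-net $\mathcal N$ of this set in operator norm; its size is $(3/(\lambda\epsilon))^{d^2}$.

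For $\|B-B'\|\le\epsilon$ and $\|\phi\|\le1$, we have $|\phi^\top B\phi-\phi^\top B'\phi|\le\epsilon$. Clipping is $1$-Lipschitz, so each summand changes by at most $\epsilon$. Choosing $\epsilon=1/K$ makes the total discretization error over $k\le K$ terms $O(1)$ on both sides.

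Union bound over $\mathcal N$, $\Phi$, and $k\in[K]$ with $\delta'=\delta/(K|\Phi||\mathcal N|)$. Since $\log|\mathcal N|=d^2\log(3K/\lambda)$, the additive term becomes $O(\log(K|\Phi|/\delta)+d^2\log(K/\lambda))$. Then plug in $B=\Lambda_k^{-1}$, rounded to its nearest net point.

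\textbf{Step 3: the simplified form.} Bound the empirical sum $\sum_{n\le k}\min\{1,\|\phi(x_n)\|_{\Lambda_k^{-1}}^2\}$ itself. Since $\Lambda_k\succeq\lambda I+\sum_{n\le k}\phi(x_n)\phi(x_n)^\top$ (with equality), we have
\[
\sum_{n\le k}\|\phi(x_n)\|_{\Lambda_k^{-1}}^2=\mathrm{tr}\bigl(\Lambda_k^{-1}(\Lambda_k-\lambda I)\bigr)\le d.
\]
So the empirical sum is at most $d$, and the RHS is $\tilde O(d^2\log(K|\Phi|/\delta))$.

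The main obstacle is Step 2: the non-adaptedness must be handled carefully, and the union-bound accounting must show that the net costs only $d^2\log(K/\lambda)$. The residual constant factor from Freedman's inequality should be stated honestly.
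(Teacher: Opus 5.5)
Your proposal is correct and follows the paper's proof essentially step for step: an $\epsilon$-net over the inverse Gram matrices (the paper uses the Frobenius norm on $\{M\succeq 0:\|M\|_{\mathrm{op}}\le 1/\lambda\}$, you use the operator norm), a Bernstein/Freedman bound with variance dominated by the mean for each fixed $(\phi,M,k)$, a union bound, $1$-Lipschitzness of clipping, and $\epsilon=1/K$. Your factor $2$ is what that concentration step honestly gives (the paper's ``rearranging'' to coefficient $1$ glosses over the $\sqrt{\log(1/\delta)\sum_n\mu_n}$ term), but you need not concede it: your Step 3 shows the empirical sum is at most $d$, so $2\sum_n\min\{1,\|\phi(x_n)\|_{\Lambda_k^{-1}}^2\}\le\sum_n\min\{1,\|\phi(x_n)\|_{\Lambda_k^{-1}}^2\}+d$, and the extra $d$ is absorbed into $d^2\log(K/\lambda)$, which yields the statement exactly as written (Step 3 also supplies the ``Furthermore'' claim, which the paper asserts without proof).
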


\begin{proof}
Since $\Lambda_k \succeq \lambda I$, we have
$
0 \preceq \Lambda_k^{-1} \preceq \tfrac{1}{\lambda} I.
$
Let $\mathcal M$ be an $\epsilon$-net (under Frobenius norm) of the set
\[
\left\{
M \succeq 0 : \|M\|_{\mathrm{op}} \le \tfrac{1}{\lambda}
\right\}.
\]
Standard covering arguments imply
\[
\log |\mathcal M|
=
O\!\left(d^2 \log \tfrac{1}{\lambda\epsilon}\right).
\]

Fix $\phi \in \Phi$, $M \in \mathcal M$, and $k \le K$.
Define
\[
Y_n
=
\min\!\left\{1,\,
\|\phi(x_n)\|_M^2
\right\},
\qquad
\mu_n
=
\mathbb E_{X_n \sim P_n}[Y_n].
\]
Then $0 \le Y_n \le 1$, and
\[
\mathrm{Var}(Y_n) \le \mu_n.
\]
By Bernstein's inequality for conditionally independent bounded random variables,
with probability at least $1-\delta$,
\[
\sum_{n=1}^k (\mu_n - Y_n)
\le
\sqrt{2 \log(1/\delta) \sum_{n=1}^k \mu_n}
+
\tfrac{2}{3} \log(1/\delta).
\]
Rearranging yields
\[
\sum_{n=1}^k \mu_n
\le
\sum_{n=1}^k Y_n
+
O\!\left(\log \tfrac{1}{\delta}\right).
\]

Applying a union bound over $\phi \in \Phi$, $M \in \mathcal M$, and $k \le K$,
we obtain that with probability at least $1-\delta$,
\[
\sum_{n=1}^k
\mathbb E
\!\left[
\min\{1,\|\phi(X_n)\|_M^2\}
\right]
\le
\sum_{n=1}^k
\min\{1,\|\phi(x_n)\|_M^2\}
+
O\!\left(
\log \tfrac{K|\Phi||\mathcal M|}{\delta}
\right).
\]

By construction of $\mathcal M$, for each $k$ there exists
$M \in \mathcal M$ such that
\[
\|M - \Lambda_k^{-1}\|_F \le \epsilon.
\]
Using $\|\phi(x)\|_2 \le 1$, we have
\[
\bigl|
\|\phi(x)\|_M^2
-
\|\phi(x)\|_{\Lambda_k^{-1}}^2
\bigr|
\le
\epsilon.
\]
Since the function $a \mapsto \min\{1,a\}$ is $1$-Lipschitz,
\[
\bigl|
\min\{1,\|\phi(x)\|_M^2\}
-
\min\{1,\|\phi(x)\|_{\Lambda_k^{-1}}^2\}
\bigr|
\le
\epsilon.
\]
Therefore,
\[
\sum_{n=1}^k
\mathbb E
\!\left[
\min\{1,\|\phi(X_n)\|_{\Lambda_k^{-1}}^2\}
\right]
\le
\sum_{n=1}^k
\min\{1,\|\phi(x_n)\|_{\Lambda_k^{-1}}^2\}
+
O\!\left(
\log \tfrac{K|\Phi||\mathcal M|}{\delta}
+
k\epsilon
\right).
\]

Choosing $\epsilon = 1/K$ ensures $k\epsilon \le 1$ and
\[
\log |\mathcal M|
=
O\!\left(d^2 \log \tfrac{K}{\lambda}\right),
\]
which completes the proof.
\end{proof}

\section{Experiment Details}\label{sec:more experiment results}
We provide more details and training results in this section.
\begin{figure*}[ht]
  \centering

  \begin{subfigure}[t]{0.32\textwidth}
    \centering
    \includegraphics[width=\linewidth]{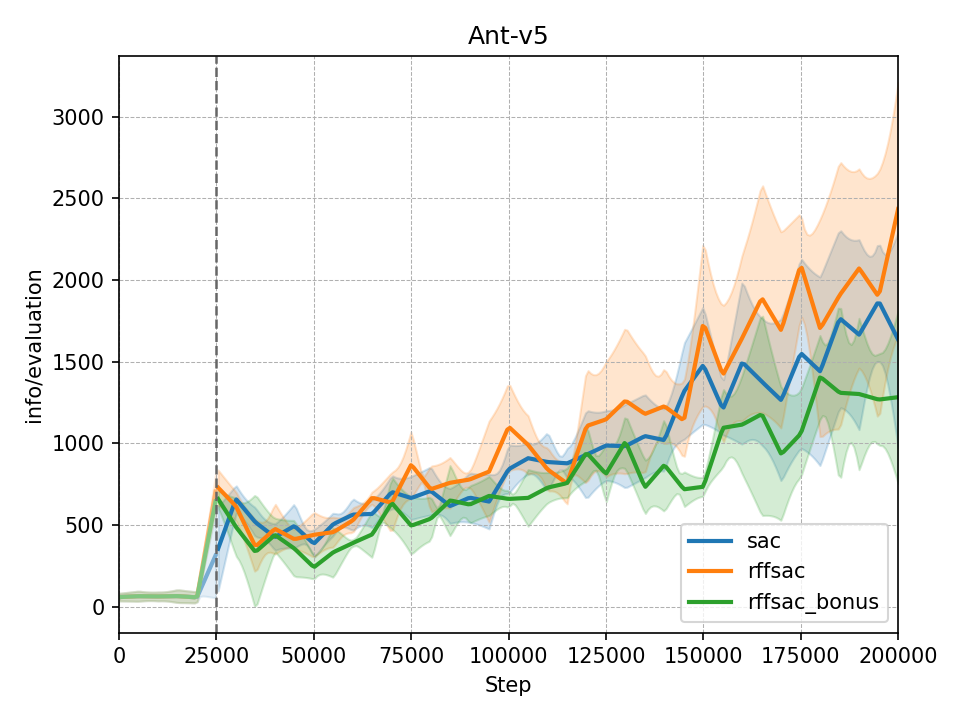}
    \label{fig:ablation_a}
  \end{subfigure}
  \hfill
  \begin{subfigure}[t]{0.32\textwidth}
    \centering
    \includegraphics[width=\linewidth]{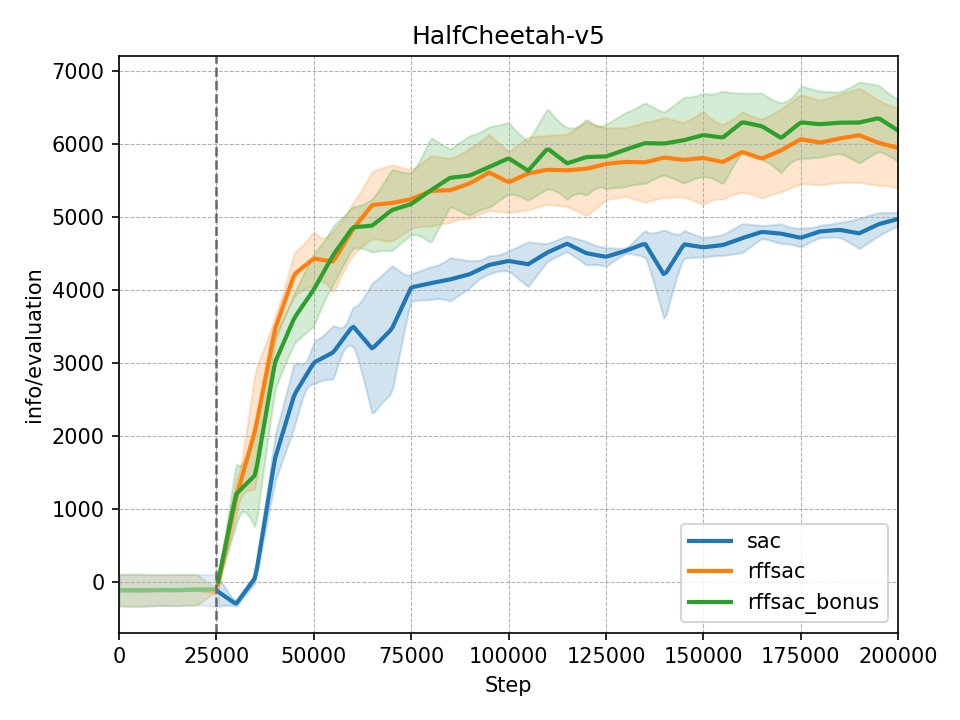}
    \label{fig:HalfCheetah}
  \end{subfigure}
  \hfill
  \begin{subfigure}[t]{0.32\textwidth}
    \centering
    \includegraphics[width=\linewidth]{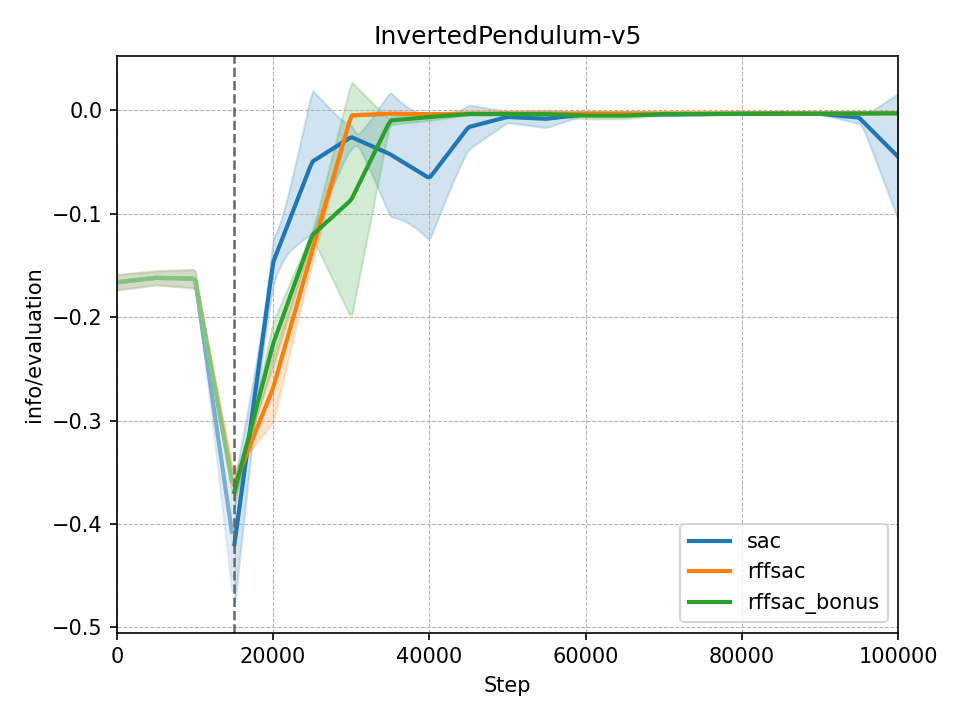}
    \label{fig:ablation_c}
  \end{subfigure}

  \vspace{0.5em}

  \begin{subfigure}[t]{0.32\textwidth}
    \centering
    \includegraphics[width=\linewidth]{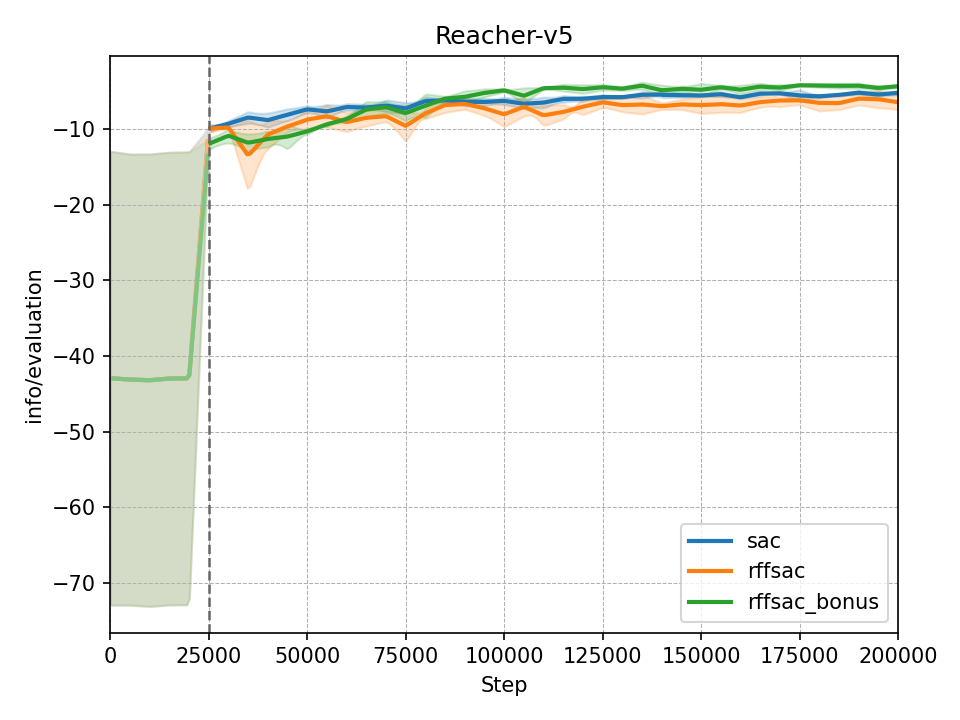}
    \label{fig:ablation_d}
  \end{subfigure}
  \hfill
  \begin{subfigure}[t]{0.32\textwidth}
    \centering
    \includegraphics[width=\linewidth]{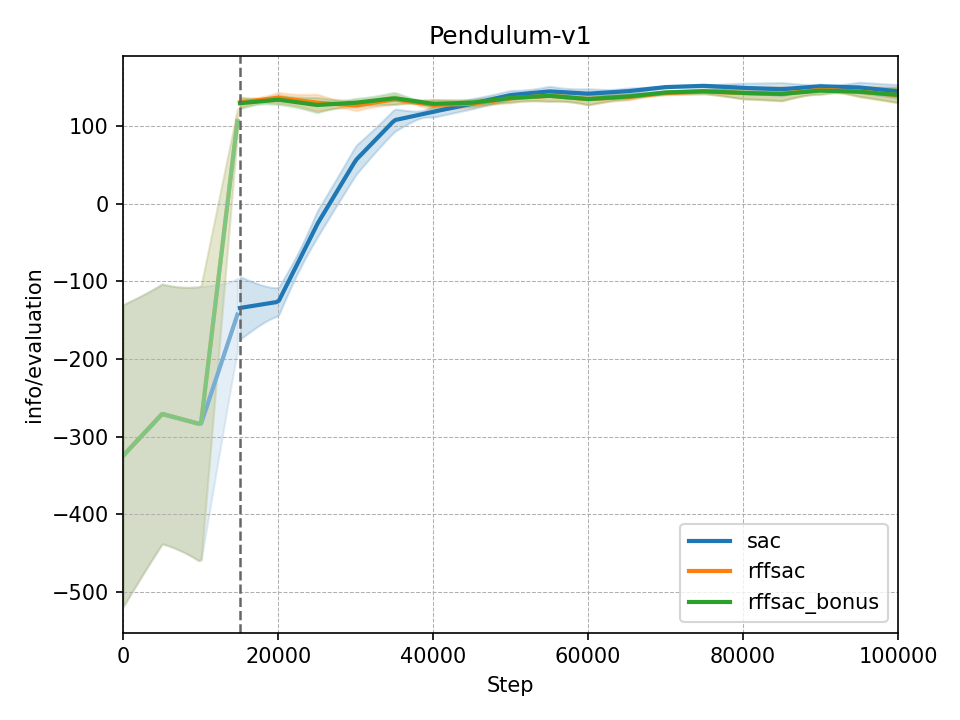}
    \label{fig:ablation_e}
  \end{subfigure}
  \begin{subfigure}[t]{0.32\textwidth}
    \centering
    \includegraphics[width=\linewidth]{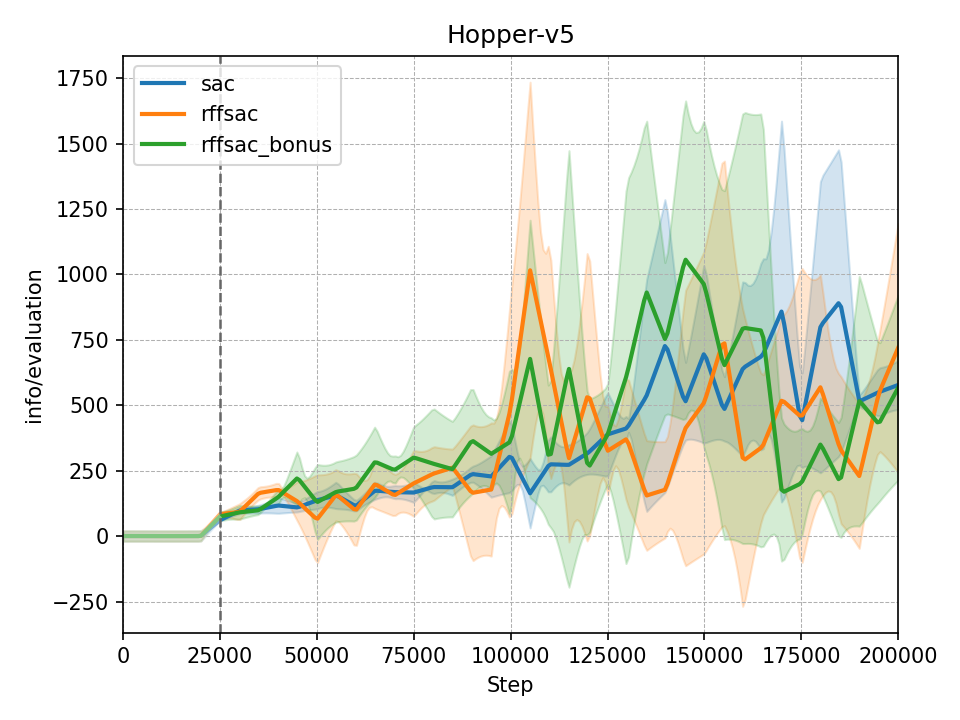}
    \label{fig:ablation_f}
  \end{subfigure}

  \caption{Evaluations over training. Steps before the dash line is uniform random exploration. y-axis is the value of one episode. Results are averaged across 4 different random seeds.}
  \label{fig:five_panel}
\end{figure*}

\end{document}